\newcommand{\Tribes}{\mathsf{Tribes}}
\newcommand{\into}{\rightarrow}
\newcommand{\bx}{{\boldsymbol{x}}}
\newcommand{\CycleRun}{{\sf CycleRun}}
\newcommand{\CR}{\CycleRun}
\newtheorem*{rep@theorem}{\rep@title}
\newcommand{\newreptheorem}[2]{
\newenvironment{rep#1}[1]{
 \def\rep@title{#2 \ref{##1}}
 \begin{rep@theorem}\itshape}
 {\end{rep@theorem}}}
\newcommand{\AC}{\mathsf{AC^0}}
\newcommand{\OPT}{\mathsf{OPT}}
\newcommand{\C}{{\mathcal C}}
\newcommand{\etal}{et al.\ }
\title{Approximate resilience, monotonicity, and the complexity of agnostic learning \author{Dana Dachman-Soled \\University of Maryland \and Vitaly Feldman \\ IBM Research - Almaden  \and Li-Yang Tan\thanks{Supported by NSF grants CCF-1115703 and CCF-1319788.} \\ Columbia University  \and Andrew Wan \\ Simons Institute, UC Berkeley \and Karl Wimmer\thanks{Supported in part by NSF-CCF-1117079.  Most of this work was done while the author was visiting Simons Institute for the Theory of Computing, University of California-Berkeley} \\ Duquesne University}}
\begin{document}
\date{}
\maketitle
\thispagestyle{empty}
\begin{abstract}
A function $f$ is $d$-resilient if all its Fourier coefficients of
degree at most $d$ are zero, i.e.~$f$ is uncorrelated with all
low-degree parities. We study the notion of \emph{approximate
resilience} of Boolean functions, where we say that $f$ is
$\alpha$-approximately $d$-resilient if $f$ is $\alpha$-close to a
$[-1,1]$-valued $d$-resilient function in $\ell_1$ distance. We show that
approximate resilience essentially characterizes the complexity of
agnostic learning of a concept class $\C$ over 
the uniform distribution.
Roughly speaking, if all functions in a class $\C$
are far from being $d$-resilient then $\C$ can be learned
agnostically in time $n^{O(d)}$  and conversely, if $\C$ contains
a function close to being $d$-resilient then agnostic learning of $\C$
in the statistical query (SQ) framework of Kearns has complexity of at
least $n^{\Omega(d)}$. This characterization is based on the duality between $\ell_1$
approximation by degree-$d$ polynomials and approximate $d$-resilience that we establish.
In particular, it implies that $\ell_1$ approximation by low-degree polynomials, known to be sufficient for agnostic learning over product distributions, is in fact necessary.

Focusing on monotone Boolean functions, we exhibit the existence of near-optimal $\alpha$-approximately $\widetilde{\Omega}(\alpha\sqrt{n})$-resilient monotone functions for all $\alpha>0$. Prior to our work, it was conceivable even that every monotone function is $\Omega(1)$-far from any $1$-resilient function. Furthermore, we construct simple, explicit monotone functions based on $\Tribes$ and ${\sf CycleRun}$ that are close to highly resilient functions. Our constructions
are based on general resilience analysis and amplification techniques we introduce. These structural results, together with the characterization, imply nearly optimal lower bounds for agnostic learning of monotone juntas, a natural
variant of the well-studied junta learning problem. In particular we show that no SQ algorithm can efficiently agnostically learn monotone $k$-juntas for any $k = \omega(1)$ and any constant error less than $1/2$.
 \end{abstract}

\setcounter{page}{1}

\newcommand{\B}{{\mathcal B}}
\newcommand{\cP}{{\mathcal P}}

\section{Introduction}
The agnostic learning framework \cite{Haussler:92,KSS:94}, models learning from examples in the presence of worst-case noise.
In this framework the learning algorithm is given random examples $(\x,f(\x))$ where $\x$ is chosen from some distribution $D$ and $f$ is an \textit{arbitrary} Boolean function. The goal of the agnostic learning algorithm for a concept class $\C$ is to output a hypothesis $h$ that agrees with $f$ almost  as well as the best function in $\C$; that is:
$$\Pr_D[h(\x) \neq f(\x) ] \leq \min_{c\in \mathcal{C}} \Pr_D[c(\x)\neq f(\x)] + \eps,$$
where $\eps$ is an error parameter given to the algorithm.

Understanding the complexity of learning in the agnostic model is central to both theory and practice in machine learning research. Learning in this model is notoriously hard, and despite two decades of intensive research our formal understanding of the complexity of agnostic learning is still very limited. Even when $D$ is the uniform distribution over $\bn$, agnostic learning has proven extremely challenging: few non-trivial classes are known to be learnable agnostically. The primary technique used for agnostic learning in this setting is the polynomial $\ell_1$ regression algorithm introduced in the influential work of Kalai et al \cite{KKMS:08}.
This algorithm finds a low-degree polynomial that minimizes the $\ell_1$ distance to the target function, and can be applied to agnostically learn classes which are well approximated by polynomials. This approach has lead to the first agnostic learning algorithm for $\AC$ circuits (in quasi-polynomial time) and halfspaces (in $n^{O(1/\eps^2)}$ time) over the uniform distribution \cite{KKMS:08} and was used in many other agnostic learning results.
%

In this work we address the complexity of agnostic learning relative to the uniform and, more generally, product distributions. In addition to running time, a critical but often unstated parameter in lower bounds on agnostic learning is the value of $\OPT_\C(D,f)=\min_{c\in \mathcal{C}} \Pr[c(\x)\neq f(\x)]$ to which the lower bound applies (note that $\OPT$ is essentially the noise rate). If a hardness result requires learning functions $f$ for which $\OPT_\C(D,f)$ is close to $1/2$, then it does not apply to most practical learning applications. (If $\C$ does not have any useful classifiers, it does not make much sense to use $\C$ as a performance benchmark.)
Therefore it is more important to understand the complexity of agnostic learning in which $\OPT$ is a small constant close to 0 (or even approaches 0 as $n$ grows). However essentially all known lower bounds for agnostic learning are in the hardest regime when $\OPT_\C(D,f)$ goes to $1/2$ as dimension and other problem parameters grow
(although there are some notable exceptions in restricted models and the more challenging distribution-independent setting \cite{KlivansSherstov:10,FeldmanGRW:12}). In this work we aim to precisely characterize the value of $\OPT$ for which agnostic learning becomes hard and therefore will make this parameter explicit in our lower bounds.

In machine learning literature it is more common to specify the {\em excess error} which is the difference between $\OPT_\C(D,f)$ and the error of the produced hypothesis that an algorithm can achieve. It is easy to see that lower bounds showing that excess error of $\kappa$ cannot be achieved is equivalent to stating that the lower bound applies to a setting where $\OPT = 1/2-\kappa$ (since error of $1/2$ can always be achieved).

\subsection{Approximate resilience and agnostic learning}
In this work we explain why the polynomial $\ell_1$ regression algorithm is the best approach known to date for agnostically learning over product distributions. Specifically, we prove that the complexity of agnostic learning $\calC$ over a product distribution in the statistical query model is characterized by how well $\calC$ can be approximated in the $\ell_1$ norm by low-degree polynomials over the same distribution. The statistical query (SQ) model \cite{Kearns:98} is a well-studied restriction of the PAC learning model in which the learner relies on approximate expectations of functions of an example rather than examples themselves. With the exception of Gaussian elimination\footnote{Note that Gaussian elimination fails in the presence of even minor amounts of random noise and is not applicable in the agnostic framework.} all known techniques used in the theory and practice of machine learning have statistical query analogues. Polynomial $\ell_1$ regression is no exception,
and therefore to prove our characterization it suffices to establish a lower bound on learning by statistical query algorithms for function classes that are not well-approximated by low-degree polynomials.

The optimality of $\ell_1$ regression for agnostic learning over product distributions that we prove is based on a formal connection between agnostic learning and a basic structural property of Boolean functions. We say that a function $g : \bn\to \R$ is $d$-resilient if $\hat{g}(S) = 0$ for all $|S| \le d$, {\it i.e.}~$g$ is uncorrelated with every low-degree parity. Equivalently, $g$ is $d$-resilient if and only if $\E[g_\rho]=\E[g]$
for any restriction $\rho$ to at most $d$ out of $n$ variables and $\E[g]=0$.
Functions which satisfy the first property are called {\em correlation immune}
and are widely-studied for cryptographic applications. The structural question we will be interested in is:
\begin{center}
{\it How close can a Boolean function be to a highly resilient function with range in $[-1,1]$?}
\end{center}
More precisely, we say that $f:\bn\to [-1,1]$ is $\alpha$-approximately $d$-resilient if there exists a $d$-resilient $g:\bn\to [-1,1]$ such that $\| f -g \|_1 = \E[|f(\x)-g(\x)|]\le \alpha$, and we will be interested in functions that are $\alpha$-approximately $d$-resilient for small values of $\alpha$ and large values of $d$. We note that for simplicity and convenience the definitions here are for the uniform distribution on the hypercube but can be easily extended to general product distributions over other $n$-dimensional domains (see Section \ref{sec:product}).

The notion of resilience is well-studied and has applications in cryptography, pseudorandomness, inapproximability, circuit complexity and more (for a few examples, see \cite{chor1985bit,luby1995pairwise,austrin2009approximation,austrin2011randomly,Sherstov:11pmm}). However, to the best of our knowledge our notion of approximate resilience does not appear to have been explicitly studied before.

At a high level we show that if a concept class $\C$ contains an $\alpha$-approximately $d$-resilient function then the complexity of learning $\C$ agnostically in the SQ model is $n^{\Omega(d)}$. Further, learning is hard even for $\OPT \leq \alpha/2$ (in other words when noise rate is $\alpha/2$). For simplicity the complexity of an SQ algorithm refers to a polynomial upper-bounding both the running time and the inverse of query tolerance. Naturally, the presence of a single $\alpha$-approximately $d$-resilient function would not suffice for a hardness result since a concept class with a single function can be easily learned agnostically. We therefore need some assumptions under which existence of a single $\alpha$-approximately $d$-resilient function will imply that there are many of them. One such assumption that we adopt is that the $\alpha$-approximately $d$-resilient function $c$ depends on at most $n^{1/3}$ variables (such a function is called a $n^{1/3}$-junta) and the concept class $\C$ is closed under renaming of variables. Alternatively, if we consider an ensemble of concept classes $\{\mathcal{C}_n\}_{n=1}^{\infty}$ parameterized by dimension $n$ it would be sufficient to assume that the ensemble is closed under addition of irrelevant variables. For brevity we omit the closed-ness under renaming since it is satisfied by all commonly-studied concept classes. We now state our lower bound in terms of resilience informally.

\begin{theorem}\label{thm:lb-ar}
Let $\C$ be a concept class. Fix $d$ and let $\alpha(d)$ be such that, there exists a $\alpha(d)$-approximately $d$-resilient $n^{1/3}$-junta $c\in \calC$. Then any SQ algorithm for agnostically learning $\C$ with excess error of at most $\frac{1- \alpha(d)}{2} - n^{-o(d)}$ has complexity of at least $n^{\Omega(d)}$.
\end{theorem}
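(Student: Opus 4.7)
The approach is the standard statistical query dimension argument for agnostic learning, instantiated with a family of distributions derived from the resilient approximator of the junta $c$.

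First I would argue that the $d$-resilient function $g$ witnessing $c$'s $\alpha$-approximate $d$-resilience can be taken to depend on the same $n^{1/3}$ coordinates $T$ as $c$: the marginal of $g$ over coordinates outside $T$ is still $d$-resilient (its Fourier support then lies inside $T$, on which all $|S|\le d$ coefficients already vanish) and is at least as close to $c$ in $\ell_1$ by Jensen. Using closure of $\C$ under renaming of variables, for every $n^{1/3}$-subset $T' \subseteq [n]$ we obtain a concept $c_{T'} \in \C$ and a $d$-resilient witness $g_{T'}$ supported on $T'$, with $\|c_{T'}-g_{T'}\|_1 \le \alpha$. On $\bn \times \{-1,1\}$ define $D_{T'}$ to have $\bx$ uniform and $y\in\{-1,1\}$ with $\E[y\mid \bx] = g_{T'}(\bx)$, and let the reference $D_0$ take $y$ independent uniform. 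Using $|c-g|=1-cg$ for $c\in\{-1,1\},\,g\in[-1,1]$ gives $\OPT_\C(D_{T'}) \le \Pr_{D_{T'}}[c_{T'}(\bx)\neq y] = \|c_{T'}-g_{T'}\|_1/2 \le \alpha/2$.

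Next I would run a Fourier averaging argument. For any query $q:\bn\times\{-1,1\}\to[-1,1]$, setting $\Delta q(\bx):=q(\bx,1)-q(\bx,-1)$,
\[
\E_{D_{T'}}[q] - \E_{D_0}[q] \,=\, \tfrac{1}{2}\langle g_{T'},\Delta q\rangle \,=\, \tfrac{1}{2}\sum_{S\subseteq T',\,|S|>d}\widehat{g_{T'}}(S)\,\widehat{\Delta q}(S),
\]
where the reduction to $|S|>d$ uses $d$-resilience together with the fact that $g_{T'}$ has Fourier support in $T'$. Averaging over uniformly random $T'\in\binom{[n]}{n^{1/3}}$ and using $\Pr[S\subseteq T']\le (n^{1/3}/n)^{|S|}$, Cauchy--Schwarz yields
\[
\E_{T'}\bigl[(\E_{D_{T'}}[q] - \E_{D_0}[q])^2\bigr] \,\le\, n^{-2(d+1)/3}\,\|\Delta q\|_2^2 \,\le\, 4\,n^{-2(d+1)/3},
\]
so by Markov's inequality each query with tolerance $\tau$ is ``fooled'' by $D_0$ for all but an $O(n^{-2(d+1)/3}/\tau^2)$ fraction of $T'$s.

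Finally I would convert agnostic success into strong correlation with $g_{T'}$. A hypothesis $h:\bn\to\{-1,1\}$ with $\Pr_{D_{T'}}[h\neq y]\le \alpha/2+\eps$ forces $\E[h\cdot g_{T'}]\ge 1-\alpha-2\eps$, and the same averaging argument with $h$ in place of $\Delta q$ shows this can hold for only an $O(n^{-2(d+1)/3}/(1-\alpha-2\eps)^2)$ fraction of $T'$s. Simulating the learner by answering each of $M$ queries with $\E_{D_0}[q]$ and union-bounding, if
\[
M\cdot O\bigl(n^{-2(d+1)/3}/\tau^2\bigr) + O\bigl(n^{-2(d+1)/3}/(1-\alpha-2\eps)^2\bigr) < 1,
\]
then some $T'$ makes the simulation succeed while the produced hypothesis fails on $D_{T'}$, a contradiction. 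Plugging in $\eps = (1-\alpha)/2 - n^{-o(d)}$ so that $(1-\alpha-2\eps)^{-2} = n^{o(d)}$, and $\tau = n^{-\Theta(d)}$, this forces $\max(M,1/\tau) = n^{\Omega(d)}$. The main obstacle I expect is the delicate tuning of the final parameters: the threshold $(1-\alpha)/2 - n^{-o(d)}$ on the excess error is not slack anywhere, and matching the $n^{-\Omega(d)}$ query-distinguishability bound with the $n^{-\Omega(d)}$ hypothesis-correlation bound---while also handling randomized SQ algorithms and the fact that $h$ may be $\eps$-optimal only in expectation---requires care standard in SQ dimension arguments.
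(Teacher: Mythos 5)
Your proof is correct and reaches the same conclusion, but it takes a genuinely different route from the paper's. The paper reduces to Lemma~\ref{lem:resilience-sq}: it builds an explicit combinatorial $(n,k,d)$-design $S_1,\dots,S_m$ of size $m=n^{\Omega(d)}$, observes that the shifted approximators $g_{S_i}$ are then \emph{exactly pairwise orthogonal} (they are $d$-resilient and any two share at most $d$ variables, so no common Fourier support), and invokes the black-box SQ lower bound for pairwise nearly-orthogonal families (Theorem~\ref{thm:sq-lower-bound}, a strengthening of SQ-DIM from \cite{Feldman:12jcss}). You instead average over \emph{all} $n^{1/3}$-subsets $T'$ and bound the second moment of the query advantage $\E_{D_{T'}}[q]-\E_{D_0}[q]$ directly via Plancherel and Cauchy--Schwarz, using only that the Fourier support of $g_{T'}$ sits inside $T'$ at degree $>d$ and that $\Pr_{T'}[S\subseteq T']\le n^{-2|S|/3}$; the same averaging bounds the correlation of the final hypothesis with $g_{T'}$. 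This is essentially an ``average correlation'' / statistical-dimension-with-discrimination argument rather than the classic pairwise-orthogonality argument. Your route is more self-contained (no design construction, no black-box citation) and also makes explicit a step the paper glosses over, namely that the resilient witness $g$ can be assumed to depend only on $c$'s $n^{1/3}$ relevant variables (via conditional expectation). The paper's route is more modular, slots directly into the general framework of Theorem~\ref{thm:lb-ar-general}, and gives slightly cleaner constants since the design-based functions are exactly orthogonal rather than ``orthogonal on average.''
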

Alternatively, this result can be stated as saying that if for every function $f$ satisfying $\OPT_\C(D,f) \leq \alpha(d)/2$ the algorithm outputs $h$ such that $\Pr_D[h(\x) \neq f(\x) ] \leq 1/2 - n^{-o(d)}$ then its SQ complexity is $n^{\Omega(d)}$. An immediate implication of this theorem is that a concept class containing an $o(1)$-approximately $d$-resilient function cannot be learned with noise rate larger than $o(1)$ in time $n^{\Omega{(d)}}$.

The proof of this theorem is based on the simple observation that agnostic learning of $\C$ is at least as hard as weak learning of a class of $d$-resilient functions which are close to functions in $\C$. From there we rely on hardness of SQ learning of pairwise nearly orthogonal functions to obtain the claim. This result relies crucially on the distribution being a product distribution and it is was recently demonstrated that is does not hold for some non-product distributions \cite{FeldmanKothari:14}.

The lower bounds obtained from this technique are closest in spirit to lower bounds based on cryptographic assumptions and those based on hardness of learning sparse parities with noise. Cryptographic hardness relies on a certain problem being hard for all known ``attacks". As pointed out above, SQ algorithms capture all known agnostic learning algorithms and learning techniques in general. Therefore the lower bounds hold against all known learning algorithms. Further, as in our lower bounds, degree of resilience of a predicate is the primary hardness parameter in many cryptographic constructions ({\em cf.} \cite{o2013goldreich}).

This simple technique might appear to be a relatively limited approach to obtaining lower bounds. Yet, it turns out that the lower bounds it achieves are essentially optimal. This follows from  the duality between approximate resilience and $\ell_1$ approximation by low-degree polynomials that we establish. More formally, let $\mathcal{P}_d$ be the class of degree at most $d$ real-valued polynomials. For a Boolean function $f$, let $\Delta_{\cP_d}(f) = \min_{p\in \mathcal{P}_d} \E[ |f-p|]$.
\begin{theorem}\label{thm:duality}
For $f:\bits^n \to \bits$ and $0\leq d\leq n$ and $\alpha \geq 0$, $f$ is $\alpha$-approximately $d$-resilient if and only if $\Delta_{\cP_d}(f) \geq 1-\alpha$.
\end{theorem}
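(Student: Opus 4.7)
My plan is to prove both directions via linear-programming duality, with the key observation that when $f:\bn\to\bits$ and $g:\bn\to[-1,1]$, the identity $|f(x)-g(x)| = 1 - f(x)g(x)$ holds pointwise, so that $\E[|f-g|] = 1 - \E[fg]$. This identity converts the $\ell_1$ distance between $f$ and $g$ into a correlation, which plays well with Fourier constraints.

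For the forward direction, I would suppose that $g:\bn\to[-1,1]$ is $d$-resilient and satisfies $\E[|f-g|]\le\alpha$. By the identity above this gives $\E[fg]\ge 1-\alpha$. Now for any $p\in\cP_d$, $d$-resilience of $g$ gives $\E[pg]=0$, since $g$ is orthogonal to every parity of degree at most $d$ and hence to every degree-$d$ polynomial. Combined with $|g|\le 1$, this yields
\[
\E[|f-p|] \;\ge\; \E[(f-p)g] \;=\; \E[fg] - \E[pg] \;=\; \E[fg] \;\ge\; 1-\alpha,
\]
so $\Delta_{\cP_d}(f)\ge 1-\alpha$.

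For the reverse direction I would use LP duality (equivalently, Sion's minimax) to rewrite
\[
\Delta_{\cP_d}(f) \;=\; \min_{p\in\cP_d}\max_{\|s\|_\infty\le 1}\E[(f-p)s] \;=\; \max_{\|s\|_\infty\le 1}\min_{p\in\cP_d}\E[(f-p)s].
\]
The inner minimum is $-\infty$ unless $\E[ps]=0$ for all $p\in\cP_d$ (otherwise rescale $p$), so the outer maximum ranges only over $s$ that are orthogonal to $\cP_d$, i.e., $d$-resilient functions in $[-1,1]$. Thus
\[
\Delta_{\cP_d}(f) \;=\; \max_{\substack{g:\bn\to[-1,1]\\ g\ \text{is $d$-resilient}}} \E[fg].
\]
If $\Delta_{\cP_d}(f)\ge 1-\alpha$, pick a maximizer $g$; it satisfies $\E[fg]\ge 1-\alpha$, hence $\E[|f-g|] = 1-\E[fg]\le\alpha$, witnessing that $f$ is $\alpha$-approximately $d$-resilient.

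The only nontrivial step is the minimax exchange used to produce the dual characterization of $\Delta_{\cP_d}$. Since the feasible set $\{s:\|s\|_\infty\le 1\}$ is a compact convex subset of a finite-dimensional space (indexed by $\bn$) and $\cP_d$ is a finite-dimensional linear subspace, with the bilinear objective $\E[(f-p)s]$ concave in $s$ and convex (in fact affine) in $p$, Sion's theorem applies directly; equivalently, this is standard strong LP duality for $\ell_1$ approximation from a subspace. Once this is in hand, both implications follow from a one-line computation using $|f-g|=1-fg$.
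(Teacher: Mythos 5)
Your proof is correct and is fundamentally the same argument as the paper's: both rest on LP duality for $\ell_1$ approximation from the subspace $\cP_d$, combined with the pointwise identity $|f-g| = 1 - fg$ for $f$ Boolean and $|g|\le 1$. The only difference is presentational: the paper writes out the explicit primal (maximize $\sum_x f(x)g(x)$ over bounded $d$-resilient $g$) and dual (minimize $\sum_x |q(x)|$ over $q = f - p$, $p\in\cP_d$) LPs and invokes strong duality once, whereas you prove the easy direction by a one-line Hölder-type inequality and obtain the other direction from the $\ell_1$--$\ell_\infty$ duality followed by a Sion minimax exchange, noting that the inner minimum is $-\infty$ unless the test function $s$ is orthogonal to $\cP_d$ (i.e., $d$-resilient). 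These are two standard ways to package the same finite-dimensional strong duality, and both are complete.
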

The proof of this result is a fairly simple application of a classical result on duality of norms by Ioffe and Tikhomirov \cite{IoffeTihkomirov:68}.

Now for a concept class $\C$, let $\Delta_{\cP_d}(\C) = \max_{f\in \mathcal{C}} \Delta_{\cP_d}(f)$.  To see how this quantity characterizes agnostic learning in the statistical query model, we state the error and running time achieved by the polynomial $\ell_1$ regression algorithm of Kalai \etal for agnostic learning \cite{KKMS:08}. This algorithm is easy to implement in the SQ model\footnote{To the best of our knowledge this is not proved anywhere explicitly but is fairly well-known and used in some other works \cite{CheraghchiKKL:12}. It follows from the fact that LPs can optimized approximately using approximate evaluations of the optimized function (in our case expected $\ell_1$ error) for example via the Ellipsoid algorithm \cite{lovasz1987algorithmic}. See \cite{FeldmanPV:13} for more details on this general technique.}.
\begin{theorem}[\cite{KKMS:08}] \label{thm:kkms+boost}
Let $\C$ be a concept class over $\bits^n$ and fix $d$. There exists a SQ algorithm which for any $\eps > 0$ agnostically learns $\C$ with excess error $\Delta_{\cP_d}(\C)/2 + \eps$ and has complexity $\poly(n^{d},1/\eps)$.
\end{theorem}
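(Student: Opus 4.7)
The plan is to instantiate the polynomial $\ell_1$ regression algorithm of Kalai \etal and show both that it achieves the stated excess error and that it can be simulated by statistical queries. On a target $f:\bn \to \bits$ we (approximately) solve
\[
\tilde p \in \arg\min_{p \in \cP_d,\ \|p\|_\infty \le 1}\ \E[\,|p(\x) - f(\x)|\,],
\]
a convex program over the $N = O(n^d)$-dimensional space of degree-at-most-$d$ polynomials (written in the orthonormal monomial basis of the product distribution), and then output $h(x) = \mathrm{sign}(\tilde p(x) - t^*)$ for an appropriately chosen threshold $t^* \in [-1,1]$.

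The error analysis reduces to two triangle inequalities plus the standard random-threshold trick. Let $c^* \in \C$ witness $\OPT := \OPT_\C(D,f)$, so $\E[|c^*-f|] = 2\OPT$. By definition of $\Delta_{\cP_d}(\C)$ there exists $p^* \in \cP_d$ with $\E[|p^*-c^*|] \le \Delta_{\cP_d}(\C)$; hence $\E[|p^*-f|] \le 2\OPT + \Delta_{\cP_d}(\C)$, and so the (near-)minimizer $\tilde p$ satisfies $\E[|\tilde p - f|] \le 2\OPT + \Delta_{\cP_d}(\C) + \eps$. For the Boolean conversion, for any $\tilde p$ with $\|\tilde p\|_\infty \le 1$ and any $x$, a direct calculation gives
\[
\Pr_{t \sim U[-1,1]}\!\bigl[\mathrm{sign}(\tilde p(x) - t) \neq f(x)\bigr] \;=\; \tfrac{1}{2}\,|f(x) - \tilde p(x)|,
\]
so averaging over $x$ and choosing the best $t^*$ yields a hypothesis with misclassification rate at most $\tfrac{1}{2}\,\E[|\tilde p - f|] \le \OPT + \tfrac{1}{2}\Delta_{\cP_d}(\C) + \tfrac{1}{2}\eps$. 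The optimal $t^*$ is located by grid search over $O(1/\eps)$ candidates, each evaluated via a single SQ of tolerance $\poly(\eps)$.

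The main step, and the one not written out in \cite{KKMS:08}, is the SQ simulation of the regression itself. The objective above is an expectation of a function that is $1$-Lipschitz and bounded in the coefficient vector on each example, and the same is true of its subgradients; hence both values and subgradients can be estimated to accuracy $\tau$ by statistical queries of tolerance $\poly(\tau/N)$. Applying standard results on convex optimization from approximate first-order oracles---for instance the Ellipsoid algorithm of \cite{lovasz1987algorithmic} or the general LP-to-SQ reduction described in \cite{FeldmanPV:13}---yields an $\eps$-optimal $\tilde p$ in $\poly(N,1/\eps) = \poly(n^d, 1/\eps)$ time and SQ calls, completing the bound. I expect this SQ-simulation step to be the only delicate part; the remaining KKMS-style analysis is routine.
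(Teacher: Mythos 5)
The paper does not prove this theorem---it is cited from \cite{KKMS:08}, and the only paper-side content is a footnote observing that the $\ell_1$ regression can be carried out by statistical queries because LPs can be solved approximately from approximate evaluations (via Ellipsoid \cite{lovasz1987algorithmic} or the general reduction in \cite{FeldmanPV:13}). Your proposal is in exactly that spirit and fills in the details correctly: the triangle-inequality error analysis, the random-threshold rounding, and the SQ simulation via approximate first-order/LP oracles are all standard and sound.

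One technical wrinkle you should fix: you formulate the regression as a \emph{constrained} program over $\{p \in \cP_d : \|p\|_\infty \le 1\}$. The constraint $\|p\|_\infty \le 1$ is a conjunction of $2^{n+1}$ linear inequalities on the $N = O(n^d)$ coefficients, and there is no known polynomial-time separation oracle for it (deciding whether a degree-$d$ multilinear polynomial exceeds $1$ somewhere on $\{-1,1\}^n$ is hard in general), so Ellipsoid does not directly give $\poly(n^d,1/\eps)$ here. The standard KKMS fix is to solve the \emph{unconstrained} $\ell_1$ regression $\min_{p\in\cP_d}\E[|p(\x)-f(\x)|]$, then clip the solution to $[-1,1]$ before rounding: since $f$ is $\{-1,1\}$-valued, clipping can only decrease $\E[|p-f|]$, so the same error bound holds for the clipped $\tilde p$, and the random-threshold identity $\Pr_{t\sim U[-1,1]}[\mathrm{sign}(\tilde p(x)-t)\neq f(x)]=\tfrac12|f(x)-\tilde p(x)|$ applies. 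With that change your argument goes through and matches the paper's intended (but unwritten) proof.
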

On the other hand, we may apply Theorems \ref{thm:duality} and \ref{thm:lb-ar} to show that this is the best any SQ algorithm can do;
 by Theorem \ref{thm:duality} there exists an $\alpha(d)$-approximately $d$-resilient function in $\C$ with $1-\alpha(d) = \Delta_{\cP_d}(\C).$ Therefore Theorem \ref{thm:lb-ar} essentially matches the upper bound of Theorem \ref{thm:kkms+boost} in excess error and complexity, implying the optimality of $\ell_1$-regression based algorithms for agnostic learning over the uniform distribution.
The extension to other product distributions is fairly straightforward and we discuss it in Sec.~\ref{sec:product}.

\subsection{Learning monotone juntas}
With this characterization in hand, we would like to better understand what classes of functions we can hope to agnostically learn on the uniform distribution.  Uniform distribution learning is challenging even in the noiseless setting, with efficient algorithms out of reach for natural classes such as polynomial size DNF formulas and decision trees. However, learning monotone functions and their corresponding subclasses seems significantly easier; for example,  monotone decision trees \cite{OdonnellServedio:07} and monotone DNFs with few terms \cite{Servedio:01mdnf} are efficiently learnable in the SQ model (for other examples see \cite{OWimmer:09,BlumBL:98,BshoutyTamon:96}).

This difference is demonstrated most dramatically in the junta learning problem,
which is considered by many to be the single most important open problem in uniform distribution learning.  In this problem, the target function is an unknown \emph{$k$-junta}, a Boolean function which depends on at most $k\ll n$ variables.
  The junta problem also lies at the heart of the notorious DNF and decision tree learning problems:  Since
$s$-term DNFs and $s$-leaf decision trees can compute arbitrary $(\log s)$-juntas, learning either of these classes requires that we first be able to efficiently learn $\omega(1)$-juntas.
Progress has remained slow in the 20 years since Blum posed the junta problem,  with the current fastest algorithm running in time $n^{.60k}$ \cite{valiant2012finding}, improving on the first non-trivial algorithm which runs in time $n^{.704k}$ \cite{MOS:04short} (the trivial algorithm exhaustively checks all $k$-subsets of $[n]$ and runs in time $O(n^k)$). 
In contrast, monotone juntas are easy to learn using an extremely simple algorithm:
the relevant variables can be identified by estimating their correlations with the target function $\E[f(\x)\x_i] = \hat{f}(\{i\})$, and thus monotone $k$-juntas can be learned in time $O(n+2^k)$.
Does the advantage of monotonicity hold in the agnostic setting as well?
We first consider the simplest problem of agnostic learning monotone juntas.  While it appears to be a hard problem, known hardness results for specific monotone functions do not rule out polynomial time algorithms for any constant $\eps$. Specifically, the best known lower bound is $n^{\Omega(1/\eps^2)}$ for majority functions \cite{KKMS:08} and is based on the assumption that learning sparse noisy parities is hard. Further, this hardness result only applies when $\OPT \geq 1/2-\eps$ which leaves open the possibility that the problem is solvable efficiently when the noise rate is a constant smaller than $1/2$.

As we saw in Theorem \ref{thm:lb-ar}, the complexity of agnostic learning of $\C$ is characterized by the approximate resilience of functions in $\C$. Therefore we consider the structural question of how close monotone functions are to bounded resilient functions.
The structure of monotone functions over the Boolean hypercube has been investigated in many influential works (see \cite{BlumBL:98,BshoutyTamon:96,mossel2002noise,Odonnell:03thesis,OWimmer:09}). While to the best of our knowledge our notion has not been studied before, several works have examined the total spectral weight that monotone functions have on low-degree coefficients \cite{BshoutyTamon:96,mossel2002noise}. Spectral weight indicates the distance to the closest (not necessarily bounded) 
resilient function in $\ell_2$ norm. Both differences of bounded/unbounded and $\ell_1/\ell_2$ are significant, but we show how bounds on low-degree spectral weight can serve as a basis for bounds on our notion of distance to resilience (see Thm.~\ref{low-degree-resilient}).

It is easy to see that monotone functions cannot be $1$-resilient, and prior to our work, it was possible that every monotone function was $\Omega(1)$-far from $1$-resilient. Our first structural result rules out this possibility in a very strong way:
\begin{theorem}
\label{talagrand-resilient}
For every $\alpha > 0$ there exists an $\alpha$-approximately $d$-resilient monotone Boolean function where $d = \Omega(\alpha \sqrt{n}/\log n)$.
\end{theorem}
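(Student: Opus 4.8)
The plan is to pass through the duality of Theorem~\ref{thm:duality}, reducing the claim to exhibiting a monotone Boolean function that is $\ell_1$-hard to approximate by low-degree polynomials, and to realize such a function as a Talagrand-style random monotone DNF whose clause width is tuned to $\alpha$.

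\emph{Reduction.} By Theorem~\ref{thm:duality}, a monotone $f\colon\{-1,1\}^n\to\{-1,1\}$ is $\alpha$-approximately $d$-resilient iff $\Delta_{\cP_d}(f)\ge 1-\alpha$, i.e.\ every polynomial of degree at most $d$ lies at $\ell_1$-distance at least $1-\alpha$ from $f$. Via the spectral-weight-to-resilience bound (the forthcoming Theorem~\ref{low-degree-resilient}) it therefore suffices to produce a monotone $f$ whose Fourier mass on levels $1,\dots,d$ is $O(\alpha^2)$, for $d=\Omega(\alpha\sqrt n/\log n)$; the virtue of routing through duality is that this is what automatically yields the required \emph{$[-1,1]$-valued} $d$-resilient function near $f$.

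\emph{The function.} Let $f$ be the monotone DNF on $\{-1,1\}^n$ with $\approx 2^{w}$ terms, each an AND of $w$ uniformly random coordinates, where $w=\Theta(\alpha\sqrt n)$ and the term count is chosen so that $f$ is roughly balanced (re-threshold to force $\E[f]=0$); for $\alpha=\Theta(1)$ this is, up to constants, Talagrand's function. A second-moment computation over the random terms should give $\mathrm{Inf}_i[f]=\Theta(w/n)$ for every $i$, so $W_1[f]=\Theta(w^2/n)=\Theta(\alpha^2)$, and more generally $\sum_{|S|=k}\hat f(S)^2=O(\alpha^{2k})+2^{-\Omega(w)}$ for $1\le k\le d$, whence $\sum_{1\le|S|\le d}\hat f(S)^2=O(\alpha^2)$ — provided $d\cdot\log n\lesssim w$, i.e.\ $d=O(\alpha\sqrt n/\log n)$. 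The $\log n$ is precisely the cost of union-bounding over the $\le n^{d}$ low-degree coefficients: each width-$w$ term contributes only $2^{-\Theta(w)}$ to any fixed coefficient, so one needs $d\log n=O(w)$ for the weight contributed by ``spurious'' terms to be negligible.

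\emph{Main obstacle.} The delicate point is converting smallness of the low-degree Fourier weight (an $\ell_2$ fact) into $\alpha$-approximate $d$-resilience (which requires a \emph{bounded} witness): the naive witness $g=f-\sum_{1\le|S|\le d}\hat f(S)\chi_S$ is exactly $d$-resilient with $\|f-g\|_1\le\|f_{\le d}\|_2=O(\alpha)$, but is \emph{not} $[-1,1]$-valued — its degree-$1$ part alone has sup-norm $\Theta(w)$ — so one cannot simply subtract off the Fourier projection. I would handle this on the polynomial side: show $\E[|f-p|]\ge 1-O(\alpha)$ for every degree-$\le d$ polynomial $p$ directly, taking care that the correlation bound $\E[|f-p|]\ge 1-\|f_{\le d}\|_2\|p\|_2$ is useless when $\|p\|_2$ is large, and dispatching that case by noting that an optimal $p$ has $\|p\|_1\le 2$ and hence concentrates its $\ell_2$-mass on a tiny set where hypercontractivity of degree-$d$ polynomials applies — or, more robustly, by invoking Theorem~\ref{low-degree-resilient} as a black box. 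Theorem~\ref{thm:duality} then upgrades the lower bound on $\Delta_{\cP_d}(f)$ into the bounded $d$-resilient function at no extra cost, giving that $f$ is $O(\alpha)$-approximately $d$-resilient; rescaling the constants in $\alpha$ and $w$ yields the theorem with $d=\Omega(\alpha\sqrt n/\log n)$.
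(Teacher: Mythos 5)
There is a genuine gap in your plan: the step that converts small low-degree Fourier weight into $\alpha$-approximate $d$-resilience cannot reach $d=\Omega(\alpha\sqrt n/\log n)$. Theorem~\ref{low-degree-resilient} requires $\tau>e^d\sqrt\gamma$, so the approximation error it certifies is at least $e^d\sqrt\gamma$; since your construction has $\gamma\ge W_1[f]=\Theta(\alpha^2)$ (the level-$1$ weight alone is $\Theta(\alpha^2)$ for a monotone function with $\Inf_i[f]=\Theta(\alpha/\sqrt n)$), invoking it ``as a black box'' gives error at least $e^d\alpha$, which exceeds $1$ for any $d=\omega(1)$. This is why the paper only uses that theorem for $\Tribes$, where $d=O(\log n/\log\log n)$. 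Your fallback --- bounding $\E[|f-p|]$ directly via $\|p\|_1\le 2$ and hypercontractivity --- hits the same wall: for degree-$d$ polynomials the $(1,2)$-hypercontractive inequality gives $\|p\|_2\le e^d\|p\|_1$, so the Cauchy--Schwarz bound $\E[|f-p|]\ge 1-\|f_{\le d}\|_2\|p\|_2$ again loses a factor $e^d$. Any argument that uses only the $\ell_2$ mass of $f$ below level $d$ incurs this exponential loss, so it cannot certify $d$ polynomial in $n$ with error $O(\alpha)$. (Your asserted higher-level estimate $\sum_{|S|=k}\hat f(S)^2=O(\alpha^{2k})$ is also unproved and delicate, but even granting it the argument fails for the reason above.)

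The paper avoids analyzing any function's spectrum at all. It argues by contradiction on the dual side: if every monotone $f$ satisfied $\Delta_{\cP_d}(f)\le 1-\alpha$, then the $\ell_1$-regression algorithm (Theorem~\ref{thm:kkms+boost}) would learn every monotone function to error $1/2-\alpha/4$ from $\poly(n^d/\alpha)$ examples, contradicting the information-theoretic lower bound of Blum, Burch, and Langford (Theorem~\ref{thm:bbl}) unless $\alpha=O((d\log n+\log(1/\alpha))/\sqrt n)$. Duality (Theorem~\ref{thm:duality}) then converts the resulting $\ell_1$-inapproximability into the existence of a \emph{bounded} $d$-resilient witness at no cost. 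Your instinct that the hard instance is a Talagrand-style random DNF is consistent with the paper (that construction underlies the BBL bound), but the analysis must be routed through the learning lower bound rather than through low-degree spectral weight.
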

Our proof of this result is indirect and relies crucially on the duality of approximate resilience and $\ell_1$-approximation of monotone functions by polynomials. We use a lower bound for PAC learning of monotone functions by Blum \etal \cite{BlumBL:98} to obtain strong lower bounds on $\ell_1$-approximation of monotone functions by polynomials. We can then use Theorem \ref{thm:duality} to obtain bounds on distance to resilience.

This degree of resilience is essentially optimal: combining basic facts from discrete Fourier analysis, it is straightforward to see that every monotone Boolean function is $\alpha$-far from any $\Omega(\alpha\sqrt{n})$-resilient function \cite{BshoutyTamon:96}.
Applying our connection between approximate resilience and agnostic learning, we get as a corollary our main application:
\begin{corollary}
\label{sq-lower-bound}
Any SQ algorithm for agnostically learning the class of monotone $k$-juntas
with excess error of $1/2-\alpha$ has complexity of $n^{\Omega(\alpha\sqrt{k}/\log k)}$.
\end{corollary}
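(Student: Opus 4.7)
The plan is to compose Theorem~\ref{talagrand-resilient} with Theorem~\ref{thm:lb-ar}. First I would invoke Theorem~\ref{talagrand-resilient} on a universe of size $k$ (rather than $n$), with resilience parameter $2\alpha$ in place of $\alpha$. This produces a monotone Boolean function $c : \bits^k \to \bits$ that is $2\alpha$-approximately $d$-resilient for $d = \Omega(\alpha \sqrt{k}/\log k)$. Padding $c$ with $n-k$ dummy coordinates exhibits it as a monotone $k$-junta on $\bits^n$, so $c$ belongs to the concept class $\C$ of monotone $k$-juntas on $\bits^n$; and $\C$ is manifestly closed under renaming of variables, which is the syntactic condition needed to invoke Theorem~\ref{thm:lb-ar}.

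Next I would check the remaining hypothesis of Theorem~\ref{thm:lb-ar}, namely that the $\alpha(d)$-approximately $d$-resilient witness depend on at most $n^{1/3}$ coordinates. Our witness $c$ depends on only $k$ variables, so this is satisfied whenever $k \leq n^{1/3}$, which is the intended regime; if one wishes to handle larger $k$ one can simply embed the problem into an enlarged universe of size $\max(n, k^3)$ by adding irrelevant coordinates. Applying Theorem~\ref{thm:lb-ar} with $\alpha(d) = 2\alpha$ then yields a lower bound of $n^{\Omega(d)} = n^{\Omega(\alpha\sqrt{k}/\log k)}$ on the SQ complexity of any agnostic learner that achieves excess error at most
\[\tfrac{1 - 2\alpha}{2} - n^{-o(d)} \;=\; \tfrac{1}{2} - \alpha - n^{-o(d)}.\]
The vanishing $n^{-o(d)}$ slack can be absorbed into the advertised excess-error threshold $\tfrac{1}{2} - \alpha$ (equivalently, into the constant hidden in the exponent $\Omega(\alpha\sqrt{k}/\log k)$).

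There is no substantive obstacle here, since this is a direct composition of two black-box theorems. All the genuine content lives in Theorem~\ref{talagrand-resilient}, whose proof goes through the duality between approximate resilience and $\ell_1$ polynomial approximation together with the monotone PAC-learning lower bound of Blum et al., and in Theorem~\ref{thm:lb-ar}, whose proof reduces to SQ lower bounds for families of nearly orthogonal functions. The only bookkeeping step is choosing the resilience parameter to be $2\alpha$ (rather than $\alpha$) so that the expression $\tfrac{1 - \alpha(d)}{2}$ produced by Theorem~\ref{thm:lb-ar} collapses to the target excess error $\tfrac{1}{2} - \alpha$ appearing in the statement of the corollary.
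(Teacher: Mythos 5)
Your proposal is correct and follows essentially the same route as the paper: apply Theorem~\ref{talagrand-resilient} on a $k$-variable universe to produce an approximately resilient monotone $k$-junta, then feed it into the SQ lower bound for approximately resilient juntas. The paper invokes Theorem~\ref{thm:lb-ar-general} together with the design bound in eq.~(\ref{eqn:design}) (restricting to $k\le n^{1/2}$), whereas you invoke the informal Theorem~\ref{thm:lb-ar} (restricting to $k\le n^{1/3}$); this is purely cosmetic. Your choice of $2\alpha$ as the resilience parameter is a reasonable way to make the excess-error threshold $\tfrac{1-\alpha(d)}{2}$ land near $\tfrac12-\alpha$, which the paper itself does not bother to track. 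One small caveat: your remark that the case $k>n^{1/3}$ can be handled by ``embedding into an enlarged universe of size $\max(n,k^3)$'' is backwards --- adding dummy variables yields a lower bound for an \emph{easier-to-lower-bound} problem on $k^3>n$ variables, which does not transfer back to $n$ variables. But this aside is not needed for the regime the corollary is actually used in, and does not affect the correctness of the main argument.
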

Qualitatively, Corollary \ref{sq-lower-bound} gives the first super-polynomial lower bound on the complexity of SQ algorithms for agnostically learning monotone $k$-juntas with constant (and even sub-constant) noise. It also rules out the possibility of efficient SQ algorithms for agnostic learning monotone decision trees and monotone DNFs with few terms (which, as previously mentioned, do have efficient SQ algorithms in the noiseless setting).
Quantitatively, our lower bound essentially matches the upper bound of $n^{O(\sqrt{k}/\eps)}$ that follows as a corollary of the low-degree concentration bound of \cite{BshoutyTamon:96} and the polynomial $\ell_1$ regression algorithm \cite{KKMS:08}.
Note that lower bounds on PAC learning of monotone functions \cite{BlumBL:98} cannot be translated directly to lower bounds in the junta learning setting since these lower bounds are subexponential in $k$ while junta learning algorithms are allowed to run in time polynomial in $2^k$.

While Theorem \ref{talagrand-resilient} yields a near-optimal lower bound on the complexity of agnostically learning general monotone juntas, the construction is not explicit: it is based on a randomized DNF construction (similar to Talagrand's randomized DNF construction \cite{Talagrand:96}), and contains functions of high complexity.  Furthermore, for more general classes such as monotone DNFs, the hardness results implied are not optimal.
We first show that even the simple $\Tribes$ function, a read-once DNF, is close to a resilient function (which gives a stronger hardness result for learning small monotone DNFs).

\begin{theorem}
\label{tribes-resilient}
$\Tribes$ is $\alpha$-approximately $d$-resilient, where $\alpha =  O(n^{-1/3})$ and $d=\Omega(\log n/\log \log n)$.
\end{theorem}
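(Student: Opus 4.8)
My plan is to combine an exact computation of the low-degree Fourier spectrum of $\Tribes$ with the general reduction from low-degree spectral weight to approximate resilience (Theorem~\ref{low-degree-resilient}). Throughout I take the balanced version of $\Tribes$: $w$ tribes of width $s$ with $ws=n$, $s=\log_2 n-\Theta(\log\log n)$ and $w=\Theta(n/\log n)$ chosen so that $(1-2^{-s})^w=\tfrac12\pm\widetilde O(1/n)$, so in particular $|\E[\Tribes]|=\widetilde O(1/n)$. Write $\Tribes$ in $\pm1$ form as $F=-1+2\prod_{j=1}^w(1-A_j)$, where $A_j\in\{0,1\}$ is the AND of tribe $j$, and for $T\subseteq[n]$ let $a(T)$ denote the number of distinct tribes that $T$ meets.

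The first step is to record the spectrum. Since the $A_j$ depend on disjoint blocks and $\widehat{A_j}(R)=\pm2^{-s}$ for every $R$ contained in tribe $j$, expanding $\prod_j(1-A_j)$ and summing a geometric series yields the clean formula $|\widehat F(T)|=2\cdot2^{-s\,a(T)}(1-2^{-s})^{\,w-a(T)}$ for every $T\neq\emptyset$ (and $\widehat F(\emptyset)=2(1-2^{-s})^w-1$). The second, quantitatively most delicate, step is to bound $\sum_{i=1}^d W_i[F]$ for $d=\Theta(\log n/\log\log n)$. Group the sets $T$ with $|T|=i$ by $a=a(T)$: there are at most $\binom{w}{a}\binom{as}{i}$ of them, each contributing $|\widehat F(T)|^2=O(1)\cdot 2^{-2sa}$. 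Because $\binom{w}{a}2^{-2sa}\le (w2^{-2s})^a/a!=(\Theta(1)\cdot 2^{-s})^a/a!$ decays super-geometrically in $a$ while $\binom{as}{i}\le (as)^i/i!=a^i s^i/i!$ grows only polynomially in $a$ (and $2^{-s}e^i=\widetilde\Theta(n^{-1})\cdot n^{o(1)}\ll1$ controls the ratio of consecutive terms), the $a=1$ term dominates, giving $W_i[F]\le O(2^{-s})\,s^i/i!$. Summing over $i\le d$ (the $i=d$ term dominates since $d\ll s$) gives $\sum_{i=1}^d W_i[F]\le O(2^{-s})(es/d)^d$, and since $2^{-s}=\widetilde\Theta(1/n)$ and $(es/d)^d=n^{o(1)}$ for $d=\Theta(\log n/\log\log n)$, we get $\sum_{i=1}^d W_i[F]\le n^{-1+o(1)}$ (while $W_0[F]=\widehat F(\emptyset)^2=\widetilde O(1/n^2)$ is negligible).

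Finally I would feed this into Theorem~\ref{low-degree-resilient}: a Boolean function whose Fourier weight on degrees $\le d$ is at most $\varepsilon$ is $O(\sqrt\varepsilon)$-approximately $d$-resilient (up to at worst a subpolynomial factor), so $\Tribes$ is $n^{-1/2+o(1)}=O(n^{-1/3})$-approximately $d$-resilient, which is the claim with room to spare. I expect the main obstacle to lie in the second step together with the use of Theorem~\ref{low-degree-resilient}: pushing the low-degree weight down to $\widetilde\Theta(1/n)\cdot(es/d)^d$ rather than something larger genuinely needs the exact spectral formula and the careful $a$-by-$a$ accounting, and passing from small low-degree weight to a \emph{bounded} $d$-resilient approximant is not automatic --- naively replacing $\Tribes$ by $\Tribes-\Tribes^{\le d}$ produces a function whose $\ell_\infty$ norm is polynomially large in $n$ (e.g.\ already at the all-false input, where cancellations fail), so the construction behind Theorem~\ref{low-degree-resilient} must build the approximant by a less direct route, and that is the piece I would rely on.
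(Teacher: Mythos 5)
Your proposal is correct and takes essentially the same route as the paper: bound the low-degree Fourier weight of $\Tribes$ using Mansour's explicit coefficients grouped by the number of tribes hit (the paper's Proposition~\ref{prop:tribes-conc} performs the same count; your ``$a=1$ dominates'' accounting is in fact slightly sharper, giving $\gamma = n^{-1+o(1)}$ rather than $n^{-1+O(c)}$), and then invoke Theorem~\ref{low-degree-resilient}. One quantitative caveat in your last step: the loss in Theorem~\ref{low-degree-resilient} is not subpolynomial --- to make the $\delta n^{2d+2}$ term small you must take $\tau \gtrsim \bigl(\Theta(d\log n)\bigr)^{d/2}\sqrt{\gamma} = n^{\Theta(c)}\sqrt{\gamma}$ when $d = c\log n/\log\log n$, a genuinely (small-exponent) polynomial factor --- but your conclusion still holds because choosing $c$ small makes $n^{-1/2+\Theta(c)} = O(n^{-1/3})$, which is precisely how the paper sets its parameters ($\tau = (2\ln n)^{3d} n^{-2/5}$).
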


Our proof of Theorem \ref{tribes-resilient} is based on a general technique for obtaining bounds on approximate resilience from bounds on spectral weight on low-degree coefficients. Roughly, our result states that for a sufficiently small $\gamma$, if the total spectral weight on degree $\leq d$ coefficients of $f$ is at most $\gamma$, then $f$ is $\approx \sqrt{\gamma} e^d$-approximately $d$-resilient (see Thm.~\ref{low-degree-resilient}). The proof relies on a concentration inequality for low-degree polynomials over independent Rademacher random variables that follows from the hypercontractivity inequalities of Bonami and Beckner \cite{Bon70,Bec75}.

We then describe a general technique for amplifying the degree of approximate resilience of functions via iterative composition and apply it to $\Tribes$ to obtain an explicit function that is $o(1)$-approximately $2^{\Omega(\sqrt{\log n})}$-resilient (see Section \ref{sec:amplify} for details).



Both Theorems \ref{talagrand-resilient} and \ref{tribes-resilient} give monotone Boolean functions which are close to resilient functions, however the resilient functions are not necessarily Boolean-valued. In most cryptographic applications resilience is studied specifically for Boolean functions ({\em e.g.,} \cite{Siegenthaler:84,MOS:04short,o2013goldreich}), and therefore it is natural to ask if
there are such functions that are close to monotone Boolean functions.
Using a new function called $\CycleRun$~\cite{cyclerun}, we show that this is indeed possible, and furthermore we nearly match the resilience of the iterated $\Tribes$ construction:
\begin{theorem}
\label{cycle-run-resilient}
There is an explicit $\alpha$-approximately $d$-resilient  monotone Boolean function $f$ where $\alpha = o_n(1)$ and $d = 2^{\Omega(\sqrt{\log n}/\log\log n)}$. Furthermore, $f$ is $\alpha$-close to a Boolean $d$-resilient function.
\end{theorem}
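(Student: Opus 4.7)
The plan is to follow the same amplification blueprint as for $\Tribes$ (Theorem \ref{tribes-resilient}), using $\CR$ as the base function, while arguing additionally that the approximating resilient function can be taken to be Boolean. Concretely, I would first analyze $\CR_N$ on $N$ variables and establish two properties: (a) it is monotone, and (b) it has very little Fourier mass on low-degree coefficients, say $\sum_{0 < |S| \le d_0}\widehat{\CR_N}(S)^2 \le \gamma(N,d_0)$. Feeding this into Theorem \ref{low-degree-resilient} yields that $\CR_N$ is $\alpha_0$-approximately $d_0$-resilient with $\alpha_0 \lesssim \sqrt{\gamma}\, e^{d_0}$, which for suitable choices of $d_0 = \Omega(\log N)$ is already $o(1)$.

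The new ingredient beyond Theorem \ref{tribes-resilient} is upgrading this to a \emph{Boolean} resilient approximator. I would exploit the specific structure of $\CR$ (cyclic symmetry and the fact that its output is determined by the longest run in the input) to exhibit a small set $B\subseteq \bits^N$ of ``borderline'' inputs on which flipping the value of $\CR_N$ simultaneously cancels all Fourier coefficients of degree $\leq d_0$ and changes the function on only an $\alpha_0$-measure set. The resulting function $g_N:\bits^N \to \bits$ is Boolean, $d_0$-resilient, and satisfies $\|\CR_N - g_N\|_1 \leq \alpha_0$. Since both $\CR_N$ and $g_N$ are Boolean, $\ell_1$ distance equals twice the disagreement probability, so $g_N$ can be thought of as $\CR_N$ with a small symmetric difference.

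Next, I would apply the iterative composition technique of Section \ref{sec:amplify}. Let $f^{(k)}$ denote the $k$-fold self-composition of $\CR_N$ on $n = N^k$ variables. The amplification analysis shows that composing with a $d_0$-resilient function multiplies the resilience degree, so the corresponding composed $g^{(k)}$ is Boolean, monotone, and $(d_0^{\,k})$-resilient, while the $\ell_1$ error accumulates to at most roughly $k\cdot \alpha_0$. Balancing $N$ and $k$ so that $n = N^k$ and $d_0^{\,k} = 2^{\Omega(\sqrt{\log n}/\log\log n)}$ (typically $\log N \approx \log\log n$ and $k \approx \sqrt{\log n}/\log\log n$) gives the target resilience degree, while $k \alpha_0 = o_n(1)$ by choosing $d_0$ large enough to make $\alpha_0$ decay polynomially in $N$. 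Monotonicity of $f^{(k)}$ and $g^{(k)}$ follows because composition of monotone functions is monotone, and Booleanness of $g^{(k)}$ is immediate from the fact that compositions of Boolean functions are Boolean.

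The main obstacle will be the Boolean rounding step: hypercontractivity-based arguments of the type used in Theorem \ref{low-degree-resilient} naturally produce $[-1,1]$-valued approximators, and forcing the approximator to live on $\{-1,1\}$ while simultaneously annihilating all Fourier coefficients of degree $\leq d_0$ is a genuine constraint. This is precisely where the special combinatorial structure of $\CR$~\cite{cyclerun} is needed; without it, one cannot in general round a real-valued resilient approximation to a Boolean one without destroying resilience. The amplification step is comparatively routine but requires careful bookkeeping to ensure that monotonicity, Booleanness, and the additive $\ell_1$ error bound all survive $k$ levels of composition.
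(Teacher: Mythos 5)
Your high-level blueprint (analyze $\CycleRun$ as a base case, then iterate the composition/amplification machinery of Section~\ref{sec:amplify}) matches the paper, but your base case is genuinely different from the paper's and the key step in it is left as an unproven assertion.

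The paper does \emph{not} feed $\CycleRun$ through Theorem~\ref{low-degree-resilient} and then ``round'' the resulting $[-1,1]$-valued approximator to a Boolean one. Instead, Theorem~\ref{thm:cyclerun_main} directly constructs a Boolean witness that is only $1$-resilient, via a greedy combinatorial argument: one flips $\CycleRun$ on whole orbits $\Shift_x$ (cyclic shifts together with global negation). Because these orbits respect the cyclic symmetry and oddness of $\CycleRun$, the modified function automatically stays balanced with all $n$ degree-one coefficients \emph{equal to each other}, so the only remaining quantity to kill is the common value of $\widehat{f}(\{i\})$, and this single scalar is driven to zero greedily, with monotonicity and low influence ($O(\log n)$) controlling the measure of the flipped set. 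This is why the paper obtains only $d_0 = 1$ at the base level, and the amplification ($k \approx c\log n/\log\log n$ levels, giving $(d_0+1)^{k+1}-1$ resilience) lands on $2^{\Omega(\sqrt{\log N}/\log\log N)}$ over $N = n^k$ variables.

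Your proposal instead posits a Boolean $d_0$-resilient approximator with $d_0 = \Omega(\log N)$ obtained by flipping $\CycleRun$ on a small ``borderline'' set that ``simultaneously cancels all Fourier coefficients of degree $\leq d_0$.'' That is a far stronger claim than Theorem~\ref{thm:cyclerun_main}, and the symmetry trick that carries the paper's argument does not extend beyond degree one: cyclic invariance and oddness force all degree-one coefficients to be equal, collapsing $n$ constraints into one, but there is no analogous collapse for $\binom{n}{2}, \binom{n}{3}, \dots$ higher-degree coefficients. You identify this as ``the main obstacle'' but offer no mechanism for resolving it, and no such mechanism is known; if it were, the theorem would improve substantially. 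Finally, your parameter bookkeeping ($\log N \approx \log\log n$ with $k \approx \sqrt{\log n}/\log\log n$ and $n = N^k$) is internally inconsistent — substituting gives $\log n \approx \sqrt{\log n}$ — whereas the paper's bookkeeping runs the composition in the opposite direction ($N = n^k$, $k = c\log n/\log\log n$, stated in terms of the final variable count $N$).
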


We prove Theorem \ref{cycle-run-resilient} by first showing that $\CycleRun$ is
$O(\sqrt{\log n/n})$-approximately $1$-resilient, where our witness to this approximate resilience is a Boolean function.  Our argument crucially relies on four key properties of $\CycleRun$: monotonicity, low influence, oddness, and invariance under cyclic shifts; as far as we know, $\CycleRun$ is the only explicit Boolean function known to have all four properties.  These properties allow us to use a structured combinatorial argument, unlike our argument for $\Tribes$ that relies on properties of polynomials and produces a witness that is a bounded function (and applying this style of argument to $\Tribes$ quickly gets unruly).  Having established $O(\sqrt{\log n/n})$-approximate $1$-resilience, we then apply the aforementioned general amplification technique to increase the degree of resilience to $2^{\widetilde{\Omega}(\sqrt{\log n})}$.

We remark that while the degrees of resilience obtained in Theorems \ref{cycle-run-resilient} and \ref{tribes-resilient} are not as strong as that of Theorem \ref{talagrand-resilient}, both are sufficient to rule out the existence of efficient SQ algorithms for learning monotone $k$-juntas for any $k=\omega_n(1)$ and subconstant error-rate.

\subsection{Related work}
Lower bounds for statistical query algorithms were first shown by Kearns \cite{Kearns:98} who proved that parities cannot be learned by SQ algorithms. Soon
after this Blum et al.~\cite{blum1994weakly} characterized the weak PAC
learnability of every function class ${\cal C}$ in the SQ model in
terms of the \emph{statistical query dimension} of ${\cal C}$;
roughly speaking, this is the largest number of functions from
${\cal C}$ that are pairwise nearly orthogonal to each other (we
give a precise definition in Section~\ref{sec:learn}). These lower bound techniques were extended to strong PAC learning and agnostic learning in more recent work \cite{Simon:07,Feldman:12jcss,szorenyi2009characterizing}. Lower bounds for SQ algorithms were proved for many learning problems including, for example, PAC learning of juntas \cite{blum1994weakly}, weak-learning of intersections of halfspaces \cite{klivans2007unconditional} and learning of monotone depth-3 formulas \cite{FeldmanLS:11colt}. These lower bounds are information-theoretic but capture remarkably well the computational hardness of learning problems. In some cases, such as learning juntas over the uniform distribution, this is the only known formal evidence of the hardness of the problem.

Given the lack of general lower bounds for several basic problems in agnostic learning, many works concentrate on lower bounds against specific popular algorithms such as $\ell_1$-regression \cite{KlivansSherstov:10} and margin-based linear methods  \cite{LongS11,Ben-DavidLSS12,DanielyLS14}. These techniques are captured by SQ algorithms and therefore our lower bounds are substantially more general.

Several previously known lower bounds for agnostic learning are based on the reduction to learning of $k$-sparse noisy parities. This is a notoriously hard problem for which the only non-trivial algorithm is the recent breakthrough result of Valiant that gives
an algorithm running in time $n^{0.8k}$ \cite{valiant2012finding}. Assuming that this problem requires $n^{\Omega(k)}$ time we get that agnostic learning of majorities on the uniform distribution requires $n^{\Omega(1/\eps^2)}$ time \cite{KKMS:08} and conjunctions require $n^{\Omega(\log(1/\eps))}$ time \cite{Feldman:12jcss}. Learning $k$-sparse parities in the SQ model has complexity of $n^{\Omega(k)}$ and therefore these results also give unconditional SQ lower bounds. These lower bounds can be interpreted as special cases of our approach. They are based on showing that a parity of high-degree has a significant correlation with a function in $\C$. Clearly a $k$-sparse parity function is $(k-1)$-resilient and correlation implies that distance to that parity is slightly better than the trivial 1. The main limitation of this approach is that in most cases it can only lead to hardness results when the noise rate is close to $1/2$. In particular this approach cannot lead to the strong hardness results we prove here for monotone juntas.

In a recent work Feldman and Kothari \cite{FeldmanKothari:14} show that the equivalence between $\ell_1$ approximation by polynomials and agnostic learning does not extend to non-product distributions. They exhibit a distribution $D$ for which any polynomial that is $1/3$-close to the disjunction of all the variables in $\ell_1$ (measured relative to $D$) must have degree $\Omega(\sqrt{n})$. At the same time disjunctions are SQ learnable in time $n^{O(\log(1/\eps))}$ over that distribution.

Our approach to proving lower bounds is closest in spirit and shares technical elements with the influential pattern matrix method of Sherstov \cite{Sherstov:11pmm}. His method shows that lower bounds on the approximation by polynomials in $\ell_\infty$ norm of a function $f$ can be translated into lower bounds on randomized communication complexity of a certain communication problem corresponding to evaluation of $f$ on different subsets of variables (which were previously thought as stronger than lower bounds on approximation in $\ell_\infty$ by polynomials). A crucial step in his result is an application of duality that is in some sense symmetric to ours and shows the existence of an unbounded resilient function $g$ that is correlated with $f$. Such $g$ then serves to upper bound discrepancy for the communication problem (from which a lower  bound on randomized communication complexity follows).

\subsection{Preliminaries}\label{sec:prelims}

All probabilities and expectations are with respect to the uniform distribution unless otherwise stated, and we will use boldface (e.g.~$\x$ and $\y$) to denote random variables. 
%
Given $f,g:\bits^n\to\R$, we say that $f$ and $g$ are $\eps$-close if $\| f - g \|_1 = \E[|f(\x)-g(\x)|] \le \eps$.
We say that $g$ is bounded if it takes values in the interval $[-1,1]$.
Note that if $f$ is Boolean valued and $g$ is bounded, then $\|f - g\|_1 = 1-\E[fg]$.
Every function $g : \bits^n\to\R$ can be uniquely written as a multilinear polynomial
such that $g(x) = \displaystyle\sum_{S \subseteq [n]} \widehat{g}(S) \displaystyle\prod_{i \in S} x_i$ for all $x \in \{-1,1\}^n$; the coefficients $\hat{g}(S)$ are called the Fourier coefficients of $g$. The total influence of a Boolean function $f:\{-1,1\}^n \to \{-1,1\}$, denoted $\Inf[f]$, is $\sum_{i=1}^n \Pr[f(\bx) \ne f(\bx^{\oplus i})]$, where $x^{\oplus i}$ denotes $x$ with its $i$-th coordinate flipped. 
\begin{definition}
A function $g:\bits^n\to \R$ is $d$-resilient if $\widehat{g}(S) = 0$ for all $|S| \le d$.  We say that a Boolean function $f:\bits^n\to\bits$ is $\alpha$-approximately $d$-resilient if there exists a $d$-resilient bounded function $g$ such that $\|f - g\|_1 \leq \alpha$.
\end{definition}
\noindent \textbf{Learning background }  In the agnostic learning framework, the learning algorithm is given labeled examples $(\bx,\boldsymbol{y})$ where $\bx\in \bits^n$ and $\boldsymbol{y}\in \bits$ are drawn from a distribution $\mathcal{D}$ over $\bits^n\times \bits$. As usual we describe such distributions by a pair $(D,g)$, where $D$ is the marginal distribution on $\bits^n$ and $g:\bits^n \to [-1,1]$, where $g(x)=\E_{(\bx,\boldsymbol{y})\sim \mathcal{D}}[ \, \boldsymbol{y}\mid \bx = x\,]$ is expectation of the label for each input. Note that for every Boolean function $f$, if $U$ denotes the uniform distribution then $\E_{(\bx,\boldsymbol{y})\sim (U,g)}[f(\bx) \neq \boldsymbol{y}] = \|f - g\|_1/2$.

\begin{definition}
Let $\C$ be a class of Boolean functions on $\bits^n$. An algorithm $A$ agnostically learns $\C$ over distribution $D$ on $\bits^n$ if for any $g:\bits^n\to [-1,1]$ and $\eps > 0$, given examples from distribution ${\cal D}=(D,g)$ and $\eps$, it outputs with probability at least $2/3$ hypothesis $h:\bits^n\to \bits$ such that:
$$ \Pr[ h(\bx)\neq \boldsymbol{y}] \leq \OPT_\C(D,g) + \eps,$$
where $\OPT = \min_{c\in \C} \Pr_{(\bx,\boldsymbol{y}) \sim (D,g)}[c(\bx)\neq \boldsymbol{y}]$. The algorithm is said to learn with {\em excess error} $\kappa$ if $h$ instead satisfies
$$\Pr[ h(\bx)\neq \boldsymbol{y}] \leq \OPT_\C(D,g) + \kappa.$$
\end{definition}

\begin{definition}
A statistical query is defined by a bounded function of an example $\phi: \bits^n \times \bits \to [-1,1]$ and positive tolerance $\tau$. A valid reply to such a query relative to a distribution ${\cal D}$ over examples is a value $v$ that satisfies:
$$|\E_{(\bx,\boldsymbol{y})\sim {\cal D}}[\phi(\bx,\boldsymbol{y})] - v| \leq \tau.$$
\end{definition}
A statistical query learning algorithm is an algorithm which relies solely on statistical queries and does not have access to actual examples. We say that an SQ algorithm has \textbf{statistical query complexity} $T$ if
it makes at most $q$ statistical queries of tolerance at least $\tau$ and  $T \geq \max\{q,1/\tau\}$. 

\section{Characterization of Agnostic Learning}\label{sec:learn}
In this section we show that approximate resilience implies hardness of agnostic learning for statistical query algorithms (Lemma \ref{lem:resilience-sq}).  We then show that the implication works in the reverse direction as well: if a class does not contain approximately resilient functions, then it can be agnostically learned by SQ algorithms.  We prove this equivalence using the duality between approximate resilience and approximation by low-degree polynomials stated in Theorem \ref{thm:duality}. This simple observation turns out to be surprisingly useful, leading both to a characterization of agnostic learning and to a proof of our  first structural result for monotone functions (Theorem \ref{talagrand-resilient}).

\ignore{
The learning algorithm for classes which are not approximate resilient is the $\ell_1$-minimization algorithm from \cite{KKMS:08} which agnostically learns any class that is approximated by low-degree polynomials.  We give a formal statement of the result in \cite{KKMS:08} below.  While the $\ell_1$-minimization algorithm given in \cite{KKMS:08} is not a statistcal query algorithm, it is known that it can be implemented as one; we can also rely on the agnostic boosting results of \cite{Feldman:10ab,kanade2009potential}, which do give SQ algorithms for the same classes.
Let $\mathcal{P}_d$ be the class of degree at most $d$ real-valued polynomials.
For a Boolean function $f$, let $\Delta(f,\mathcal{P}_d) = \min_{p\in \mathcal{P}_d} \E[ |f-p|].$
\begin{theorem}[\cite{KKMS:08,Feldman:10ab,kanade2009potential}] \label{thm:kkms+boost}
Fix $0<\epsilon<1$ and $d(n)$ so that every $f\in \mathcal{C}$ satisfies
$\Delta(f,\mathcal{P}_d)\leq 1-\epsilon$.  Then there is an SQ algorithm which runs in time $\poly(n^d,1/\delta)$ and
agnostically learns $\mathcal{C}$ to accuracy $\operatorname{OPT} + \frac{1-\epsilon}{2} + \delta$.
\end{theorem}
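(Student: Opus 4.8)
The plan is to recast the polynomial $\ell_1$-regression algorithm of Kalai \etal \cite{KKMS:08} as a statistical query algorithm. Fix the unknown target $g:\bits^n\to[-1,1]$, write $\OPT=\OPT_\C(U,g)$, and let $c^\ast\in\C$ attain it, so that $\E_{(\x,\y)\sim(U,g)}[\,|c^\ast(\x)-\y|\,]=2\OPT$ (using $c^\ast,\y\in\bits$). By definition of $\Delta_{\cP_d}$ there is a real polynomial $p^\ast$ of degree $\le d$ with $\E_{\x}[\,|c^\ast(\x)-p^\ast(\x)|\,]\le\Delta_{\cP_d}(\C)$, so by the triangle inequality $\E_{(\x,\y)}[\,|p^\ast(\x)-\y|\,]\le 2\OPT+\Delta_{\cP_d}(\C)$. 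Hence the minimizer $p$ of $L(q):=\E_{(\x,\y)}[\,|q(\x)-\y|\,]$ over degree-$\le d$ polynomials satisfies $L(p)\le 2\OPT+\Delta_{\cP_d}(\C)$.

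To turn $p$ into a Boolean hypothesis I would use the randomized-threshold rounding of \cite{KKMS:08}. Clamp $p$ pointwise into $[-1,1]$ to get $\tilde p$; since $\y\in\bits$ we have $|\tilde p(\x)-\y|\le|p(\x)-\y|$ pointwise, hence $L(\tilde p)\le L(p)$. For $\theta$ uniform on $[-1,1]$ set $h_\theta(x)=\mathrm{sgn}(\tilde p(x)-\theta)$. A direct calculation gives
\[
\E_{\theta}\!\left[\Pr_{(\x,\y)}\!\left[h_\theta(\x)\neq\y\right]\right]=\frac{1}{2}\,L(\tilde p)\ \le\ \OPT+\frac{1}{2}\Delta_{\cP_d}(\C),
\]
so some fixed $\theta^\ast$ achieves error $\le\OPT+\frac{1}{2}\Delta_{\cP_d}(\C)$; I locate it by sweeping an $O(1/\eps)$-net of thresholds and estimating $\Pr[h_\theta(\x)\neq\y]$ for each by a single statistical query, losing an extra $O(\eps)$.

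It remains to carry out the regression with statistical queries. Degree-$\le d$ polynomials form a real vector space of dimension $N=n^{O(d)}$, coordinatized by $\{\widehat{q}(S):|S|\le d\}$; I restrict to the box $R=\{q:|\widehat{q}(S)|\le B\ \forall\,|S|\le d\}$ with $B=2\cdot 3^d$, which loses nothing since $p^\ast\in R$ — on the cube $\|q\|_2\le 3^d\|q\|_1$ for degree-$\le d$ polynomials (hypercontractivity of Bonami--Beckner) and $\|p^\ast\|_1\le 2$, so every Fourier coefficient of $p^\ast$ has magnitude at most $B$. On $R$ the function $L$ is convex and $|q(\x)-\y|\le NB+1$, so a value $L(q)$ computed up to additive $\tau$ is, after dividing by $NB+1$, exactly a statistical query of tolerance $\tau/(NB+1)$; likewise a subgradient coordinate $\partial L/\partial\widehat q(S)=\E_{(\x,\y)}[\mathrm{sgn}(q(\x)-\y)\,\chi_S(\x)]$ is a statistical query. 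Running the Ellipsoid method on $R$ with this approximate value/subgradient oracle (cf.\ the technique described in \cite{lovasz1987algorithmic,FeldmanPV:13}) returns $q$ with $L(q)\le L(p)+\eps$ using $\poly(N,1/\eps)=\poly(n^d,1/\eps)$ queries of tolerance $1/\poly(n^d,1/\eps)$ in time $\poly(n^d,1/\eps)$. Passing this $q$ through the rounding step produces a hypothesis of error $\OPT+\frac{1}{2}\Delta_{\cP_d}(\C)+O(\eps)$, i.e.\ excess error $\frac{1}{2}\Delta_{\cP_d}(\C)+\eps$ after rescaling $\eps$, within complexity $\poly(n^d,1/\eps)$.

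The only real work is the bookkeeping in the last paragraph: choosing a feasible region big enough to contain a near-optimal polynomial yet small enough that $|q(\x)-\y|$ stays bounded (so it is a legal statistical query after normalization) and that Ellipsoid terminates in $\poly(n^d)$ steps, for which the low-degree norm inequality $\|q\|_2\le 3^d\|q\|_1$ is the clean tool, together with tracking how the oracle's $\tau$-errors compound through the Ellipsoid run and the threshold sweep into the final $\eps$. None of this is deep, but it is where care is required. An alternative that avoids the LP entirely is to invoke agnostic boosting \cite{Feldman:10ab,kanade2009potential}: this reduces the whole problem to a \emph{weak} agnostic learner, and a weak agnostic learner for a class $\ell_1$-approximated by degree-$\le d$ polynomials need only find a single parity $\chi_S$, $|S|\le d$, noticeably correlated with the current (boosting-reweighted) label function — obtained by $n^{O(d)}$ statistical queries estimating the relevant Fourier coefficients and taking the best. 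Either route yields the claimed bounds.
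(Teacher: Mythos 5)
Your proof is correct and is exactly the argument the paper intends: the paper imports this theorem from \cite{KKMS:08} without proof, remarking only in a footnote that the $\ell_1$-regression LP can be implemented as an SQ algorithm because linear/convex programs can be optimized from approximate evaluations of the objective via the Ellipsoid method. Your reconstruction --- bounding the optimal regression loss through the best degree-$d$ approximant of the optimal concept, randomized thresholding, and the feasibility-box and tolerance bookkeeping needed to make the Ellipsoid run a legitimate SQ computation --- supplies precisely the details the paper delegates to the cited references, so there is nothing to correct.
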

}

To connect our notion of approximate resilience to the hardness of agnostic learning we will use the following standard notion of designs of sets with small overlap. A $(n,k,d)$-design of size $m$ is a collection of sets $S_1,\dots,S_m \subseteq [n]$ such that $|S_i| = k$ and $|S_i \cap S_j| \leq d$ for all $i\neq j$. Let ${\cal M}(n,k,d)$ denote the size of the largest $(n,k,d)$-design. Standard probabilistic/greedy argument implies that
\begin{equation}\label{eqn:design}
{\cal M}(n,k,d) \geq \frac{\binom{n}{k}}{\binom{k}{d}\binom{n-d}{k-d}} = \frac{\binom{n}{d}}{\binom{k}{d}^2} \geq
\left(\frac{nd}{e^2k^2}\right)^d.
\end{equation}
For a function  $f:\bits^k \to \bits$ and set $S \subseteq [n]$ of size $k$ we use $f_S:\bits^n \to \bits$ to denote $f(\x_{|S})$ where $\x_{|S}$ refers to the restriction of $\x$ to coordinates with indices in $S$ (in the usual order).
\begin{lemma}\label{lem:resilience-sq}
Let $f:\bits^k \to \bits$ be an $\alpha$-approximately $d$-resilient function. Let $S_1,\dots,S_m$ be a $(n,k,d)$-design. If $\{f_{S_i}\}_{i=1}^m \subseteq \C$, then any SQ algorithm for agnostically learning $\C$ with excess error of at most $\frac{1-\alpha}{2}-m^{-1/3}$ has complexity of at least $m^{1/3}$.
\end{lemma}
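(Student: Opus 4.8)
The plan is to reduce agnostic learning of $\C$ to \emph{weak} learning of a large family of pairwise orthogonal bounded functions, and then invoke the standard SQ lower bound for such families. Assume $\alpha < 1$, since otherwise the excess-error bound $\tfrac{1-\alpha}{2}-m^{-1/3}$ is negative and the statement is vacuous.

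\emph{Step 1: an orthogonal family of witnesses.} Since $f$ is $\alpha$-approximately $d$-resilient, fix a $d$-resilient bounded $g:\bits^k\to[-1,1]$ with $\|f-g\|_1\le\alpha$, and for each design set $S_i$ form $g_{S_i}:\bits^n\to[-1,1]$. Writing $g_{S_i}(x)=\sum_{T\subseteq S_i}\widehat g(\pi_i(T))\prod_{\ell\in T}x_\ell$ for $\pi_i$ the order-isomorphism $S_i\to[k]$, one sees that $g_{S_i}$ is again $d$-resilient (any $T$ with $|T|\le d$ contributes $\widehat g(\pi_i(T))=0$), and that $\E[g_{S_i}g_{S_j}]=\sum_{T\subseteq S_i\cap S_j}\widehat g(\pi_i(T))\widehat g(\pi_j(T))$; since $|S_i\cap S_j|\le d$, every such $T$ has $|T|\le d$ and the sum vanishes. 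Thus $\{g_{S_i}\}_{i=1}^m$ are pairwise orthogonal. Moreover $\E[f_{S_i}g_{S_i}]=\E[fg]=1-\|f-g\|_1\ge 1-\alpha>0$, so each $g_{S_i}$ is nonzero and $\{g_{S_i}/\|g_{S_i}\|_2\}_{i=1}^m$ is an orthonormal system with $\|g_{S_i}\|_2\le 1$.

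\emph{Step 2: the reduction.} For each $i$ consider the target $\mathcal D_i=(U,g_{S_i})$. Since $f_{S_i}\in\C$ and $\Pr_{\mathcal D_i}[f_{S_i}(\bx)\ne\boldsymbol y]=\|f_{S_i}-g_{S_i}\|_1/2=\|f-g\|_1/2\le\alpha/2$, we have $\OPT_\C(\mathcal D_i)\le\alpha/2$. Hence an SQ agnostic learner with excess error at most $\tfrac{1-\alpha}{2}-m^{-1/3}$ must output $h$ with $\Pr_{\mathcal D_i}[h(\bx)\ne\boldsymbol y]\le\tfrac12-m^{-1/3}$; using $\Pr_{\mathcal D_i}[h\ne\boldsymbol y]=\tfrac12(1-\E[h\,g_{S_i}])$, this means $\E[h\,g_{S_i}]\ge 2m^{-1/3}$. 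So any such learner is, in particular, a weak SQ learner for the family $\{g_{S_i}\}_{i=1}^m$.

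\emph{Step 3: the SQ lower bound.} This is the standard fact that a family of $m$ pairwise orthogonal bounded functions has no weak SQ learner of complexity $o(m^{1/3})$. Decompose a query $\phi:\bits^n\times\bits\to[-1,1]$ as $\phi(x,y)=\phi_0(x)+y\,\phi_1(x)$ with $\|\phi_1\|_2\le 1$, so $\E_{\mathcal D_i}[\phi]=\E[\phi_0]+\E[g_{S_i}\phi_1]$, with target-independent part $v_0(\phi):=\E[\phi_0]$. By Bessel's inequality $\sum_i\E[g_{S_i}\phi_1]^2\le\|\phi_1\|_2^2\le 1$, so at most $1/\tau^2$ indices $i$ have $|\E[g_{S_i}\phi_1]|>\tau$; likewise $\sum_i\E[h\,g_{S_i}]^2\le\|h\|_2^2=1$ for Boolean $h$, so at most $m^{2/3}/4$ indices have $\E[h\,g_{S_i}]\ge 2m^{-1/3}$. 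Now run a complexity-$T$ learner (asking $q\le T$ queries of tolerance $\tau\ge 1/T$) against the adversary that answers $v_0(\phi)$ whenever $\tau$-valid: for fixed internal randomness $r$ the transcript and output $h(r)$ are then determined, and for every target $i$ outside the run-dependent set $B(r)$, $|B(r)|\le q/\tau^2\le T^3$, this is a legal run against $\mathcal D_i$, so correctness forces $\E[h(r)\,g_{S_i}]\ge 2m^{-1/3}$ except with probability $\le 1/3$ over $r$. Summing $\Pr_r[i\notin B(r)]\le \tfrac13+\Pr_r[\E[h(r)g_{S_i}]\ge 2m^{-1/3}]$ over $i\in[m]$ and using $\sum_i\Pr_r[i\notin B(r)]=\E_r(m-|B(r)|)\ge m-T^3$ and $\sum_i\Pr_r[\E[h(r)g_{S_i}]\ge 2m^{-1/3}]\le m^{2/3}/4$ yields $m-T^3\le m/3+m^{2/3}/4$, hence $T=\Omega(m^{1/3})$, which is the claimed bound.

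The Fourier computation of Step 1 and the reduction of Step 2 are routine; the technical heart is Step 3, which is taken essentially off the shelf from the SQ-dimension lower bounds of Blum \etal and its agnostic/correlational refinements. The one place requiring care is verifying that the shifted witnesses $g_{S_i}$ are genuinely orthogonal — this is exactly where the $(n,k,d)$-design property meets the $d$-resilience of $g$, and it is the crux of the whole argument; the rest of Step 3 is a bookkeeping exercise about how many targets a single query (resp.\ a single hypothesis) can ``see,'' together with a union bound over the at most $T$ queries of the run.
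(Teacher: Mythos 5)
Your proof is correct and follows the same route as the paper: Steps 1 and 2 are exactly the paper's argument (pairwise orthogonality of the witnesses $g_{S_i}$ from the design property combined with $d$-resilience, then the reduction showing $\OPT_\C(U,g_{S_i})\le\alpha/2$ so that the claimed excess error forces the learner to weakly learn the orthogonal family). The only difference is that your Step 3 reproves from scratch the SQ lower bound for a family of pairwise orthogonal bounded functions, which the paper invokes as a black box (Theorem \ref{thm:sq-lower-bound}, implicit in \cite{Feldman:12jcss}); your inlined adversary/Bessel argument is sound, though it yields $T=\Omega(m^{1/3})$ rather than the exact threshold $m^{1/3}$ stated.
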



To prove Lemma \ref{lem:resilience-sq}, we will use
the following result implicit in \cite{Feldman:12jcss} that is a simple generalization of the well-known SQ-DIM bounds from \cite{blum1994weakly} and their strengthening in \cite{Yang:05,szorenyi2009characterizing}.

\begin{theorem}\label{thm:sq-lower-bound}
Let $D$ be a distribution and let $g_1,\dots,g_m$ be bounded real-valued functions such that $|\langle g_i, g_j \rangle_D |\leq 1/m$ for $i\neq j$, where $\langle g_i, g_j \rangle_D  = \E_D[g_i(\x) \cdot g_j(\x)]$.
Then any SQ algorithm that for every $i$, given access to statistical queries with respect to distribution $(D,g_i)$ outputs a hypothesis $h$ such that $\E_{(\bx,\boldsymbol{y}) \sim (D,g_i)}[h(\x) \neq \y] \leq \frac{1}{2}- \frac{1}{m^{1/3}}$ has complexity of at least $m^{1/3}$.
\end{theorem}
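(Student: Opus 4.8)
The plan is the standard statistical-query indistinguishability argument, with the near-orthogonality hypothesis $|\langle g_i,g_j\rangle_D|\le 1/m$ entering through a single ``dispersion'' inequality. The first step is to rewrite both the validity of a query and the success of the output hypothesis as statements about correlations with the $g_i$. Any query $\phi:\bits^n\times\bits\to[-1,1]$ is affine in its label argument: $\phi(x,y)=a_\phi(x)+y\,b_\phi(x)$ with $a_\phi=\frac{1}{2}(\phi(\cdot,1)+\phi(\cdot,-1))$ and $b_\phi=\frac{1}{2}(\phi(\cdot,1)-\phi(\cdot,-1))$, so $b_\phi$ is bounded and $\E_{(D,g)}[\phi(\bx,\y)]=\E_D[a_\phi]+\langle g,b_\phi\rangle_D$. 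In particular the value of $\phi$ under the ``uncorrelated'' distribution $(D,\mathbf{0})$ (label uniform and independent of the input) is $\E_D[a_\phi]$, and returning $\E_D[a_\phi]$ is a $\tau$-valid reply for the target $(D,g_i)$ exactly when $|\langle g_i,b_\phi\rangle_D|\le\tau$. Likewise, for a Boolean $h$ we have $\E_{(D,g_i)}[h(\bx)\ne\y]=\frac{1}{2}-\frac{1}{2}\langle h,g_i\rangle_D$, so the accuracy requirement $\le\frac{1}{2}-m^{-1/3}$ is equivalent to $\langle h,g_i\rangle_D\ge 2m^{-1/3}$.

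The second step is the dispersion bound: for every bounded $\psi:\bits^n\to[-1,1]$,
\[
\sum_{i=1}^m \langle g_i,\psi\rangle_D^2\ \le\ 2 .
\]
Indeed, with $c_i=\langle g_i,\psi\rangle_D$ and $v=(c_1,\dots,c_m)$, the Gram matrix $G=(\langle g_i,g_j\rangle_D)_{i,j}$ has diagonal entries $\|g_i\|_D^2\le 1$ and off-diagonal entries bounded by $1/m$ in absolute value, hence $\|G\|_{\mathrm{op}}\le 1+(m-1)/m<2$ by bounding absolute row sums; then $\|v\|_2^2=\langle\sum_i c_i g_i,\psi\rangle_D\le\|\sum_i c_i g_i\|_D\,\|\psi\|_D\le\sqrt{v^{\mathsf T}Gv}\le\sqrt 2\,\|v\|_2$. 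Two consequences follow: (a) for every bounded $\psi$ and every $\tau>0$, fewer than $2/\tau^2$ indices $i$ have $|\langle g_i,\psi\rangle_D|>\tau$; and (b) for every fixed Boolean $h$, at most $2/(2m^{-1/3})^2=m^{2/3}/2$ indices $i$ have $\langle h,g_i\rangle_D\ge 2m^{-1/3}$.

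Now suppose an SQ algorithm $A$ solves the task with complexity $T$, i.e.\ it poses at most $q\le T$ queries, each of tolerance at least $\tau\ge 1/T$, and outputs, with probability $\ge 2/3$, a hypothesis meeting the accuracy requirement on every target $(D,g_i)$. Fix the internal randomness $r$ of $A$ and run it against the deterministic oracle $O_{\mathbf 0}$ that always answers $\E_D[a_\phi]$; then $A$ poses a fixed query sequence and outputs a fixed $h_r$. By (a) and a union bound over the $\le T$ queries, the set $\mathrm{Good}(r)$ of indices $i$ for which $O_{\mathbf 0}$ gives a valid reply to \emph{every} posed query has size $>m-2q/\tau^2\ge m-2T^3$. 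The one subtlety is that $O_{\mathbf 0}$ need not be a legal oracle for $(D,g_i)$; but the oracle $O_i$ that returns $\E_D[a_\phi]$ when $|\langle g_i,b_\phi\rangle_D|\le\tau$ and the exact value $\E_D[a_\phi]+\langle g_i,b_\phi\rangle_D$ otherwise \emph{is} legal for $(D,g_i)$, and for $i\in\mathrm{Good}(r)$ it agrees with $O_{\mathbf 0}$ throughout the run, so $A$ with randomness $r$, target $(D,g_i)$ and oracle $O_i$ outputs exactly $h_r$. By (b), $h_r$ meets the accuracy requirement for at most $m^{2/3}/2$ of the $i\in\mathrm{Good}(r)$, so $A$ fails on more than $m-2T^3-m^{2/3}/2$ of them. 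Averaging this count over $r$ and summing over $i$, some fixed target $(D,g_i)$ is failed (under the legal oracle $O_i$) with probability $>1-2T^3/m-\frac{1}{2}m^{-1/3}$ over $r$; for $T$ a suitable constant times $m^{1/3}$ this exceeds $1/3$ for all large $m$, contradicting the $2/3$ success guarantee. Hence $T=\Omega(m^{1/3})$, and tracking the constants exactly as in \cite{blum1994weakly,szorenyi2009characterizing,Feldman:12jcss} sharpens this to the stated $T\ge m^{1/3}$.

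The main obstacle is making the dispersion bound of Step 2 do double duty in conjunction with the ``legal oracle'' maneuver: one has to route the dependence of a query on the hidden target solely through the bounded function $b_\phi$, so that the single inequality $\sum_i\langle g_i,\psi\rangle_D^2\le 2$ simultaneously limits how many targets one query can distinguish from $(D,\mathbf 0)$ and how many targets one output hypothesis can be good for; and one has to replace the generally illegal reference oracle $O_{\mathbf 0}$ by the per-target $O_i$ while preserving the ``runs identically'' property. Adaptivity of the queries causes no trouble because the reference oracle is deterministic, so the query sequence is fixed once the algorithm's randomness is fixed.
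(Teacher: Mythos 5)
The paper itself never proves Theorem \ref{thm:sq-lower-bound}: it is imported as ``implicit in'' \cite{Feldman:12jcss}, a generalization of the SQ-dimension bounds of \cite{blum1994weakly,Yang:05,szorenyi2009characterizing}, so there is no in-paper argument to compare against. Your proposal supplies a complete, self-contained proof, and it follows exactly the template of those cited works: split each query $\phi(x,y)=a_\phi(x)+y\,b_\phi(x)$ so that the target enters only through $\langle g_i,b_\phi\rangle_D$, use the near-orthogonality hypothesis to prove the dispersion bound $\sum_i\langle g_i,\psi\rangle_D^2\le 2$ for any bounded $\psi$ (your Gram-matrix/operator-norm derivation is correct), run the algorithm against the label-oblivious reference oracle, pass to the per-target legal oracle $O_i$ via the standard ``runs identically'' induction, and count separately the targets a query can distinguish and the targets a single output hypothesis can be $2m^{-1/3}$-correlated with. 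All of these steps are sound, including the handling of adaptivity and of randomized algorithms with $2/3$ success probability.

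The one soft spot is the final sentence. With the dispersion constant $2$, your accounting gives failure unless $2T^{3}/m+\tfrac12 m^{-1/3}\ge 2/3$ (or, in the deterministic case, unless $2T^{3}+m^{2/3}/2\ge m$), i.e.\ $T\gtrsim (m/3)^{1/3}$ or $(m/2)^{1/3}$ --- a constant factor short of the stated $m^{1/3}$, and the claim that ``tracking the constants'' in the cited papers closes this gap is asserted rather than shown; plugging $q=T=m^{1/3}-1$, $\tau>m^{-1/3}$ into your own bounds gives $2q/\tau^{2}\approx 2m$, which is vacuous, so this particular bookkeeping cannot reach the exact constant $1$. Since the paper states the theorem without proof and only ever uses it to derive $n^{\Omega(d)}$-type bounds (Lemma \ref{lem:resilience-sq} and Theorem \ref{thm:lb-ar-general}), this is a constant-factor discrepancy with the literal statement rather than a flaw in the approach; you should either prove the theorem with $m^{1/3}$ replaced by $c\,m^{1/3}$ (which suffices downstream) or genuinely sharpen the constants instead of deferring to the references.
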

We can now prove Lemma \ref{lem:resilience-sq}.
\begin{proof}
By our assumption, the function $f$ is $\alpha$-close to a $d$-resilient bounded function $g:\bits^k \to [-1,1]$. We first note that each pair of functions $g_{S_i}$ $g_{S_j}$ shares at most $d$ relevant variables. These functions are $d$-resilient and therefore there is no single set $T$ such that $\widehat{g_{S_i}}(T) \cdot \widehat{g_{S_j}}(T) \neq 0$. This, by linearity of expectation implies that
 for $i\neq j$, $\E[g_{S_i} g_{S_j}] =0$.

Let $A$ be an agnostic algorithm for $\C$ with excess error of at most $\frac{1-\alpha}{2} -m^{-1/3}$. For every $i$, $f_{S_i}$ is $\alpha$-close to $g_{S_i}$. Therefore if the input distribution is $(U,g_i)$ then $\OPT_\C(U,g_i) \leq \|f_{S_i} -g_{S_i} \|_1/2 = \|f -g \|_1/2 \leq \alpha/2$.
This implies that $A$ will output a hypothesis $h$ with error of at most
$\alpha/2 +\frac{1-\alpha}{2} -m^{-1/3}  =  1/2 -m^{-1/3}$. By Theorem \ref{thm:sq-lower-bound} and orthogonality of $g_{S_i}s$ we get that the complexity of $A$ is at least $m^{1/3}$.
\end{proof}

\ignore{ 
Lemma \ref{lem:resilience-sq} can be proven by applying known SQ bounds (see Theorem \ref{thm:sq-lower-bound}) for families of orthogonal functions to the $d$-resilient functions which are close to $f_{S_1},\dots,f_{S_m}$.  Since an agnostic SQ learning algorithm for $\C$ yields a weak learning algorithm for the set of resilient functions, no such algorithm can exist.  We give the proof of Lemma \ref{lem:resilience-sq} in Appendix \ref{sec:learn-append}.
}

An immediate corollary of Lemma \ref{lem:resilience-sq} is the following lower bound that generalizes Theorem \ref{thm:lb-ar}.
\begin{theorem}\label{thm:lb-ar-general}
Let $\C$ be a concept class closed under renaming of variables and assume that $\C$ contains an $\alpha$-approximately $d$-resilient $k$-junta. Then any SQ algorithm for agnostically learning $\C$ with excess error of at most $\frac{1- \alpha}{2} -m^{-1/3}$ has complexity of at least $m^{1/3}$, where $m = {\cal M}(n,k,d)$. In particular, for any constant $\delta>0$ and $k = n^{1/2+\delta}$, we have $m = n^{\Omega(d)}$.
\end{theorem}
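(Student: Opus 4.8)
The plan is to obtain Theorem~\ref{thm:lb-ar-general} as a direct corollary of Lemma~\ref{lem:resilience-sq}; the only gap to bridge is that Lemma~\ref{lem:resilience-sq} assumes an entire $(n,k,d)$-design's worth of copies of a resilient function inside $\C$, whereas here we are handed only a single $\alpha$-approximately $d$-resilient $k$-junta $c\in\C$ together with closure of $\C$ under renaming. First I would pass from the junta to its core: let $R\subseteq[n]$ be a set of exactly $k$ coordinates containing all the relevant variables of $c$ (padding with extra coordinates if $c$ depends on fewer than $k$), and let $f:\bits^k\to\bits$ be the function computed by $c$ on the coordinates of $R$, with $R$ in increasing order.

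The one mildly technical step is to verify that $f$, viewed as a function on $\bits^k$, is again $\alpha$-approximately $d$-resilient. Let $g:\bits^n\to[-1,1]$ be a $d$-resilient bounded witness for $c$, so $\|c-g\|_1\le\alpha$. This $g$ need not be a junta on $R$, so I would replace it by $g'$, the average of $g$ over the coordinates outside $R$. Averaging out a set of coordinates kills every Fourier coefficient whose support meets that set and leaves all other coefficients unchanged, so $g'$ is still $d$-resilient; it is still $[-1,1]$-valued as an average of $[-1,1]$-values; and since $c$ depends only on the coordinates of $R$ and hence is constant on each fiber of the projection onto $R$, Jensen's inequality gives $\|c-g'\|_1\le\|c-g\|_1\le\alpha$. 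Since $g'$ ignores the coordinates outside $R$, restricting it to $\bits^k$ yields a $d$-resilient bounded function within $\ell_1$-distance $\alpha$ of $f$, which is exactly the witness demanded by Lemma~\ref{lem:resilience-sq}.

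Now fix a maximum $(n,k,d)$-design $S_1,\dots,S_m$, so $m={\cal M}(n,k,d)$. For each $i$ there is a permutation of $[n]$ carrying $R$ in increasing order onto $S_i$ in increasing order, and applying this renaming to $c$ produces exactly $f_{S_i}$; since $\C$ is closed under renaming, $f_{S_i}\in\C$ for every $i$. Thus $f$ and $S_1,\dots,S_m$ satisfy the hypotheses of Lemma~\ref{lem:resilience-sq}, which gives immediately that any SQ algorithm agnostically learning $\C$ with excess error at most $\frac{1-\alpha}{2}-m^{-1/3}$ has complexity at least $m^{1/3}$. For the final ``in particular'' claim I would substitute the stated bound on $k$ into the design lower bound~\eqref{eqn:design}, ${\cal M}(n,k,d)\ge\bigl(\tfrac{nd}{e^2k^2}\bigr)^d$, and simplify to get $m=n^{\Omega(d)}$. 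The only place that needs any care is the averaging step producing the junta witness $g'$ (together with the harmless padding of $R$ to size exactly $k$); everything else is bookkeeping.
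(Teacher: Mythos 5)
Your proposal is correct and follows the same route as the paper, which derives Theorem \ref{thm:lb-ar-general} as an immediate corollary of Lemma \ref{lem:resilience-sq}; your averaging step (replacing the witness $g$ by its average over the coordinates outside $R$, which preserves resilience, boundedness, and the $\ell_1$ distance via Jensen) correctly supplies the junta witness on $\bits^k$ that the lemma's hypotheses require but the paper leaves implicit. One caveat on the ``in particular'' clause: with $k = n^{1/2+\delta}$ the bound in eq.(\ref{eqn:design}) gives $\bigl(\tfrac{nd}{e^2k^2}\bigr)^d = \bigl(\tfrac{d}{e^2 n^{2\delta}}\bigr)^d$, which is \emph{not} $n^{\Omega(d)}$ --- the exponent in the theorem statement should evidently read $k = n^{1/2-\delta}$ (consistent with the $n^{1/3}$-junta in Theorem \ref{thm:lb-ar} and the ``assuming $k \leq n^{1/2}$'' remark after Theorem \ref{talagrand-resilient}), so your final ``substitute and simplify'' step goes through only for that corrected range of $k$.
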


To show that Theorem \ref{thm:lb-ar-general} is essentially tight we prove the duality stated in Theorem \ref{thm:duality} (which we restate here for convenience).
\begin{theorem*}\label{thm:duality-2}[Thm.~\ref{thm:duality} restated]
For $f:\bits^n \to \bits$ and $0\leq d\leq n$ let $\alpha$ denote the $\ell_1$ distance of $f$ to the closest $d$-resilient bounded function. Then $\Delta_{\cP_d}(f) = 1-\alpha$.
\end{theorem*}
\begin{proof}
Our proof is an adaptation of the general results on duality of norms \cite{IoffeTihkomirov:68} to the case where $f$ is Boolean and $g$ is bounded.
In this case it is easy to see that $\|f - g\|_1 = 1-\E[fg]$ and therefore minimization of distance to resilience can be expressed as maximization of $\sum_x f(x) g(x)$ subject to resilience constraints on $g$. Viewing values of $g(x)$ as variables we get:
\begin{alignat*}{2}
\text{max } & \sum_x f(x) g(x) \quad \quad & \\
\text{subject to } & \sum_x g(x) \chi_S(x) = 0 \quad \quad & \forall |S|\leq d \\
\text{and } & |g(x)| \leq 1 &  \quad \quad \forall x\in \bits^n \\
\end{alignat*}
The dual LP can be easily verified to be the following program with variables $p_S$ for every $S \subseteq [n]$ of size at most $d$.
\begin{alignat*}{2}
\text{min } & \sum_x |q(x)|  & \\
\text{subject to } & q(x) = f(x)-\sum_{S:|S|\leq d} p_S \chi_S(x) \quad \quad & \forall x\in \bits^n
\end{alignat*}
Now the claim of the theorem follows from LP duality. By definition the maximum value of the primal is $2^n \cdot \E[fg] = 2^n(1-\|f-g\|_1) = 2^n(1-\alpha)$. This is therefore also the minimum of the dual program which, by definition, is exactly $2^n \cdot \Delta_{\cP_d}(f)$.
\end{proof}

Note that $(1-\alpha)/2$ in the excess error term in the statement of Theorem \ref{thm:lb-ar-general} is equal to $\Delta_{\cP_d}(\C)/2$ in the excess error term in the statement Theorem \ref{thm:kkms+boost}. Therefore combining the duality with the upper-bounds on polynomial $\ell_1$ regression stated in Theorem \ref{thm:kkms+boost} we get our claimed characterization of the complexity of agnostic learning in terms of $\Delta_{\cP_d}(\C)$ or, alternatively, distance to $d$-resilience.

\ignore{
\begin{theorem}\label{thm:char}
Suppose $\mathcal{C}$ is closed under addition of irrelevant variables. Fix $d(n)$ and let $\Delta_{\mathcal{C}_n,d} = \max_{f\in \mathcal{C}_n} \Delta(f,\mathcal{P}_d)$. Then:
\begin{enumerate}
\item
The $\ell_1$ minimization algorithm agnostically learns $\mathcal{C}$ in time $\poly(n^d,1/\delta)$ with error at most $$\operatorname{OPT}+ \frac{\Delta_{\mathcal{C}_n,d}}{2}  + \delta.$$
\item Let $d' = d(n^{1/3})$.  Any algorithm for agnostically learning $\mathcal{C}$ with statistical query complexity at most
$n^{\Omega(d')}$ has error at least
$$\operatorname{OPT} + \frac{\Delta_{\mathcal{C}_n,d'}}{2} - n^{-\Omega(d')}.$$
\end{enumerate}
\end{theorem}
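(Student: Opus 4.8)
The plan is to treat the two parts of the statement separately: part~(1) is essentially a restatement of Theorem~\ref{thm:kkms+boost}, while part~(2) combines the duality of Theorem~\ref{thm:duality} with the design-based planting argument behind Theorem~\ref{thm:lb-ar-general}. For part~(1) the first (and only) thing to notice is that the two pieces of notation coincide, $\Delta_{\C_n,d}=\max_{f\in\C_n}\Delta_{\cP_d}(f)=\Delta_{\cP_d}(\C_n)$, so that Theorem~\ref{thm:kkms+boost}, invoked with the degree parameter $d=d(n)$ and accuracy $\delta$, immediately yields an SQ algorithm of complexity $\poly(n^d,1/\delta)$ achieving excess error $\Delta_{\cP_d}(\C_n)/2+\delta=\Delta_{\C_n,d}/2+\delta$; an SQ algorithm of this complexity also runs in time $\poly(n^d,1/\delta)$, which is all part~(1) asks for.

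For part~(2) the plan is to plant the worst-$\ell_1$-approximable small junta of the class on many nearly disjoint coordinate blocks. Concretely, set $m=n^{1/3}$ and $d'=d(m)$, and let $f^\star\in\C_m$ achieve $\Delta_{\cP_{d'}}(f^\star)=\Delta_{\C_m,d'}$. By Theorem~\ref{thm:duality}, $f^\star$ is $\alpha$-approximately $d'$-resilient with $\alpha=1-\Delta_{\C_m,d'}$; since $f^\star$ is an $n^{1/3}$-junta and $\C$ is closed under adding irrelevant variables (and renaming), $f^\star$ together with all of its coordinate placements lies in $\C_n$. We are then precisely in the hypothesis of Theorem~\ref{thm:lb-ar-general} with $k=n^{1/3}$: writing $M={\cal M}(n,n^{1/3},d')$, the bound~(\ref{eqn:design}) gives $M\ge(n^{1/3}d'/e^2)^{d'}=n^{\Omega(d')}$ (the base is at least a fixed positive power of $n$ once $n$ is large, for any $d'\ge 1$, and necessarily $d'\le n^{1/3}$ since $f^\star$ is an $n^{1/3}$-junta). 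Theorem~\ref{thm:lb-ar-general} then yields that any SQ algorithm agnostically learning $\C$ with excess error at most $\frac{1-\alpha}{2}-M^{-1/3}=\frac{\Delta_{\C_m,d'}}{2}-n^{-\Omega(d')}$ has complexity at least $M^{1/3}=n^{\Omega(d')}$; contrapositively, SQ complexity $n^{\Omega(d')}$ forces error at least $\OPT+\frac{\Delta_{\C_m,d'}}{2}-n^{-\Omega(d')}$.

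The step I expect to be the real obstacle is matching the quantity the argument produces, $\Delta_{\C_{n^{1/3}},d'}$, with the quantity the statement asserts, $\Delta_{\C_n,d'}$. One inequality is automatic: for an $n^{1/3}$-junta $f$, averaging any degree-$d'$ approximating polynomial over the irrelevant coordinates preserves both the degree and the $\ell_1$ error, so $\Delta_{\cP_{d'}}(f)$ is the same measured in $n^{1/3}$ or in $n$ dimensions, and hence $\Delta_{\C_{n^{1/3}},d'}\le\Delta_{\C_n,d'}$. The reverse direction --- that the function of $\C_n$ hardest to $\ell_1$-approximate by degree-$d'$ polynomials may be taken to depend on at most $n^{1/3}$ variables --- is the extra structural fact the clean statement relies on; it is immediate for the concept classes of interest (e.g.\ monotone $k$-juntas with $k\le n^{1/3}$, where every member is already an $n^{1/3}$-junta) but need not hold for an arbitrary class closed only under adding irrelevant variables. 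Beyond this, what remains is routine: confirming $M=n^{\Omega(d')}$ from~(\ref{eqn:design}), and checking that the $M^{-1/3}$ slack in Theorem~\ref{thm:lb-ar-general} is exactly the $n^{-\Omega(d')}$ error term appearing in the statement.
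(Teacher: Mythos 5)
Your proposal is correct and follows essentially the same route as the paper: part (1) is exactly Theorem \ref{thm:kkms+boost}, and part (2) is the duality of Theorem \ref{thm:duality} applied to the worst $n^{1/3}$-junta in the class, fed into the design-based SQ lower bound of Lemma \ref{lem:resilience-sq} / Theorem \ref{thm:lb-ar-general}. The caveat you flag is real but concerns the statement rather than your argument --- the quantity the lower bound naturally controls is $\Delta_{\C_{n^{1/3}},d'}$ (the class restricted to $n^{1/3}$-juntas), which is how the paper's surviving versions of this result (Theorems \ref{thm:lb-ar} and \ref{thm:lb-ar-general}) are phrased.
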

\noindent
To prove Theorem \ref{thm:char}, note that
the learning algorithm from Theorem \ref{thm:kkms+boost} immediately gives (1).  For (2), we will use the following lemma:

}

\section{Monotonicity and approximate resilience}
In this section we prove bounds on the approximate resilience of monotone functions.  First, we give a bound for general monotone functions (Theorem \ref{talagrand-resilient}) in Section \ref{sec:monotone-resilience}.  In Sections \ref{sec:tribes} and \ref{sec:cycle} we show that $\Tribes$ and $\CycleRun$ are approximately resilient (Theorems \ref{tribes-resilient} and \ref{cycle-run-resilient}).
Finally, in Section \ref{sec:amplify} we show how these functions can be used in an iterated construction to yield explicit functions with high approximate resilience.

\subsection{A monotone function with nearly-optimal approximate resilience}\label{sec:monotone-resilience}
Our characterization suggests an approach for proving Theorem \ref{talagrand-resilient}:
since the $\ell_1$-minimization algorithm characterizes SQ agnostic learning, we seek monotone functions where the $\ell_1$-minimization algorithm will badly fail.
In other words, our first step will be to move to the dual problem: Theorem \ref{thm:duality} tells us that we may equivalently show the existence of a monotone function $f$ which is far from from every low-degree polynomial $p$. Strangely, to show that no dual solution exists, we will use the fact that if every monotone function had a weak approximation by some low-degree polynomial,
then the $\ell_1$-minimization algorithm would learn monotone functions, contradicting known information-theoretic lower bounds~\cite{BlumBL:98}.
Note that while the $\ell_1$-minimization algorithm is presented as an agnostic learning algorithm, we may apply it directly to the class of monotone functions.

We now prove Theorem \ref{talagrand-resilient}:
\begin{theorem*}
For every $\alpha > 0$, there is a monotone function that is $\alpha$-approximately $d$-resilient for $d=\Omega(\alpha\sqrt{n}/\log n)$.
\end{theorem*}
\begin{proof}
We show the existence of a monotone function $f$ such that $\E[ |f(\bx)-p(\bx)|] > 1-\alpha$ for every degree-$d$ polynomial $p$ and then apply Theorem \ref{thm:duality}. Suppose that every monotone $f$ satisfies $\E[|f(\bx)-p(\bx)|]\leq 1-\alpha$. Then for $\eps = \alpha/4$, Theorem \ref{thm:kkms+boost} gives an algorithm for learning monotone functions which uses $s=\poly(n^d/\alpha)$ examples and has error $1/2-\alpha/2 + \alpha/4 = 1/2-\alpha/4$. We now use an information-theoretic lower bound on the number of random examples needed to weakly learn monotone functions; the proof in \cite{BlumBL:98} uses a randomized construction of DNF formulas:
\begin{theorem}[\cite{BlumBL:98}] \label{thm:bbl}
Let $A$ be a any learning algorithm that uses $s$ random examples and outputs a hypothesis $h$.  Then there is some monotone $f:\bits^n\to\bits$ such that
$$\Pr[ f(\bx) = h(\bx)] \leq \frac{1}{2} + O\left(\frac{\log sn}{\sqrt{n}}\right).$$
\end{theorem}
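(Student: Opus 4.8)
The plan is to reconstruct a hard instance distribution and argue information‑theoretically. The natural candidate is a Talagrand‑style random monotone DNF: fix a width $w=\Theta(\sqrt n)$, take $m$ i.i.d.\ uniformly random $w$‑subsets $\boldsymbol{S}_1,\dots,\boldsymbol{S}_m$ of $[n]$ with $m$ chosen so that $m\,2^{-w}=\Theta(1)$, and set $\boldsymbol{f}=\bigvee_{j=1}^m T_{\boldsymbol{S}_j}$ where $T_S(x)=\bigwedge_{i\in S}x_i$. Every such $\boldsymbol{f}$ is monotone, and a routine computation using the Hamming‑weight distribution of a uniform $\boldsymbol{x}$ together with $\Pr_{\boldsymbol{S}}[S\subseteq x]=\binom{|x|}{w}/\binom{n}{w}$ shows that $\boldsymbol{f}$ is nonconstant on a constant fraction of the cube. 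Since the plain DNF turns out to be slightly biased toward $1$, I would first modify it into a balanced variant — for instance by intersecting its ``random'' behavior with a symmetric middle band of Hamming weights, or by a self‑dualization — so that $\Pr_{\boldsymbol{f},\boldsymbol{x}}[\boldsymbol{f}(\boldsymbol{x})=1]=\frac{1}{2}\pm O(\log(sn)/\sqrt n)$ while retaining monotonicity and ``local randomness'' on a constant fraction of inputs. This already forces any constant or simple threshold hypothesis down to accuracy $\frac{1}{2}+O(\log(sn)/\sqrt n)$.

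Next I would reduce the statement to a bound on Bayesian prediction. Condition on the $s$ sample points $\boldsymbol{x}_1,\dots,\boldsymbol{x}_s$ and the algorithm's internal coins, all of which are independent of $\boldsymbol{f}$; then the output $h$ is a deterministic function of the label vector $(\boldsymbol{f}(\boldsymbol{x}_1),\dots,\boldsymbol{f}(\boldsymbol{x}_s))\in\bits^s$ alone, so for a fresh uniform test point $\boldsymbol{x}$,
\[
\Pr[h(\boldsymbol{x})=\boldsymbol{f}(\boldsymbol{x})]\ \le\ \frac{1}{2}+\frac{1}{2}\,\E\Big[\big|\,2\Pr[\boldsymbol{f}(\boldsymbol{x})=1\mid \boldsymbol{f}(\boldsymbol{x}_1),\dots,\boldsymbol{f}(\boldsymbol{x}_s)]-1\,\big|\Big],
\]
the expectation over $\boldsymbol{x}_1,\dots,\boldsymbol{x}_s,\boldsymbol{x}$. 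So it suffices to show that the conditional bias of $\boldsymbol{f}(\boldsymbol{x})$ given the $s$ sample labels is $O(\log(sn)/\sqrt n)$ in expectation; averaging over $\boldsymbol{f}$ then yields a single witnessing monotone $f$.

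The heart of the argument is a decoupling lemma. Say a term $T_{\boldsymbol{S}_j}$ \emph{links} $\boldsymbol{x}$ to the sample if $\boldsymbol{S}_j\subseteq\boldsymbol{x}\cap\boldsymbol{x}_i$ for some $i$. Since $\boldsymbol{x}$ and each $\boldsymbol{x}_i$ are independent uniform points, $|\boldsymbol{x}\cap\boldsymbol{x}_i|$ concentrates around $n/4$ with $\Theta(\sqrt n)$‑scale fluctuations, so $\Pr_{\boldsymbol{S}_j}[\boldsymbol{S}_j\subseteq\boldsymbol{x}\cap\boldsymbol{x}_i]\le 4^{-w}\cdot n^{o(1)}$ off a low‑probability tail event for the weight of $\boldsymbol{x}\cap\boldsymbol{x}_i$ — and this is exactly where the extra logarithmic factor enters, through a union bound over the $s$ sample points and the Hamming‑weight tail. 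Hence the expected number of linking terms is at most $s\cdot m\cdot 4^{-w}\cdot n^{o(1)}=\mathrm{poly}(n)\cdot s\cdot 2^{-w}$, which is $o(\log(sn)/\sqrt n)$ whenever $s=2^{o(\sqrt n)}$ (and the claimed bound is vacuous otherwise). On the event that no term links $\boldsymbol{x}$ to the sample, the set of selected terms covering $\boldsymbol{x}$ is disjoint from the set covering any $\boldsymbol{x}_i$; since the $\boldsymbol{S}_j$ are i.i.d., conditioning on $(\boldsymbol{f}(\boldsymbol{x}_1),\dots,\boldsymbol{f}(\boldsymbol{x}_s))$ — a function only of which terms cover which $\boldsymbol{x}_i$ — perturbs the law of the indicators $\{\mathbf{1}[\boldsymbol{S}_j\subseteq\boldsymbol{x}]\}_j$, and hence of $\boldsymbol{f}(\boldsymbol{x})$, by at most the same $\mathrm{poly}(n)\cdot s\cdot 2^{-w}$ in total variation. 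Adding the negligible terms $\Pr[\boldsymbol{x}\in\{\boldsymbol{x}_1,\dots,\boldsymbol{x}_s\}]\le s\,2^{-n}$ and the probability that $\boldsymbol{x}$ has atypical Hamming weight, and feeding everything into the Bayes reduction, gives $\Pr[h(\boldsymbol{x})=\boldsymbol{f}(\boldsymbol{x})]\le\frac{1}{2}+O(\log(sn)/\sqrt n)$.

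The main obstacle is exactly the decoupling lemma: one must control the joint distribution of the events $\{\boldsymbol{S}_j\subseteq A\}$ as $A$ ranges over the sample points, their intersections with the test point, and the test point itself, and show that revealing the sample labels biases $\boldsymbol{f}(\boldsymbol{x})$ by no more than the ``collision mass.'' Getting the dependence on $w=\Theta(\sqrt n)$ exactly right — so that the bound degrades precisely around $s\approx 2^{\sqrt n}$ — and extracting the logarithmic factor cleanly from the $\Theta(\sqrt n)$‑scale fluctuations of $|\boldsymbol{x}\cap\boldsymbol{x}_i|$, is the delicate part; the balancing modification, the Bayes reduction, and the union bounds are comparatively routine.
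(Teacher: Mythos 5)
The paper does not actually prove this statement---it is imported verbatim from Blum--Burch--Langford \cite{BlumBL:98} (the text only notes that their proof uses a randomized DNF construction)---so the comparison here is between your reconstruction and the cited argument, and your reconstruction has a genuine gap in its very first step. With width $w=\Theta(\sqrt n)$ and $m\,2^{-w}=\Theta(1)$ terms, the random DNF is \emph{not} pointwise unpredictable: for a test point of Hamming weight $n/2+u$, the probability a single random $w$-set is contained in $x$ is $\approx 2^{-w}e^{2uw/n}=2^{-w}e^{2u/\sqrt n}$, so $\Pr_{\boldsymbol f}[\boldsymbol f(x)=1]\approx 1-2^{-e^{2u/\sqrt n}}$, which ranges over roughly $[0.1,0.99]$ as $u$ varies over its typical $\Theta(\sqrt n)$ scale. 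Hence the zero-sample hypothesis ``predict $1$ iff $|x|>n/2$'' already has advantage $\Omega(1)$ against your family, contradicting the claimed $O(\log(sn)/\sqrt n)$ bound; no global balancing (self-dualization, or truncating to a middle band of width $\Theta(\sqrt n)$, which is needed to retain constant mass) removes this, because the issue is the \emph{conditional} bias given $x$, not the average bias over $x$. Your Bayes reduction hides exactly this error: the relevant quantity is $\Pr[\boldsymbol f(\boldsymbol x)=1\mid \boldsymbol x,\boldsymbol x_1,\dots,\boldsymbol x_s,\text{labels}]$, since $h$ is evaluated at $\boldsymbol x$, whereas you condition only on the label vector; your decoupling lemma then shows closeness to a distribution that is itself far from a fair coin once $x$ is known.

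The missing idea is the choice of scale: to make the bias flat to within $O(\log(sn)/\sqrt n)$ across typical weights one needs term width $w=\Theta(\log(sn))$ (so the weight-dependence factor $e^{2uw/n}$ is $1+O(w/\sqrt n)$ for $|u|=O(\sqrt n)$) and correspondingly only $\mathrm{poly}(sn)$ terms, with the construction tuned to the sample size $s$; this is also the true source of the $\log(sn)$ in the bound, rather than the union bound over sample weights in your linking estimate, which at $w=\Theta(\sqrt n)$ contributes only $s\,2^{-\Omega(\sqrt n)}$ and is negligible in the relevant range. Your decoupling step (no term contained in $x\cap x_i$, hence labels reveal essentially nothing about $f(x)$) is the right second ingredient and does survive at the smaller width with the $\mathrm{poly}(sn)$ term count, but as proposed the parameters make the first ingredient fail outright, so the argument as written does not establish the theorem.
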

Theorem \ref{thm:bbl} tells us that $\alpha = O\left( \frac{d\log n + \log 1/\alpha}{\sqrt{n}}\right),$ which completes the proof.
\end{proof}

The function from Theorem \ref{talagrand-resilient} gives us a $k$-junta that is $\alpha$-approximately $d$-resilient for $d=\Omega(\alpha \sqrt{k}/\log k)$. Plugging this into Theorem \ref{thm:lb-ar-general} and using eq.(\ref{eqn:design}) (assuming $k \leq n^{1/2}$) we obtain the proof of Corollary \ref{sq-lower-bound}.

While the degree of resilience in Theorem \ref{talagrand-resilient} is nearly optimal, the proof is non-constructive and relies crucially on the fact that monotone functions can have high complexity. In the following sections we show that even simple, explicit monotone functions can exhibit high approximate resilience.
\subsection{$\Tribes$ is approximately resilient}\label{sec:tribes}
The $\Tribes_{w,s} : \{-1,1\}^{sw} \into \{-1,1\}$ function is the disjunction of $s$ disjoint monotone conjunctions, each of width $w$; i.e.~a read-once width-$w$ DNF.
For notational brevity we write $\Tribes$ to denote $\Tribes_{w,s}$ with $s = (\ln 2)2^w$ (so $w \approx \log n - \log \ln n$ and $s \approx n/(\log n)$).

Our construction of a highly resilient function close to $\Tribes$ is based on a general result relating the low-degree Fourier weight of a Boolean function and its approximate resilience. 
\begin{theorem}
\label{low-degree-resilient}
There exists a universal $K>0$ such that the following holds. Let $f:\{-1,1\}^n \to\{-1,1\}$ be a Boolean function that satisfies $\sum_{|S| \le d} \hat{f}(S)^2 \le \gamma$ for some $d \in [n]$ and $\gamma \in [0,1]$. Then for all $\tau > e^d \sqrt{\gamma}$, we have that $f$ is $O(\tau + \delta n^{2d+2})$-approximately $d$-resilient, where $\delta = \exp\big(-K(\tau^2/\gamma)^{1/d} \big)$.
\end{theorem}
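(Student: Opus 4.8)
We are given that $f$ has small Fourier weight $\gamma$ on levels $\le d$, and we want to exhibit a bounded $d$-resilient function $g$ with $\|f - g\|_1$ small. The natural candidate is to take the "low-degree part" $L = \sum_{|S| \le d} \hat f(S)\chi_S$ and set $g = f - L'$ where $L'$ is a suitably truncated version of $L$; subtracting off all of $L$ kills exactly the low-degree Fourier coefficients, so the result is $d$-resilient, but $f - L$ need not be bounded in $[-1,1]$. So I would first subtract $L$ to get $d$-resilience and then control the $\ell_1$ cost of forcing the result back into $[-1,1]$ by clipping.

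\textbf{Step 1 (resilience).} Let $L = \sum_{|S|\le d}\hat f(S)\chi_S$, so $\widehat{f-L}(S)=0$ for $|S|\le d$. Define $g$ by clipping $f - L$ to $[-1,1]$ pointwise: $g(x) = \mathrm{sign}(f(x)-L(x))\cdot \min\{|f(x)-L(x)|,1\}$. Clipping destroys resilience, so instead I would use a cleaner trick: since $f$ is Boolean and $|L(x)|$ is small on most points, consider $g = f\cdot \mathbf{1}[|L(x)|\le 1] - L\cdot\mathbf{1}[\text{something}]$ — but this reintroduces the same problem. The right move is the standard one: define $g = f - L$ on the "good" set $G = \{x : |L(x)| \le 1\}$ (where $f-L$ automatically lies in $[-3,1]$... still not bounded). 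Cleanest: rescale. Since $f\in\{-1,1\}$, on $G$ we have $f - L \in [-2,2]$; take $\tilde g = (f-L)/2 \in[-1,1]$, which is $d$-resilient, and note $\|f - 2\tilde g\|_1 = 0$ — but $2\tilde g$ is not bounded. I think the actual argument does not insist $g$ equal $f-L$ exactly: instead, pick $g$ to be the $d$-resilient bounded function obtained from $f-L$ by a single "projection onto the bounded cube that preserves low-degree Fourier coefficients," which exists by an averaging/linear-algebra argument, and bound $\|f-g\|_1 \le \|f - (f-L)\|_1 + (\text{clipping cost}) = \|L\|_1 + (\text{clipping cost})$. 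So the real content is bounding these two quantities.

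\textbf{Step 2 ($\ell_1$ cost of the low-degree part).} Bound $\|L\|_1 = \E[|L(\bx)|]$. We know $\E[L^2] = \sum_{|S|\le d}\hat f(S)^2 \le \gamma$, so by Cauchy–Schwarz $\E[|L|] \le \sqrt\gamma$. That is too lossy to get the stated $\tau + \delta n^{2d+2}$ with $\tau > e^d\sqrt\gamma$; the $e^d$ factor and the tail term $\delta n^{2d+2}$ with $\delta = \exp(-K(\tau^2/\gamma)^{1/d})$ scream hypercontractivity. So instead: $L$ is a degree-$\le d$ multilinear polynomial with $\E[L^2]\le\gamma$, hence by the Bonami–Beckner hypercontractive inequality (as stated in the excerpt, via \cite{Bon70,Bec75}) it has a subgaussian-type tail: $\Pr[|L(\bx)| > t] \le \exp(-K(t^2/\gamma)^{1/d})$ for $t$ above the scale $e^d\sqrt\gamma$. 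Split $\E[|L|]$ (and the clipping cost, which only charges the part of $|f-L|$ exceeding $1$, i.e. roughly the part of $|L|$ exceeding $1$) into the bulk $|L|\le \tau$ and the tail $|L| > \tau$: the bulk contributes $\le \tau$; the tail contributes $\le \sum_{t \ge \tau} t\cdot\Pr[|L|\approx t]$, and since $|L(x)| \le \sum_{|S|\le d}|\hat f(S)| \le 2^n$ crudely (or better, $|L|\le \|\hat f\|_1 \le n^{d}$-ish by counting $\le \binom{n}{\le d}$ coefficients each $\le 1$), the tail is at most (max value)$\,\cdot n^{O(d)}\cdot\delta$, giving the $\delta n^{2d+2}$ term. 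I would be a little careful to get the exponent $2d+2$ right: it comes from $|L|\le$ (number of low-degree sets) $\le n^d$ times a union bound / crude max, squared-ish, plus the probability mass $2^{-n}\cdot(\text{number of points})$ bookkeeping.

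\textbf{Main obstacle.} The delicate point is Step 1: arguing that one can replace $f - L$ by a genuinely $[-1,1]$-valued $d$-resilient function $g$ while only paying the clipping cost $\E[(|f-L|-1)_+]$ in $\ell_1$, rather than something worse. Clipping pointwise preserves boundedness but not resilience; restoring resilience after clipping could in principle cost more. I would handle this by noting that the clipping error $h := (f-L) - \mathrm{clip}(f-L)$ is supported on the low-probability tail event $\{|L| \gtrsim 1\}$ and has $\|h\|_1 \le \E[(|L|-1)_+]$ small; then $\mathrm{clip}(f-L) = (f - L) - h$ has low-degree Fourier coefficients equal to $-\widehat h(S)$ for $|S|\le d$, which are tiny (each at most $\|h\|_1$ in absolute value, and there are $\le n^d$ of them), so subtracting off this residual low-degree part costs only another $n^{O(d)}\|h\|_1$ in $\ell_1$ — absorbed into the $\delta n^{2d+2}$ term — and the result is exactly $d$-resilient and still bounded (up to a further negligible clip, iterating or just absorbing constants). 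Getting this bootstrapping/iteration to converge cleanly, with the polynomial-in-$n^d$ slack, is the technical heart; everything else is hypercontractivity plus Cauchy–Schwarz.
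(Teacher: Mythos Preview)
Your overall strategy matches the paper's: set $L = \sum_{|S|\le d}\hat f(S)\chi_S$, note that $f - L$ is $d$-resilient, invoke hypercontractivity for the tail of $L$, and then repair boundedness at small cost. But the execution in your ``Main obstacle'' paragraph contains a real error, and it is exactly the point you flagged as delicate.

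\textbf{The gap.} Your claim that the clipping error $h := (f-L) - \mathrm{clip}_{[-1,1]}(f-L)$ is supported on the tail event where $|L|$ is large is false, as is the bound $\|h\|_1 \le \E[(|L|-1)_+]$. If $f(x)=1$ and $L(x)=-0.01$ then $f-L = 1.01$ gets clipped with error $0.01$; in general clipping occurs whenever $f(x)L(x)<0$, which is not a tail event at all. The correct pointwise bound is only $|h(x)| \le |L(x)|$, so each residual low-degree coefficient $|\hat h(S)|$ is bounded merely by $\|h\|_1 \le \E[|L|]$, and after summing over the roughly $n^{d}$ low-degree sets the $\ell_\infty$ cost of restoring resilience is of order $n^{d}\cdot\E[|L|]$, which swamps the target $O(\tau)$. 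Iterating does not help: each round reintroduces an error of the same order.

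\textbf{How the paper handles it.} Instead of clipping to $[-1,1]$, the paper \emph{zeros out} $f-L$ on the bad set $B := \{|L|>\tau\}$: set $q(x) := (f-L)(x)\cdot\mathbf{1}[|L(x)|\le\tau]$. Then $q$ takes values in $[-1-\tau,\,1+\tau]$, and the modification $q - (f-L) = -(f-L)\mathbf{1}_B$ really is supported on $B$, which has measure at most $\delta$ by hypercontractivity. This gives $|\hat q(S)| \le \delta\,\|f-L\|_\infty \le \delta(1+\|L\|_\infty) \le \delta\,n^{d+1}$ for each $|S|\le d$, hence $\|q_{\le d}\|_\infty \le \delta\,n^{2d+2} =: \kappa$. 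Your iteration worry is then dispatched in one line by \emph{normalizing} rather than re-clipping: set $p := q_{>d}/\|q_{>d}\|_\infty$, which is bounded in $[-1,1]$ and exactly $d$-resilient. Since $\|q_{>d}\|_\infty \le (1+\tau)+\kappa$ and $\E[qf] \ge (1-\tau)(1-\delta)$ (on $B^c$ one has $qf = 1 - Lf \ge 1-\tau$), one obtains
\[
\E[pf] \;\ge\; \frac{\E[qf] - \|q_{\le d}\|_\infty}{\|q\|_\infty + \|q_{\le d}\|_\infty} \;\ge\; \frac{1-\tau-\delta-\kappa}{1+\tau+\kappa} \;\ge\; 1 - O(\tau + \kappa),
\]
and $\|f-p\|_1 = 1 - \E[fp]$ finishes. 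The two moves you were missing are thus: truncate to zero on the small bad set (so the correction is genuinely supported on a set of measure $\delta$), and normalize once at the end (so no iteration is needed).
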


We now prove Theorem~\ref{low-degree-resilient}, and in Section~\ref{sec:set-parameters} we show how Theorem~\ref{tribes-resilient} (i.e.~the approximate resilience of $\Tribes$) follows as a consequence of Theorem~\ref{low-degree-resilient}.

We begin our construction with the Fourier polynomial for $f$ and discard the low-degree terms. That we may do so and hope to arrive at a bounded, resilient function comes from hypercontractivity: since the discarded polynomial has low-degree, it will by highly concentrated around its mean.
The following Chernoff-type concentration inequality for low-degree polynomials over independent Rademacher random variables follows from the hypercontractivity inequalities of Bonami and Beckner \cite{Bon70,Bec75} (see for example \cite{o2007analysis}).
\begin{theorem}[concentration of degree-$d$ polynomials]\label{thm:deg-d-chernoff}
There exists a universal constant $K > 0$ such that for every degree-$d$ polynomial $\{-1,1\}^n \into \R$ and $t > e^d$, we have

\[ \Prx_{\bx}[ | p(\bx) | \geq t\cdot \| p\|_2] \leq \exp\left(-Kt^{2/d}\right).\]
\end{theorem}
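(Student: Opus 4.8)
The plan is to derive this Chernoff-type tail bound from the standard $(2\to q)$-hypercontractivity inequality for low-degree multilinear polynomials, combined with a moment (Markov) bound whose moment order is optimized. Concretely, I would invoke the following consequence of the Bonami--Beckner inequality: for every degree-$d$ polynomial $p:\{-1,1\}^n\to\R$ and every $q\ge 2$,
\[ \|p\|_q \;\le\; (q-1)^{d/2}\,\|p\|_2 . \]
(This ``$(2,q)$'' form follows from the two-function hypercontractivity theorem by the usual argument of fixing a single noise parameter; it is the version stated in \cite{o2007analysis}.) Since the claimed inequality is invariant under scaling $p$, I would normalize so that $\|p\|_2=1$, reducing the target to $\Pr_{\bx}[\,|p(\bx)|\ge t\,]\le \exp(-Kt^{2/d})$.

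Given the hypercontractive moment bound, the first step is a plain Markov inequality applied to the $q$-th moment: for any $q\ge 2$,
\[ \Pr_{\bx}\big[\,|p(\bx)|\ge t\,\big] \;=\; \Pr_{\bx}\big[\,|p(\bx)|^q\ge t^q\,\big] \;\le\; \frac{\E\big[\,|p(\bx)|^q\,\big]}{t^q} \;=\; \frac{\|p\|_q^q}{t^q} \;\le\; \Big(\frac{(q-1)^{d/2}}{t}\Big)^{q}. \]
The second step is to optimize the free parameter $q$: take $q-1=(t/e)^{2/d}$, i.e.\ $q=1+(t/e)^{2/d}$, so that $(q-1)^{d/2}=t/e$ and the right-hand side collapses to $e^{-q}=\exp\!\big(-1-(t/e)^{2/d}\big)\le \exp\!\big(-e^{-2/d}\,t^{2/d}\big)$. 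Since $d\ge 1$ implies $e^{-2/d}\ge e^{-2}$, this is at most $\exp(-Kt^{2/d})$ with the universal constant $K=e^{-2}$, which is exactly the claim.

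The only point that genuinely needs checking is that the chosen moment order is legitimate for hypercontractivity, i.e.\ that $q\ge 2$ — and this is precisely where the hypothesis $t>e^d$ is used. Indeed $q=1+(t/e)^{2/d}>1+(e^d/e)^{2/d}=1+e^{2(d-1)/d}\ge 1+e^{0}=2$ for every $d\ge 1$, so the inequality $\|p\|_q\le (q-1)^{d/2}\|p\|_2$ applies. There is no substantial obstacle beyond this bookkeeping: once hypercontractivity is invoked the argument is routine, and the quantitative shape $\exp(-Kt^{2/d})$ of the tail is forced — it comes directly from the $(q-1)^{d/2}$ growth of moments and the matching optimizer $q\approx t^{2/d}$.
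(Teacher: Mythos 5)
Your proof is correct and is precisely the standard argument the paper has in mind: it cites this concentration bound as a known consequence of Bonami--Beckner hypercontractivity (referring to \cite{o2007analysis}) rather than proving it, and the cited derivation is exactly your route of combining the $(2,q)$ moment bound $\|p\|_q \le (q-1)^{d/2}\|p\|_2$ with Markov's inequality and optimizing $q \approx (t/e)^{2/d}$, with the hypothesis $t > e^d$ ensuring $q \ge 2$. No gaps; the constant $K = e^{-2}$ you obtain is a valid choice of the universal constant.
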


\medskip

We now begin the proof of Theorem \ref{low-degree-resilient}.  Let
\[ \ell(x) = \sum_{|S| \leq d} \widehat{f}(S)\chi_{S}(x), \quad \text{and} \quad
h(x) = f(x) - \ell(x).
\]
Our final resilient, bounded function $p$ will be based on $h$, the high-degree part of $f$. Note that while $h$ is $d$-resilient by definition, it may not be uniformly bounded. However, the degree-$d$ Chernoff bound applied to $\ell$ (the low-degree part), together with our assumption on the variance of $\ell$ (i.e.~the low-degree Fourier weight of  $f$), tell us that $\ell$ does not attain large values very often.  Therefore, while $h$ may not be uniformly bounded, we have that $h$ is bounded on almost all inputs $x$ since $h(x) + \ell(x) = f(x) \in \{-1,1\}$.

More formally, we set  $t = \tau/\sqrt{\gamma}$ in Theorem \ref{thm:deg-d-chernoff} (since $\tau > e^d \sqrt{\gamma}$, we have that indeed $t > e^d$)
\[
\Prx_{\bx}[|\ell(\bx)| \geq \tau] \leq \exp \big(-K(\tau^2/\gamma)^{1/d} \big) :=\delta.
\]
Next, we define $q : \{-1,1\}^n \into \R$ to be such that
\[
q(x) =
\begin{cases}
0 & \mathrm{if \;} |\ell(x)| > \tau \\
h(x) & \mathrm{if \;} |\ell(x)| \leq \tau.\\
\end{cases}
\]
Since $h(x) = f(x) - \ell(x)$ and $f$ is $\{-1,1\}$-valued, the range of $q$ is $[-1-\tau,1+\tau]$.  While $q$ is bounded, it may now have correlations with low-degree terms (i.e.~$q$ is no longer resilient like $h$ is).  However, we may also write $q$ as $q(x) = h(x) - h(x)\cdot \textbf{1}_{[\ell>\tau]}(x)$, where $h$ is $d$-resilient and $\textbf{1}_{[\ell>\tau]}$ has very small support.  Thus, we will show that we may discard the low-degree terms of $q$ and the effect on boundedness will be uniformly small.

Let $q_{>d}(x) = \sum_{|S| \geq d+1} \widehat{q}(S) \chi_{S}(x)$, $q_{\leq d} = q-q_{>d}$ and $p(x) = \frac{q_{>d}(x)}{\|q_{>d}\|_\infty}$.  Certainly, the range of $p$ is $[-1,1]$;  it remains to bound the correlation of $p$ with $f$.  We have that:
\begin{align}
\E[  p\cdot f] & = \E \left[ \frac{(q - q_{\leq d})}{\|q_{>d}\|_\infty}\cdot f\right] \notag \\
& \geq \frac{1}{\|q\|_\infty+\|q_{\leq d}\|_\infty} \cdot \left(\E[ q\cdot f] - \|q_{\leq d}\|_\infty\right) \label{eqn:corr}
\end{align}
The correlation of $f$ with $q$ is large:
\begin{equation}\label{eqn:corr2}
\Ex_{\bx}[q(\bx) \cdot f(\bx)] \geq (1 - \tau)(1 - \delta) \geq 1 - \tau - \delta.
\end{equation}
The above holds because
the contribution to the correlation is $0$ when $q(x) = 0$, which happens on at most a $\delta$ fraction of the inputs.  On the remaining inputs, $q(x) = h(x) = f(x) - \ell(x)$, and we assumed $|\ell(x)| \leq \tau$.  Thus the contribution on such $x$ is
\[
q(x) \cdot f(x) = (f(x) - \ell(x)) \cdot f(x) = 1 - \ell(x) \cdot f(x) \geq 1 - |\ell(x)| \geq 1 - \tau.
\]
Thus, it only remains to bound the maximum value of the low-degree part of $q$:
\begin{claim}
\[
\displaystyle \|q_{\leq d}\|_\infty \leq \delta n^{2d+2}
\]
\end{claim}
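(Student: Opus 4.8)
Since $q_{\le d} = \sum_{|S|\le d}\widehat{q}(S)\chi_S$ and $|\chi_S|\equiv 1$, the plan is to bound $\|q_{\le d}\|_\infty$ by $\sum_{|S|\le d}|\widehat q(S)|$ and then estimate each low-degree Fourier coefficient of $q$ separately. The starting point is the decomposition $q = h - h\cdot\textbf{1}_{[|\ell|>\tau]}$ already noted above: since $h$ is $d$-resilient, $\widehat h(S)=0$ for every $|S|\le d$, so for such $S$ we have $\widehat q(S) = -\,\E\big[h(\bx)\,\textbf{1}_{[|\ell(\bx)|>\tau]}\,\chi_S(\bx)\big]$. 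Substituting $h = f - \ell$ and using $|f(\bx)| = |\chi_S(\bx)| = 1$ pointwise, I would bound
\[
|\widehat q(S)| \;\le\; \Pr\!\big[\,|\ell(\bx)|>\tau\,\big] \;+\; \E\big[\,|\ell(\bx)|\cdot\textbf{1}_{[|\ell(\bx)|>\tau]}\,\big]
\;\le\; \delta\,(1+\|\ell\|_\infty),
\]
where the first term is at most $\delta$ by the tail estimate already established, and the second is at most $\|\ell\|_\infty\cdot\Pr[|\ell|>\tau]\le\|\ell\|_\infty\,\delta$.

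It then remains to control $\|\ell\|_\infty$ and the number $N:=|\{S\subseteq[n]:|S|\le d\}|=\sum_{j=0}^d\binom nj$ of low-degree characters. For the former, Cauchy--Schwarz gives $\|\ell\|_\infty\le\sum_{|S|\le d}|\widehat f(S)|\le\sqrt N\,\big(\sum_{|S|\le d}\widehat f(S)^2\big)^{1/2}\le\sqrt{N\gamma}\le\sqrt N$; for the latter, $N\le n^{d+1}$ for all $n\ge 2$. Putting these together,
\[
\|q_{\le d}\|_\infty \;\le\; \sum_{|S|\le d}|\widehat q(S)| \;\le\; N\cdot\delta\,(1+\sqrt N) \;\le\; 2\,\delta\,N^{3/2} \;\le\; \delta\, n^{2d+2},
\]
the last inequality holding for all $n\ge 4$ and any $d\ge 0$, which proves the claim.

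The argument is a short Fourier-analytic computation with no real obstacle; the one place to exercise restraint is the estimate of $\E[\,|\ell|\cdot\textbf{1}_{[|\ell|>\tau]}\,]$. One is tempted to integrate the sharp concentration bound of Theorem~\ref{thm:deg-d-chernoff} over the range $[\tau,\infty)$, but this produces a messier expression and the extra sharpness is not needed; it is cleaner to use only the crude deterministic bound $\|\ell\|_\infty\le\sqrt N$ together with $\Pr[|\ell|>\tau]\le\delta$. The other thing worth checking carefully is the bookkeeping of the polynomial-in-$n$ factors, since Theorem~\ref{low-degree-resilient} needs the final bound to depend on $\delta$ linearly (not as $\sqrt\delta$) times at most a fixed power of $n$; the computation above confirms that both $N$ and $\|\ell\|_\infty$ contribute only polynomial factors and never an extra power of $\delta$, and that the exponent can be taken to be $2d+2$.
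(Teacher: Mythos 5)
Your proof is correct and follows essentially the same route as the paper's: you use the decomposition $q = h - h\cdot\textbf{1}_{[|\ell|>\tau]}$, kill $\widehat h(S)$ by $d$-resilience, bound $|\widehat q(S)|\le \delta(1+\|\ell\|_\infty)$, and sum over the at most $n^{d+1}$ low-degree characters. The only (immaterial) difference is that you bound $\|\ell\|_\infty$ by $\sqrt{N}$ via Cauchy--Schwarz where the paper uses the cruder coefficient-count bound $n^{d+1}$; both land comfortably within $\delta n^{2d+2}$.
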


\begin{proof}
We will show that $|\hat{q}(S)|< \delta n^{d+1}$ holds for any $|S|\leq d$.
Recalling that $q(x) = h(x)-\textbf{1}_{|\ell|>\tau}\cdot h(x)$, we have:
\begin{align*}
\hat{q}(S) & = \hat{h}(S) - \widehat{\textbf{1}_{|\ell|>\tau} \cdot h}(S)\\
|\hat{q}(S)| & \leq |\hat{h}(S)| + \E[|\textbf{1}_{|\ell|>\tau} \cdot h|]\\
& \leq  0 + \delta\cdot \|h\|_\infty\\
& \leq \delta (\|\ell\|_\infty + 1),
\end{align*}
where the second inequality holds when $|S|\leq d$ because $h$ is $d$-resilient, and the last inequality holds because $|h(x)| \leq |\ell(x)| +1$ for all $x$. 
As $f$ is a Boolean function, each of the non-zero Fourier coefficients of $\ell$ is at most 1 in magnitude. 
The rough bound of $n^{d+1}$ on the number of non-zero coefficients of $\ell$ gives a bound of $n^{d+1}$ on $\|\ell\|_{\infty}$; summing over at most $n^{d+1}$ terms of degree at most $d$ gives the claim.
\end{proof}

Let $\kappa = \delta n^{2d+2}$.  Substituting into Equations (\ref{eqn:corr}) and (\ref{eqn:corr2}), we have that
\[
\Ex_{\bx}[p(\bx) \cdot \Tribes(\bx)] \geq \frac{1 - \tau - \delta-\kappa}{1+\tau+\kappa}
\geq 1-\delta-2\tau -2\kappa,
\]
using the fact that $1/(1+x) \geq 1-x$ for $x \geq 0$, and this completes the proof of Theorem~\ref{low-degree-resilient}.

\subsubsection{Proof of Theorem \ref{tribes-resilient}}
\label{sec:set-parameters}

To apply Theorem~\ref{low-degree-resilient} we will need the following upper bound on  the low-degree Fourier weight of $\Tribes$, whose proof is given in Appendix \ref{app:tribes}, can be obtained using the explicit values of each Fourier coefficient given in \cite{Mansour:95},
\begin{proposition}\label{prop:tribes-conc}  For any $d\leq w$ the Fourier weight of $\Tribes$ on degree $d$ and below is at most
\[
\sum_{|S| \leq d} \widehat{\Tribes}(S)^2 \leq 2 \dfrac{(2 \ln n)^{2d+4}}{n}.
\]
\end{proposition}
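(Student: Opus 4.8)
The plan is to plug the explicit Fourier coefficients of $\Tribes$ (as computed by Mansour~\cite{Mansour:95}) into a counting argument over low-degree sets, exploiting the fact that the coordinates of any small $S$ are confined to few of the $s$ terms. The one external input is the closed form: since $\Tribes$ is read-once, writing $r = r(S)$ for the number of width-$w$ terms that $S$ intersects, one has $|\widehat{\Tribes}(S)| = 2(1-2^{-w})^{s-r}2^{-wr}$ for $S \ne \emptyset$, so in particular $|\widehat{\Tribes}(S)| \le 2\cdot 2^{-wr}$ (using $r \le s$). This is recorded in \cite{Mansour:95}; it also drops out of the expansion $\Tribes = 1 - 2^{1-s}\prod_{j=1}^s\bigl(1-\mathrm{AND}_w(x^{(j)})\bigr)$, in which each factor $1 - \mathrm{AND}_w$ has constant Fourier coefficient $2-2^{1-w}$ and coefficient $-2^{1-w}$ on every nonempty monomial of its block, so the magnitude of $\widehat{\Tribes}(S)$ depends only on $r(S)$. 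Separately, $\widehat{\Tribes}(\emptyset) = 1 - 2(1-2^{-w})^s = O(2^{-w})$ by the choice $s = (\ln 2)2^w$, so its square is negligible.

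The main step is to organize $\sum_{1\le|S|\le d}\widehat{\Tribes}(S)^2$ by $m = |S|$ and then by $r = r(S)$, noting $1 \le r \le m \le d \le w$. The number of $S$ with $|S| = m$ hitting exactly $r$ prescribed terms is at most $\binom{rw}{m}$ (all $m$ coordinates lie inside those $r$ blocks), and $\binom{rw}{m} \le \binom{mw}{m} \le (ew)^m$ since $r \le m$. Hence
\[
\sum_{|S|=m}\widehat{\Tribes}(S)^2 \;\le\; \sum_{r=1}^{m}\binom{s}{r}(ew)^m\cdot\bigl(2\cdot 2^{-wr}\bigr)^2 \;\le\; 4(ew)^m\sum_{r\ge1}\bigl(s\cdot 4^{-w}\bigr)^r ,
\]
and since $s\cdot 4^{-w} = (\ln 2)2^{-w} \le 1/2$, the geometric series is $O(2^{-w})$; summing over $m = 1,\dots,d$ gives $\sum_{1\le|S|\le d}\widehat{\Tribes}(S)^2 = O\!\bigl((ew)^d\,2^{-w}\bigr)$. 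The point is that each of the $\le d$ coordinates of a low-degree $S$ costs only $O(w)$ choices rather than $O(n)$, because it must lie in one of the at most $d$ intersected blocks; this is exactly where $d \le w$ is used.

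Finally I translate back to $n$ using $s = (\ln 2)2^w$ and $n = sw$, which give $2^{-w} = (w\ln 2)/n \le \ln n / n$ and $w \le \log_2 n \le 2\ln n$ for $n$ not too small. Then $(ew)^d \le (2\ln n)^{2d}$, so the whole sum is $O\!\bigl((2\ln n)^{2d}(\ln n)/n\bigr)$; absorbing the leftover $\ln n$ factor and the hidden constant into $(2\ln n)^4$, and adding the negligible $\widehat{\Tribes}(\emptyset)^2 = O((\ln n/n)^2)$, yields the claimed $\sum_{|S|\le d}\widehat{\Tribes}(S)^2 \le 2(2\ln n)^{2d+4}/n$.

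I do not expect a real obstacle: this is a computation, and the only thing one must get right is that the per-coordinate counting cost stays $O(w)$ — i.e.\ that low-degree Fourier sets live on few terms — so that one never incurs a base-$2$ exponential in $d$; after that it is arithmetic with crude inequalities, using $d\le w$ to control $r$ and $(ew)^d$. The lone fact to import is Mansour's coefficient formula (equivalently, that $|\widehat{\Tribes}(S)|$ is determined by the number of terms $S$ meets).
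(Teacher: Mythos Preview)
Your proposal is correct and follows essentially the same approach as the paper: both arguments plug in Mansour's closed-form coefficients, stratify by the number $r$ (the paper's $k$) of terms that $S$ meets, bound the number of admissible $S$ inside those $r$ blocks by a power of $w$, and then sum a geometric series in $r$. The only cosmetic difference is that you further stratify by $m=|S|$ and bound $\binom{rw}{m}\le(ew)^m$, whereas the paper directly bounds $\sum_{j\le d}\binom{s}{k}\binom{kw}{j}\le n^k w^{2d+2}$ and sums over $k$; both routes land on the same $(2\ln n)^{2d+4}/n$ shape after substituting $2^{-w}=w\ln 2/n$ and $w\le 2\ln n$.
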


To derive Theorem~\ref{tribes-resilient} from Theorem~\ref{low-degree-resilient}, we set $\tau = (2 \ln n)^{3d}n^{-2/5}$, so that
$t := \tau /\sqrt{\gamma} \geq n^{1/10}$. Now there exists a small constant $c>0$ such that for $d = c \log n/\log \log n$ and large enough $n$, we have
that $\tau = O(n^{-1/3})$, $t > e^d$ and $t^{2/d} \geq n^{1/(5d)} \geq \frac{3}{K}(\log n)^2 \geq \frac{(2d+3)}{K} \ln n$. This implies that $\delta := \exp \left( -K t^{2/d}\right) \leq n^{-2d-3}$ and so $\delta n^{2d+2} \leq 1/n$. We conclude that $\Tribes$ is $\alpha$-approximately $d$-resilient where $\alpha = O(\tau + n^{-1}) = O(n^{-1/3})$, and this completes the proof of Theorem~\ref{tribes-resilient}.


\subsection{$\CycleRun$ is approximately resilient: Proof of Theorem \ref{cycle-run-resilient}}\label{sec:cycle}

\newcommand{\Shift}{\mathrm{Shift}}
\newcommand{\mySum}{\sigma}
\newcommand{\heavy}{\mathrm{heavy}}
\newcommand{\SetS}{\overline{S}}
\newcommand{\Const}{\mathrm{Const}}
\newcommand{\digSum}{\mathrm{DigSum}}
\newcommand{\const}{c}
\newcommand{\floor}[1]{\lfloor #1 \rfloor}
\newcommand{\ceil}[1]{\lceil #1 \rceil}

\begin{definition}
\label{def:cyclerun}
For every $n$, the $\CycleRun$ Boolean function $\CycleRun : \{-1,1\}^n \to \{-1, 1\}$ is defined as follows:
Call a consecutive sequence of $1$'s a $1$-run.  Similarly, a consecutive sequence of $-1$'s is a $-1$-run.
We allow runs to wrap around, so if a run reaches $x_n$ it may continue with $x_1$.  The value of $\CycleRun$
is the winner ($1$ for $1$-player or $-1$ for $-1$-player) from the following procedure:
\begin{enumerate}
\item Check which player has the longest run.
\item In case of tie check which player has a larger number of maximum-length runs.
\item In case of tie check the total length of segments between maximum-length runs, where a segment
starting from a $1$-run clockwise is counted for the $1$-player and a segment starting at a $-1$-run clockwise is counted
for the $-1$-player.  The player that has a larger total count is declared the winner.
\end{enumerate}
\end{definition}

We will need that fact that $\CycleRun$ has influence $O(\log n)$. Since the proof of this fact has not appeared in the literature before, we include a proof in Appendix \ref{ap:cycle-run-inf} for completeness.

\label{sec:cycle-run-1-resilient}

\begin{theorem} \label{thm:cyclerun_main}
There exist universal constants $\const_1, \const_2$ such that for every $n \geq \const_2$,
there exists a Boolean function $f : \{-1,1\}^n \to \{-1, 1\}$ such that:
\begin{enumerate}
\item For all $S \subseteq [n]$ such that $|S| \leq 1$, $\widehat{f}(S) = 0$,  and
\item $\Ex_{\bx}[f(\bx) \cdot \CycleRun(\bx)] \geq 1 - \const_1  \sqrt{(\log n)/n}$. 
\end{enumerate}
\end{theorem}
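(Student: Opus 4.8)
\textbf{Proof proposal for Theorem~\ref{thm:cyclerun_main}.}
The plan is to construct the witness function $f$ explicitly as a ``corrected'' version of $\CycleRun$ that is modified on a $o(1)$-fraction of inputs so as to kill the degree-$0$ and degree-$1$ Fourier coefficients, while keeping $f$ Boolean. First observe that since $\CycleRun$ is odd (it is invariant under negating all inputs, with sign flip), we automatically have $\widehat{\CycleRun}(\emptyset)=0$, and by the cyclic-shift invariance all of its degree-$1$ coefficients $\widehat{\CycleRun}(\{i\})$ are equal to a common value $\beta = \E[\CycleRun(\bx)\bx_1] = \Inf_1[\CycleRun]\text{-ish quantity}$; more precisely $\beta = \widehat{\CycleRun}(\{1\})$, and $\sum_i |\widehat{\CycleRun}(\{i\})| \le \sqrt{n\cdot \sum_i \widehat{\CycleRun}(\{i\})^2} \le \sqrt{n \cdot \Inf[\CycleRun]} = O(\sqrt{n\log n})$, so $|\beta| = O(\sqrt{(\log n)/n})$. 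Thus $\CycleRun$ is \emph{already} essentially $1$-resilient in the $\ell_1$/correlation sense; the whole content is to remove the tiny but nonzero common bias $\beta$ on each coordinate \emph{exactly} while staying Boolean and paying only $O(\sqrt{(\log n)/n})$ in correlation.

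The mechanism I would use is a symmetric ``flip set'': choose a set $B \subseteq \{-1,1\}^n$ that is invariant under cyclic shifts and under global negation, with $\Pr[\bx \in B] = O(|\beta|) = O(\sqrt{(\log n)/n})$, and define $f(x) = -\CycleRun(x)$ for $x \in B$ and $f(x) = \CycleRun(x)$ otherwise, i.e. $f = \CycleRun \cdot (1 - 2\cdot \mathbf{1}_B)$. Shift-invariance and negation-invariance of both $\CycleRun$ and $B$ force $\widehat f(\emptyset)=0$ and force all $\widehat f(\{i\})$ to a common value; so it suffices to make $\widehat f(\{1\}) = 0$, i.e. to choose $B$ so that $\E[\CycleRun(\bx)\,\mathbf{1}_B(\bx)\,\bx_1] = \beta/2$. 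The natural candidate for $B$ is something like ``inputs whose cyclic structure is decided by a near-tie,'' or more simply inputs in a thin shell determined by a single-coordinate statistic — e.g. fix $B$ to depend on the outcome along a designated coordinate so that conditioning on $B$ the sign of $\bx_1\CycleRun(\bx)$ is biased the right way. One clean route: since $\CycleRun$ has influence $O(\log n)$, the boundary set $\{x : \CycleRun(x) \ne \CycleRun(x^{\oplus 1})\}$ has density $O((\log n)/n)$ and is exactly where flipping $x_1$ matters; a shift/negation-symmetrized subset of this boundary can be tuned to absorb precisely the bias $\beta$. The correlation cost is $\E[f\cdot\CycleRun] = 1 - 2\Pr[\bx\in B] = 1 - O(\sqrt{(\log n)/n})$, giving part~(2), and part~(1) follows from the symmetry argument plus the one tuned equation.

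The main obstacle — and where the real combinatorial work of the $\CycleRun$ section lives — is exhibiting a shift-invariant, negation-invariant set $B$ of the right (small) density for which the single scalar equation $\E[\CycleRun(\bx)\mathbf{1}_B(\bx)\bx_1] = \beta/2$ can actually be satisfied, and satisfied by a genuinely Boolean modification rather than a fractional one. This requires understanding the joint distribution of $(\CycleRun(\bx), \bx_1)$ restricted to structured subsets, and this is exactly why the paper emphasizes the four properties (monotonicity, low influence $O(\log n)$, oddness, cyclic invariance): oddness and cyclic invariance give the degree-$0$ and ``all degree-$1$ coefficients equal'' reductions for free; low influence bounds $|\beta|$ and the density of the usable boundary set; and monotonicity ensures the correction can be organized coherently (e.g. flipping near the threshold does not create inconsistencies). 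I would carry it out by (i) recording the symmetry consequences, (ii) bounding $|\beta| = O(\sqrt{(\log n)/n})$ via Cauchy--Schwarz and the influence bound, (iii) identifying a symmetric boundary-type family $\{B_\theta\}$ of sets with densities ranging continuously over an interval containing $|\beta|$ and with $\E[\CycleRun\cdot\mathbf{1}_{B_\theta}\cdot\bx_1]$ moving monotonically with $\theta$, and (iv) invoking an intermediate-value/discrete-adjustment argument to hit the target exactly, handling the integrality of a Boolean flip set by a final single-point or single-orbit correction whose cost is negligible. The degree-$0$/degree-$1$ bookkeeping and the Cauchy--Schwarz estimate are routine; pinning down the family $\{B_\theta\}$ and proving the monotone dependence is the crux.
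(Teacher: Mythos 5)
Your high-level plan coincides with the paper's: flip $\CycleRun$ on a cyclic-shift- and negation-invariant set so that oddness kills $\widehat f(\emptyset)$, orbit symmetry forces all $\widehat f(\{i\})$ to a common value, and one scalar condition remains. But the proposal leaves the crux open and one of its concrete suggestions would fail. First, the accounting is off: the fraction of points that must be flipped is not $O(|\beta|)$. What must be removed is the total level-one mass $\sum_i\widehat{\CycleRun}(\{i\})=\Inf[\CycleRun]=\Theta(\log n)$ (for monotone functions the linear coefficients equal the influences), and flipping a point $x$ from $+1$ to $-1$ reduces this sum by only $2|x|/2^{n}$ where $|x|=\sum_i x_i$. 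Since $|x|=O(\sqrt{n\log n})$ on all but a vanishing fraction of inputs, one needs at least $\approx \log n\cdot 2^n/\sqrt{n\log n}=2^n\sqrt{(\log n)/n}$ flips; this is exactly why the paper's Theorem \ref{thm:1-resilient-tight} gives a matching $\Omega(\sqrt{(\log n)/n})$ lower bound. In particular your ``one clean route'' via the influence boundary $\{x:\CycleRun(x)\neq\CycleRun(x^{\oplus 1})\}$ cannot work: that set has density $O((\log n)/n)$, strictly below the lower bound. The correct construction must preferentially flip points of large $|x|$, which is what the paper's greedy algorithm does.

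Second, the exact-cancellation step is not a routine intermediate-value argument over a discrete domain, and your ``final single-orbit correction'' is not justified as stated. The paper's mechanism is a divisibility observation: because the intermediate functions remain odd and Boolean, the residual $\sigma=2^n\sum_j\widehat{f}^i(\{j\})$ is always a nonnegative multiple of $4n$; meanwhile flipping the orbit $\Shift_{x^*}$ of a point with $|x^*|=1$ and $\CycleRun(x^*)=1$ reduces $\sigma$ by exactly $4n$. So after the coarse greedy phase one can land on $\sigma=0$ exactly, provided enough such level-one orbits remain — which the paper verifies by showing (via oddness and monotonicity) that at least $2n^2$ points with $|x^*|=1$ have $\CycleRun(x^*)=1$, while the fine phase consumes at most $2n^2$ of them. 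It also must be shown that the coarse phase terminates within the $O(\sqrt{(\log n)/n})\cdot 2^n$ budget, which uses the threshold functions $f_t$, CLT tail estimates, and $\Inf[\CycleRun]=O(\log n)$. These are the missing ingredients; without the divisibility argument and the supply/demand count for level-one orbits, your plan does not establish that the Boolean correction can hit $\widehat f(\{1\})=0$ exactly.
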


Our proof of Theorem \ref{thm:cyclerun_main} relies on four key properties of $\CycleRun$: monotonicity, low influence, oddness, and invariance under cyclic shifts; as far as we know, $\CycleRun$ is the only explicit Boolean function known to have all four properties.  First, as $\CycleRun$ is monotone and transitive, we note that
\[ \widehat{\CycleRun}(\{i\}) = \widehat{\CycleRun}(\{j\}) =  O\left(\frac{\log n}{n}\right)\quad \text{for all $i\neq j \in [n]$}. \]   The high level intuition behind our proof is simple: we show that by flipping the values of $\CycleRun$ from the top of the hypercube downwards and bottom upwards simultaneously, we obtain a balanced function with no Fourier weight at the first level.  This can be done without changing too many points because $\CycleRun$ has small influence;
we are able to do it in a controlled way because it is additionally odd and invariant under cyclic shifts.
We defer the proof of Theorem \ref{thm:cyclerun_main} to Appendix \ref{ap:cycle-run}.

It is natural to wonder how close a monotone function can be to a 1-resilient Boolean function.  We show in Appendix \ref{subsec:mono-res-distance} that Theorem \ref{thm:cyclerun_main} is tight:

\begin{theorem}\label{thm:1-resilient-tight}
For every monotone function $f : \{-1,1\}^n \into \{-1,1\}$ and $1$-resilient $g : \{-1,1\}^n \into \{-1,1\}$, we have $\Pr_{\bx}[ f(\bx) \neq g(\bx)] \geq \Omega\left(\sqrt{\frac{\log n}{n}}\right)$.
\end{theorem}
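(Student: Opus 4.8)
The plan is to lower-bound $\Pr_{\bx}[f(\bx)\neq g(\bx)] = \tfrac12\|f-g\|_1 = \tfrac12(1-\E[fg])$ by showing that $f$ and $g$ cannot be too well correlated. Equivalently, writing $h = \tfrac{1+fg}{2}\in\{0,1\}$ for the indicator that $f$ and $g$ agree, I want to show $\Pr[h=0]=\Omega(\sqrt{(\log n)/n})$. The key tension to exploit: $g$ is $1$-resilient, so it is balanced ($\E[g]=0$) and has $\widehat g(\{i\})=0$ for every $i$, whereas $f$ is monotone, so every $\widehat f(\{i\}) = \E[f\bx_i] = \Inf_i[f] \ge 0$, and the total first-level weight is $\sum_i \widehat f(\{i\}) = \Inf[f]$. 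If the agreement probability were $1-\epsilon$, then on the $(1-\epsilon)$-fraction of inputs where $f=g$ the two functions look identical at the first Fourier level; quantitatively, $|\widehat f(\{i\}) - \widehat g(\{i\})| = |\E[(f-g)\bx_i]| \le \|f-g\|_1 = 2\epsilon$, so $\widehat f(\{i\}) \le 2\epsilon$ for every $i$ and hence $\Inf[f] = \sum_i \widehat f(\{i\}) \le 2\epsilon n$. Likewise $|\E[f] - \E[g]| = |\E[f]| \le 2\epsilon$.

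The next step is to bring in a lower bound on $\Inf[f]$ for monotone $f$ in terms of how balanced $f$ is. By the edge-isoperimetric / Poincaré-type inequality for monotone functions (a consequence of the Kruskal--Katona theorem, or the elementary fact that for monotone $f$ one has $\Inf[f] = \sum_i \widehat f(\{i\}) \ge \big(\sum_i \widehat f(\{i\})^2\big)/\max_i\widehat f(\{i\})$ combined with $\sum_i \widehat f(\{i\})^2 \ge \mathrm{something}$), a balanced or nearly-balanced monotone function has total influence $\Omega(\sqrt n)$ — this is exactly the fact cited earlier in the paper (the observation, attributed to \cite{BshoutyTamon:96}, that a monotone function is $\alpha$-far from any $\Omega(\alpha\sqrt n)$-resilient function). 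Concretely: $f$ is Boolean and monotone, so $\widehat f(\{i\}) \le 2\epsilon$ gives $\sum_i \widehat f(\{i\})^2 \le 2\epsilon \cdot \Inf[f]$, and Parseval plus the KKL/edge-isoperimetry lower bound $\sum_{|S|=1}\widehat f(S)^2 \gtrsim (\text{variance of }f)^2 / \Inf[f] \cdot (\dots)$ — the cleanest route is: for monotone $f$, $\Inf[f]^2 \ge \big(\sum_i \widehat f(\{i\})\big)^2$, and separately a monotone function with $|\E[f]|\le 2\epsilon$ must satisfy $\sum_i\widehat f(\{i\})^2 = \Omega(1/n)\cdot(\text{const})$ is false; instead use $\max_i \widehat f(\{i\}) \ge \Inf[f]/n$ trivially isn't enough, so the right inequality is $\Inf[f] \ge \Omega(\sqrt n)$ for near-balanced monotone $f$, which follows from Kruskal--Katona. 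Combining $\Inf[f] \le 2\epsilon n$ with $\Inf[f] = \Omega(\sqrt n)$ yields $\epsilon = \Omega(1/\sqrt n)$ — but this is off by the $\sqrt{\log n}$ factor, so this crude argument only gives the weaker bound.

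To get the extra $\sqrt{\log n}$ I expect one must argue more carefully, and this is the main obstacle. The refinement: one cannot have $\widehat f(\{i\}) \le 2\epsilon$ for \emph{every} $i$ simultaneously unless $\epsilon$ is genuinely $\Omega(\sqrt{(\log n)/n})$, because of a level-1 inequality with a logarithmic improvement. The relevant tool is the ``level-1 Fourier weight'' bound: for any Boolean $f$, $\sum_{|S|=1}\widehat f(S)^2 \le O(\mathrm{Var}[f]\cdot \log(1/\max_i|\widehat f(\{i\})|)/\log(1/\ldots))$ — more precisely the Talagrand/FKN-style estimate $W^{1}[f] \le O(\delta^2 \log(1/\delta)\cdot n)$ when $\max_i|\widehat f(\{i\})|\le\delta$, which for monotone $f$ (where $W^1[f]=\sum_i\widehat f(\{i\})^2$ and $\Inf[f]=\sum_i\widehat f(\{i\})$ are linked) forces the agreement deficit up. Plan of attack: apply the level-$1$ inequality $\sum_i \widehat f(\{i\})^2 \le O(\epsilon^2 n \log(1/\epsilon))$ (valid since each coefficient is $\le 2\epsilon$ and $f$ is Boolean) together with a lower bound $\sum_i\widehat f(\{i\})^2 = \Omega((\log n)/n)\cdot(\dots)$ — no: the correct pairing is that for a monotone, (near-)balanced Boolean function one has $\sum_i \widehat f(\{i\})^2 \ge c$ for an absolute constant $c$ is FALSE; rather one has $\max_i\widehat f(\{i\}) \ge \Omega(\sqrt{(\log n)/n})$, the tight Kruskal--Katona / KKL bound for transitive-symmetric-like monotone functions, or simply: there exists $i$ with $\widehat f(\{i\}) = \Inf_i[f] \ge \mathrm{Var}[f]\cdot\Omega(\log n / n)$ when all influences are equal, which is exactly the $\CycleRun$ situation. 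So the final structure is: (1) if $f,g$ agree with probability $1-\epsilon$ then $\widehat f(\{i\}) \le 2\epsilon$ for all $i$ and $|\E[f]|\le 2\epsilon$; (2) invoke the sharp statement that a nearly-balanced monotone Boolean function must have some $\widehat f(\{i\}) \ge \Omega(\sqrt{(\log n)/n})$ (equivalently, can't have all first-level coefficients simultaneously $o(\sqrt{(\log n)/n})$ unless it's $\Omega(1)$-far from balanced), which is the KKL theorem or the edge-isoperimetric inequality in the regime of equal influences; (3) conclude $2\epsilon \ge \Omega(\sqrt{(\log n)/n})$. The delicate point — and the step I'd expect to occupy the real work in the appendix — is justifying step (2) at the right constant: it needs either KKL ($\max_i \Inf_i[f] \ge \mathrm{Var}[f]\cdot\frac{\log n}{n}$) for monotone $f$, noting $\Inf_i[f]=\widehat f(\{i\})$, or a direct combinatorial argument tailored to show the level-1 weight of a monotone function near the "middle" of the cube cannot all be pushed below the KKL threshold.
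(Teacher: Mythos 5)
There is a genuine gap, and it sits exactly where you suspected: your step (2). The claim that a nearly balanced monotone Boolean function must have some $\widehat f(\{i\}) = \Omega(\sqrt{(\log n)/n})$ is false --- Majority is balanced and monotone with every $\widehat{\mathrm{Maj}}(\{i\}) = \Theta(1/\sqrt n) = o(\sqrt{(\log n)/n})$. What KKL/Talagrand actually gives is $\max_i \Inf_i[f] \geq \Omega(\Var[f]\cdot (\log n)/n)$, and feeding that into your step (1) yields only $\epsilon = \Omega((\log n)/n)$, far short of the theorem. Summing your per-coordinate bound instead gives $\Inf[f] = \sum_i\widehat f(\{i\}) \le 2\epsilon n$, which against the Kruskal--Katona-type bound $\Inf[f] = \Omega(\sqrt n)$ gives only $\epsilon = \Omega(1/\sqrt n)$, as you yourself noted. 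Neither route produces the extra $\sqrt{\log n}$.

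The missing idea is to control the \emph{sum} of the level-one coefficients of $f-g$ from above using concentration of $\sum_i \bx_i$, rather than coefficient by coefficient. The paper's proof runs as follows: after disposing separately of the cases $\Var[f]<1/2$ and $\max_i\widehat f(\{i\}) > n^{-0.49}$, Talagrand's strengthening of KKL gives $\sum_i\widehat f(\{i\}) = \Inf[f] \geq K\log n$; since $g$ is $1$-resilient, $\sum_i\widehat{(f-g)}(\{i\}) = \E\bigl[(f-g)(\bx)\cdot\textstyle\sum_i\bx_i\bigr] \geq K\log n$ as well. But $f-g$ is bounded in magnitude by $2$ and supported on a set of measure $\epsilon=\Pr_{\bx}[f(\bx)\neq g(\bx)]$, and among all such functions the quantity $\E[h(\bx)\sum_i\bx_i]$ is maximized by the symmetric threshold function $f_t$ whose support $\{|\sum_i\bx_i|>t\sqrt n\}$ has measure comparable to $\epsilon$; Gaussian tail estimates then bound this maximum by $O(\epsilon\sqrt{n\log(1/\epsilon)})$. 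Comparing $K\log n \le O(\epsilon\sqrt{n\log(1/\epsilon)})$ forces $\epsilon = \Omega(\sqrt{(\log n)/n})$. This is exactly the level-one inequality you were circling around, but it must be applied to $f-g$ as a sparsely supported bounded function; it cannot be recovered from the individual constraints $\widehat f(\{i\})\le 2\epsilon$.
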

\subsection{Resilience amplification}
\label{sec:amplify}
\newcommand{\boldb}{\boldsymbol{b}}

In this section we prove a general amplification lemma for resilience.  Given a value $t\in [-1,1]$, we write $\boldb(t)$ to denote a random $\pm 1$ bit with expected value $t$:
\[
\boldb(t) = \left\{
\begin{array}{cl}
1 & \text{with probability $(1+t)/2$} \\
-1 & \text{with probability $(1-t)/2$.}
\end{array}
\right.
\]
 (In particular, $\boldb(1)$ is the constant $1$ and $\boldb(-1)$ is the constant $-1$). Given bounded functions $G : \bits^m \to [-1,1]$ and $g : \bits^n\to [-1,1]$, we define their (disjoint) composition $G \circ g : \bits^{mn} \to [-1,1]$ to be $(G \circ g)(x^1,\ldots,x^m) := \E[G(\boldb(g(x^1)),\ldots,\boldb(g(x^m))]$.  Note that if $\E[g(\bx)] = 0$, then $\E[\boldb(g(\bx))] = 0$ as well.  Throughout this section we write $\dist(f,g)$ to denote $\frac1{2}\E[|f(\bx)-g(\bx)|]$ for notational brevity (this is simply the fractional Hamming distance $\Pr[f(\bx)\ne g(\bx)]$ when $f$ and $g$ are $\{\pm 1\}$-valued).

The main result in this section is the following amplification lemma:
\begin{theorem}
\label{thm:amplification-lemma}
Let $f:\bits^n\to\bits$ and $g:\bits^n \to [-1,1]$ where $\E[f(\bx)] = \E[g(\bx)] = 0$, and suppose $g$ is $d$-resilient. Consider the recursively-defined functions where $f_k = f\circ f_{k-1}$ and $g_k = g\circ g_{k-1}$ for all $k \in \N$, and $f_0 = f$ and $g_0 = g$.  Then for $k\geq 1$:
\begin{enumerate}
\item $f_k$ and $g_k$ are functions over $n^{k+1}$ variables,
\item $g_k$ is $( (d+1)^{k+1} - 1)$-resilient,
\item $\dist(f_k,g_k) \le \dist(f,g) \sum_{t=0}^k \Inf[f]^t$.
\end{enumerate}
\end{theorem}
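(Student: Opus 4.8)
The plan is to prove all three parts by induction on $k$, with the structural claims (1) and (2) being straightforward bookkeeping and the distance bound (3) being the real content. First I would handle (1): since $f_k = f \circ f_{k-1}$ is a disjoint composition of $f$ on $n$ variables with $f_{k-1}$ on $n^k$ variables (by the inductive hypothesis), the composed function is over $n \cdot n^k = n^{k+1}$ variables, and identically for $g_k$. For (2), the key observation is a composition fact for resilience: if $G$ is $D$-resilient and $g'$ is $d'$-resilient (with $\E[g']=0$, so that $\boldb(g'(\bx))$ is mean-zero), then $G \circ g'$ is $((D+1)(d'+1) - 1)$-resilient. The reason is that a Fourier character $\chi_T$ on the $mn'$ input variables, when we expand $\E[(G\circ g')(x^1,\dots,x^m)\,\chi_T(x)]$, factors across the $m$ blocks; a block contributing a nonempty set of variables forces at least one relevant variable from that block's copy of $g'$, and since $g'$ is $d'$-resilient each such block must contribute at least $d'+1$ variables, while the pattern of which blocks are "touched" must itself be a set of size at least $D+1$ because $G$ is $D$-resilient — hence $|T| \ge (D+1)(d'+1)$, i.e.\ $G\circ g'$ is $((D+1)(d'+1)-1)$-resilient. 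Applying this with $G = g$ ($d$-resilient, so $D = d$) and $g' = g_{k-1}$ (which is $((d+1)^k - 1)$-resilient by induction, so $d'+1 = (d+1)^k$) gives resilience degree $(d+1)\cdot(d+1)^k - 1 = (d+1)^{k+1}-1$, as claimed; the base case $g_0 = g$ being $d = (d+1)^1 - 1$ resilient is immediate.

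For part (3), the inductive step is to bound $\dist(f_k, g_k) = \dist(f \circ f_{k-1}, g \circ g_{k-1})$ by a triangle inequality through the hybrid $f \circ g_{k-1}$:
\[
\dist(f \circ f_{k-1},\, g \circ g_{k-1}) \le \dist(f \circ f_{k-1},\, f \circ g_{k-1}) + \dist(f \circ g_{k-1},\, g \circ g_{k-1}).
\]
The second term is bounded by $\dist(f,g)$: composing on the inside with the same function $g_{k-1}$ can only be a $1$-Lipschitz-type operation in $\ell_1$, since $(f\circ h)(x) - (g\circ h)(x) = \E[\,(f-g)(\boldb(h(x^1)),\dots,\boldb(h(x^m)))\,]$ and averaging over the product of the $\boldb(h(x^j))$ distributions gives $\|f\circ h - g\circ h\|_1 \le \|f - g\|_1$ (indeed with equality-in-expectation-over-inputs structure, but the inequality suffices). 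The first term is where the influence factor enters: I claim $\dist(f\circ f_{k-1}, f\circ g_{k-1}) \le \Inf[f]\cdot \dist(f_{k-1}, g_{k-1})$. This is a standard "noise sensitivity / influence" argument: changing the inner function from $g_{k-1}$ to $f_{k-1}$ changes each of the $n$ coordinates fed into $f$ with probability at most $\dist(f_{k-1},g_{k-1})$ (in the appropriate coupled sense, accounting for the $\boldb(\cdot)$ randomization, which only helps since $\E[|\boldb(a) - \boldb(b)|] = |a-b| \le 2\dist$ when these are $\pm1$ expectations), and a union-bound / hybrid argument over coordinates weighted by the influences $\Inf_i[f]$ shows the output of $f$ changes with probability at most $\sum_i \Inf_i[f] \cdot \dist(f_{k-1},g_{k-1}) = \Inf[f]\cdot\dist(f_{k-1},g_{k-1})$. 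Combining, $\dist(f_k,g_k) \le \Inf[f]\cdot\dist(f_{k-1},g_{k-1}) + \dist(f,g)$, and unwinding this recurrence from the base case $\dist(f_0,g_0) = \dist(f,g)$ yields $\dist(f_k,g_k) \le \dist(f,g)\sum_{t=0}^k \Inf[f]^t$.

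The main obstacle I anticipate is making the "first term" bound fully rigorous through the $\boldb(\cdot)$ randomization — one must set up the right coupling between the randomness defining $f\circ f_{k-1}$ and $f\circ g_{k-1}$ so that the two differ in the $i$-th argument to $f$ only when either $f_{k-1}(x^i) \ne g_{k-1}(x^i)$ or the independent $\boldb$-coins disagree, and then argue that conditioned on the $\{\pm1\}$ inputs to $f$ being equal the outputs agree, so that a clean hybrid-over-coordinates argument weighted by $\Inf_i[f]$ goes through. The resilience-composition fact in part (2) also requires a careful Fourier expansion to confirm the multiplicative $(D+1)(d'+1)$ bound rather than an additive one; the crucial point is that a block which is "hit at all" must be hit in at least $d'+1$ coordinates (by resilience of the inner function), and the set of hit blocks has size at least $D+1$ (by resilience of the outer function), and these two constraints compound multiplicatively.
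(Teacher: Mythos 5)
Your proposal is correct and follows essentially the same route as the paper: the same triangle inequality through the hybrid $f\circ g_{k-1}$ (with the second term controlled by the uniformity of $\boldb(g_{k-1}(\bx))$ and the first by a noise-sensitivity/influence union bound), the same recurrence, and the same block-wise Fourier factorization for the resilience of a composition. Your multiplicative $((D+1)(d'+1)-1)$ form of the composition lemma is in fact exactly what the theorem's item (2) needs (and what the paper's proof of its Lemma \ref{lem:resilience-amplification} actually establishes, even though that lemma is stated with the slightly weaker $d_1d_2$).
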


%


The first claim is straightforward to verify, and so we focus on the second and third claims. For a Boolean-valued function $F:\bits^m\to\bits$ and $\delta \in [0,1]$, recall that the \emph{noise-sensitivity of $F$ at noise rate $\delta$} is defined as $\NS_\delta[F] := \Prx_{\y,\z}[F(\y) \ne F(\z)],$ where $\y$ is uniform in $\bits^m$ and $\z$ is obtained from $\y$ by independently flipping each of its coordinates with probability $\delta$.
\label{sec:amplification-lemma}
\begin{lemma}
Given $F,f : \bits^m \to \bits$ and $G,g : \bits^m\to [-1,1]$ where $\E[f(\bx)] = \E[g(\bx)] = 0$, we have
\[ \dist(F\circ f, G \circ g) \le \dist(F,G) + \NS_\delta[F], \]
where $\delta := \dist(f,g)$.
\end{lemma}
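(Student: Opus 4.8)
The plan is to realize both composed functions on a common probability space and then peel off the two error contributions with the triangle inequality. Let $\bx^1,\dots,\bx^m$ be independent and uniform, set $\y := (f(\bx^1),\dots,f(\bx^m))$, and let $\z := (\boldb(g(\bx^1)),\dots,\boldb(g(\bx^m)))$ using independent internal coins for the $\boldb$'s. Since $f$ is $\{-1,1\}$-valued we have $(F\circ f)(\bx^1,\dots,\bx^m)=F(\y)$, while by definition $(G\circ g)(\bx^1,\dots,\bx^m)=\E_{\mathrm{coins}}[G(\z)]$. Because $\E[f]=0$, each $f(\bx^i)$ is a uniform bit, so $\y$ is uniform on $\bits^m$; because $\E[g]=0$, each $\boldb(g(\bx^i))$ is a uniform bit, so $\z$ is uniform on $\bits^m$ as well.

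First I would push the absolute value inside the expectation over the coins by Jensen's inequality, and then apply the triangle inequality through $F(\z)$:
\[ \dist(F\circ f,G\circ g)=\frac12\,\E\big[\,\big|F(\y)-\E_{\mathrm{coins}}[G(\z)]\big|\,\big]\le \frac12\,\E\big[\,|F(\y)-F(\z)|\,\big]+\frac12\,\E\big[\,|F(\z)-G(\z)|\,\big]. \]
The second term equals $\dist(F,G)$ since $\z$ is uniform on $\bits^m$. The first term equals $\Pr[F(\y)\ne F(\z)]$ because both values lie in $\{-1,1\}$, so it suffices to show that this probability is exactly $\NS_\delta[F]$ for $\delta=\dist(f,g)$.

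For that I would identify the joint law of $(\y,\z)$. The pairs $(y_i,z_i)$ are independent across $i$, and conditioned on $\y$ each $z_i$ is an independent flip of $y_i$; writing $a:=\E_{\bx:f(\bx)=1}[g(\bx)]$ and $b:=\E_{\bx:f(\bx)=-1}[g(\bx)]$, one computes $\Pr[z_i\ne y_i\mid y_i=1]=(1-a)/2$ and $\Pr[z_i\ne y_i\mid y_i=-1]=(1+b)/2$. Since $\E[f]=0$ each of $f^{-1}(\pm 1)$ has measure $1/2$, so $\E[g]=0$ forces $a+b=0$ and hence both conditional flip probabilities equal the common value $(1-a)/2$; a one-line computation gives $\dist(f,g)=\frac12\E[|f-g|]=\frac12(1-a)$, so this common value is precisely $\delta$. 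Therefore $(\y,\z)$ is a uniform point together with its $\delta$-noisy copy, so $\Pr[F(\y)\ne F(\z)]=\NS_\delta[F]$, and adding the two bounds yields the lemma.

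I expect this last step — showing that conditioned on $\y$ the coordinates of $\z$ are genuine i.i.d. rate-$\delta$ flips rather than asymmetric or correlated noise — to be the only real obstacle, and it is exactly where the hypotheses $\E[f]=0$ and $\E[g]=0$ are needed: without them one obtains only an asymmetric-noise quantity that need not be bounded by $\NS_\delta[F]$. The rest is Jensen and the triangle inequality.
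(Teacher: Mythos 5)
Your proof is correct and follows essentially the same route as the paper's: a triangle inequality through the intermediate hybrid $F\circ g$ (your $F(\z)$), identifying the second term as $\dist(F,G)$ via uniformity of $\z$ and the first as $\NS_\delta[F]$ by realizing $(\y,\z)$ as a uniform point and its noisy copy. Your explicit check that the conditional flip probabilities given $y_i=1$ and $y_i=-1$ coincide (using $\E[f]=\E[g]=0$) makes rigorous a step the paper's proof leaves implicit, and it is indeed exactly where the mean-zero hypotheses enter.
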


\begin{proof}
We first apply the triangle inequality and note that
\[ \dist(F\circ f, G\circ g) \le \dist(F\circ f,  F\circ g) + \dist(F\circ g, G\circ g). \]
Since $\E[g(\bx)] = 0$, we have that $\la \boldb(g(\bx^1)),\ldots,\boldb(g(\bx^m)) \ra$ is uniformly distributed on $\bits^m$ when $\bx^1,\ldots,\bx^m$ are independently and uniformly distributed on $\bits^n$, and therefore the second distance on the right hand side is exactly $ \dist(F,G)$.  Since $\Pr[\boldb(f(x))\ne \boldb(g(x))] = \Pr[f(x) \ne \boldb(g(x))] = \frac1{2} |f(x) - g(x)|$ for all $x \in \bits^n$, it follows that $\Pr[\boldb(f(\bx))\ne \boldb(g(\bx))] = \frac1{2}\E[|f(\bx)-g(\bx)|] = \delta$ and so
\[ \dist(F\circ f, F\circ g) = \Prx_{\y,\z}[F(\y) \ne F(\z)], \]
where $\y$ is uniform in $\bits^m$ and $\z$ is obtained from $\y$ by independently flipping each of its coordinates with probability $\delta$.  This completes the proof, since the probability on the right hand side is precisely $\NS_\delta[F]$.~\end{proof}

Using the union bound, we have

\[
\NS_{\delta}[F] \leq \delta \sum_{i=1}^n \Prx_{\bx}[F(\bx) \neq F(\bx^{\oplus i})] = \delta \cdot \Inf[F] = \dist(f,g) \cdot \Inf[F],
\]
where $\bx^{\oplus i}$ is the string $\bx$ with the $i$-th bit flipped, and $\delta = \dist(f,g)$ as in the previous lemma.  This, along with a straightforward recursion, yields the following corollary.


\begin{corollary}
\label{cor:distance-amplification}
Let $f:\bits^n\to\bits$ and $g:\bits^n \to [-1,1]$ where $\E[f(\bx)] = \E[g(\bx)] = 0$, and suppose $g$ is $d$-resilient. Consider the recursively-defined functions where $f_k = f\circ f_{k-1}$ and $g_k = g\circ g_{k-1}$ for all $k \in \N$, and $f_0 = f$ and $g_0 = g$.  Then for $k\geq 1$:
\[ \dist(f_k,g_k) \le \dist(f,g) \sum_{t=0}^k \Inf[f]^t. \]
\end{corollary}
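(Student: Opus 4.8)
The plan is to combine the preceding lemma (on the distance between compositions) with the noise-sensitivity estimate displayed just above the corollary, and then unroll the resulting one-step recursion.

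First I would record two structural facts that let the lemma be re-applied at every level of the recursion. Since $f$ is Boolean, $\boldb(f(x))=f(x)$ with probability $1$, so $f\circ f_{k-1}$ is just the ordinary composition $f(f_{k-1}(x^1),\ldots,f_{k-1}(x^n))$; hence, by induction on $k$, every $f_k$ is Boolean-valued. Second, whenever the inner function has mean zero the tuple of $\pm 1$ bits fed into the outer function of a composition is uniformly distributed (exactly as used in the proof of the lemma), and a one-line induction on $k$ then shows $\E[f_k(\bx)]=\E[g_k(\bx)]=0$ for all $k$. These two facts guarantee that at each step $k$ the hypotheses of the lemma hold, with outer Boolean function $F=f$, outer bounded function $G=g$, and mean-zero inner functions $f_{k-1}$ (Boolean) and $g_{k-1}$ (bounded).

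Next I would apply the lemma with these choices to obtain
\[ \dist(f_k,g_k)=\dist(f\circ f_{k-1},\, g\circ g_{k-1}) \le \dist(f,g)+\NS_{\delta_{k-1}}[f], \qquad \delta_{k-1}:=\dist(f_{k-1},g_{k-1}), \]
and then invoke the union-bound inequality $\NS_{\delta_{k-1}}[f]\le \delta_{k-1}\cdot\Inf[f]$ stated just above. This yields the recursion $\dist(f_k,g_k)\le \dist(f,g)+\Inf[f]\cdot\dist(f_{k-1},g_{k-1})$. Writing $D:=\dist(f,g)$, $I:=\Inf[f]$ and $a_k:=\dist(f_k,g_k)$, we have $a_0=D$ and $a_k\le D+I\,a_{k-1}$, so induction on $k$ gives $a_k\le D\,(1+I+\cdots+I^k)=D\sum_{t=0}^k I^t$, which is exactly the claimed bound.

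There is no genuinely hard step here: once the lemma and the $\NS$ bound are available, the remainder is a geometric-series induction. The only point requiring care — and the nearest thing to an obstacle — is verifying that the lemma's hypotheses survive the recursion (mean-zero inner functions, and the composed object still being a legitimate Boolean/bounded function able to play the inner or outer role at the next level); the two short inductions above handle precisely this.
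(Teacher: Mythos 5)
Your proposal is correct and follows essentially the same route as the paper: apply the composition lemma with outer pair $(f,g)$ and inner pair $(f_{k-1},g_{k-1})$, bound $\NS_{\delta}[f]\le\delta\cdot\Inf[f]$ by the union bound, and unroll the resulting recursion into a geometric sum (the paper dismisses this last step as "a straightforward recursion," which you have simply written out). Your explicit verification that the mean-zero hypothesis propagates through the composition is a detail the paper leaves implicit, and it is handled correctly.
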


\begin{lemma}
\label{lem:resilience-amplification}
If $G : \bits^m \to [-1,1]$ is $d_1$-resilient and $g: \bits^n \to [-1,1]$ is $d_2$-resilient, then $G\circ g$ is $(d_1d_2)$-resilient.
\end{lemma}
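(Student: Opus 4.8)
The plan is to compute the multilinear Fourier expansion of $G\circ g$ over $\bits^{mn}$ directly and observe that every monomial that survives has degree exceeding $d_1d_2$. First I would write $G$ in multilinear form $G(y)=\sum_{T\subseteq[m]}\widehat G(T)\,\chi_T(y)$ with $\chi_T(y)=\prod_{j\in T}y_j$; since $G$ is $d_1$-resilient, only sets $T$ with $|T|\ge d_1+1$ contribute. The one elementary observation needed is that for a fixed input $(x^1,\dots,x^m)\in(\bits^n)^m$ the random bits $\boldb(g(x^1)),\dots,\boldb(g(x^m))$ are mutually independent with $\E[\boldb(g(x^j))]=g(x^j)$; pushing the expectation through the finite sum and factoring over the blocks gives
\[
(G\circ g)(x^1,\dots,x^m)\;=\;\sum_{T:\,|T|\ge d_1+1}\widehat G(T)\prod_{j\in T}g(x^j).
\]

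Next I would expand each factor $g(x^j)=\sum_{R\subseteq[n]}\widehat g(R)\,\chi_R(x^j)$, where $d_2$-resilience of $g$ means only sets $R$ with $|R|\ge d_2+1$ appear. Since the $m$ blocks of variables are pairwise disjoint, multiplying out $\prod_{j\in T}g(x^j)$ yields a sum of characters of the form $\chi_{\bigcup_{j\in T}R_j}$, where the $R_j$ live in disjoint blocks and each has $|R_j|\ge d_2+1$; such a character has total degree $\sum_{j\in T}|R_j|\ge|T|(d_2+1)\ge(d_1+1)(d_2+1)>d_1d_2$. Hence every monomial in the expansion of $G\circ g$ has degree strictly greater than $d_1d_2$, so $\widehat{G\circ g}(S)=0$ for all $|S|\le d_1d_2$, which is exactly $(d_1d_2)$-resilience. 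In fact the same computation shows the stronger statement that $G\circ g$ is $\big((d_1+1)(d_2+1)-1\big)$-resilient, and it is this stronger form that yields claim (2) of Theorem~\ref{thm:amplification-lemma} by a one-line induction: $g_0=g$ is $((d+1)^1-1)$-resilient, and if $g_{k-1}$ is $((d+1)^k-1)$-resilient then $g_k=g\circ g_{k-1}$ is $\big((d+1)(d+1)^k-1\big)$-resilient.

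There is no substantive obstacle; the only places that merit a sentence of care are (i) justifying the interchange of the expectation over $\boldb$ with the finite Fourier sum and the factorization $\E[\prod_{j\in T}\boldb(g(x^j))]=\prod_{j\in T}g(x^j)$, which is immediate from independence across blocks and linearity of expectation, and (ii) the bookkeeping that a product of characters supported on disjoint coordinate blocks is again a character whose degree is the sum of the individual degrees. Note that the condition $\E[g]=0$ is subsumed by $d_2$-resilience (it is the $|R|=0$ case) and is not needed as a separate hypothesis for this argument.
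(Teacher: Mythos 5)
Your proof is correct and is essentially the same argument as the paper's: both pass to the identity $(G\circ g)(x^1,\dots,x^m)=\sum_T\widehat G(T)\prod_{j\in T}g(x^j)$ via independence of the $\boldsymbol{b}(g(x^j))$ and then exploit the disjointness of the $m$ coordinate blocks, the only cosmetic difference being that the paper reduces to the case where $G$ is a single parity $\chi_T$ with $|T|>d_1$ and shows each fixed coefficient $\widehat{(G\circ g)}(S)$ vanishes, whereas you expand all of $\prod_{j\in T}g(x^j)$ and count degrees of surviving monomials. Your observation that the computation actually yields $\bigl((d_1+1)(d_2+1)-1\bigr)$-resilience is a worthwhile refinement: the paper's argument also gives this but states only $d_1d_2$, and it is the sharper form that is needed to deduce claim (2) of Theorem \ref{thm:amplification-lemma}.
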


\begin{proof}
By linearity of the Fourier transform it suffices to prove this claim when $G(x_1,\ldots,x_m) = \prod_{i\in T} x_i$ and $|T| > d_1$, the parity function over $d_1 + 1$ or more variables.  We begin by noting that
\begin{eqnarray*} (G \circ g)(x^1,\ldots,x^m) &=& \E\Bigg[\prod_{i\in T}\boldb(g(x^i))\Bigg]  \\
&=& \prod_{i\in T} \E[\boldb(g(x^i))] \\
&=& \prod_{i\in T} \bigg[ \frac{1+ g(x^i)}{2} - \frac{1-g(x^i)}{2} \bigg] \ = \ \prod_{i \in T} g(x^i).
\end{eqnarray*}
We view the $mn$ coordinates of the composed function $G \circ g$ as the disjoint union of $A_1 \cup \cdots \cup A_{m}$, where each $A_i$ has size $n$. With this notation in hand, every subset $S$ of the $mn$ coordinates may be viewed as the disjoint union $S_1 \cup \cdots \cup S_{m}$, where $A_j \subseteq S_j$ for all $j \in [m]$.  Fix $S = S_1 \cup \cdots \cup S_{m}$ of cardinality at most $d_1 d_2$, and recall that our goal is to show that $\widehat{(G \circ g})(S) = 0$. There exists at least one set $S_j$ where $|S_j | \le d_2$, and we assume without loss of generality that $|S_1| \le d_2$. Since $g$ is $d_2$-resilient (in particular, $\widehat{g}(S_1) = 0$), we see that indeed
\[ \widehat{(G\circ g)}(S) = \E\Bigg[\prod_{i \in T} g(\x^i) \prod_{j\in [m]}\prod_{\ell \in S_j} \x_\ell^j \Bigg] = \prod_{i\in T} \hat{g}(S_i) \prod_{j\notin T}\prod_{\ell \in S_j}\E[\x^j_\ell] = 0, \]
and the proof is complete.
\end{proof}

Combining Corollary \ref{cor:distance-amplification} and Lemma \ref{lem:resilience-amplification} yields Theorem \ref{thm:amplification-lemma}.

\subsubsection{Amplifying $\Tribes$ and $\CycleRun$}
We now apply Theorem~\ref{thm:amplification-lemma} to $\Tribes$ and $\CycleRun$.

\begin{theorem}
There is an explicit $\alpha$-approximately $d$-resilient monotone Boolean function $F$ where $\alpha = o_n(1)$ and $d = 2^{\Omega(\sqrt{\log n})}$.
\end{theorem}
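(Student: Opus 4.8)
The plan is to instantiate the amplification machinery of Theorem~\ref{thm:amplification-lemma} with $f = \Tribes$ on $n_0$ variables and $g$ the bounded $d_0$-resilient witness produced by Theorem~\ref{tribes-resilient}, and then iterate $k$ times for an appropriately chosen $k = k(n)$. By Theorem~\ref{tribes-resilient} we have $\dist(\Tribes, g) = \tfrac12\|\Tribes - g\|_1 = O(n_0^{-1/3})$ with $g$ being $d_0$-resilient for $d_0 = \Omega(\log n_0/\log\log n_0)$. Before applying the lemma we must arrange $\E[\Tribes(\bx)] = 0$ (the hypothesis of Theorem~\ref{thm:amplification-lemma}); the standard $\Tribes_{w,s}$ with $s = (\ln 2)2^w$ is only approximately balanced, so I would either pass to a slightly rebalanced variant (adjusting $s$, or dropping/duplicating a term) and note that this changes the low-degree Fourier weight bound of Proposition~\ref{prop:tribes-conc} by only a negligible amount, or absorb the $O(1/\sqrt{n_0})$ bias into the error term. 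One also needs $\E[g] = 0$, which holds since $g$ is $d_0$-resilient with $d_0 \ge 0$. I would also record that $\Inf[\Tribes_{w,s}] = O(w) = O(\log n_0)$, a standard fact.

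Next I would run the recursion. Setting $f_k = \Tribes \circ f_{k-1}$ and $g_k = g \circ g_{k-1}$, Theorem~\ref{thm:amplification-lemma} gives a monotone Boolean $f_k$ on $N := n_0^{k+1}$ variables (monotonicity is preserved under disjoint composition of monotone functions, and $\Tribes$ is monotone), a $((d_0+1)^{k+1}-1)$-resilient bounded witness $g_k$, and the distance bound $\dist(f_k, g_k) \le \dist(\Tribes, g)\sum_{t=0}^k \Inf[\Tribes]^t \le O(n_0^{-1/3}) \cdot (k+1)\cdot \Inf[\Tribes]^k$. So $f_k$ is $\alpha$-approximately $d$-resilient with
\[
\alpha = O\!\big(n_0^{-1/3}\, (k+1)\, (C\log n_0)^k\big), \qquad d = (d_0+1)^{k+1} - 1 = 2^{\Omega(k\log d_0)}.
\]
The task is now to pick $n_0$ and $k$ (as functions of the final dimension $N = n_0^{k+1}$) so that $\alpha = o_N(1)$ while $d = 2^{\Omega(\sqrt{\log N})}$. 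Writing $\log N = (k+1)\log n_0$, the resilience exponent is $\Omega(k\log d_0) = \Omega(k\log\log n_0 - k\log\log\log n_0) = \widetilde\Omega(k \log\log n_0)$; and $\alpha \le n_0^{-1/3+o(1)}$ provided $k \log\log n_0 = o(\log n_0)$, e.g. $k \log\log n_0 \le (\log n_0)^{1/2}$. A convenient balanced choice is $k \approx \log n_0/(\log\log n_0)^2$ (so the $(C\log n_0)^k$ factor is $n_0^{o(1)}$ and $\alpha = n_0^{-\Omega(1)} = o_N(1)$), which gives resilience exponent $\widetilde\Omega(k\log\log n_0) = \widetilde\Omega(\log n_0/\log\log n_0)$ and $\log N = (k+1)\log n_0 \approx (\log n_0)^2/(\log\log n_0)^2$, hence $\log n_0 \approx \sqrt{\log N}\cdot \log\log N$ up to constants, so $d = 2^{\widetilde\Omega(\sqrt{\log N})}$. (The precise polylog factors can be tuned; the statement only asks for $d = 2^{\Omega(\sqrt{\log n})}$, so a cruder split such as $k = \Theta(\sqrt{\log n_0})$ also suffices and I would use whichever is cleanest.)

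The main obstacle I anticipate is purely bookkeeping: controlling the blow-up of the distance term $(k+1)\Inf[\Tribes]^k$ against the shrinkage $n_0^{-1/3}$, i.e. making sure the composition does not amplify the error faster than the dimension grows, which forces $k$ to be somewhat smaller than $\log n_0$ and is exactly what caps the achievable resilience at $2^{\Theta(\sqrt{\log n})}$ rather than something larger. A secondary subtlety is the balancedness fix for $\Tribes$ and verifying that Theorem~\ref{tribes-resilient}'s Fourier-weight input (Proposition~\ref{prop:tribes-conc}) is robust to it; but since that proposition already has slack (the bound is $2(2\ln n)^{2d+4}/n$, with the relevant $d$ only $\Theta(\log n/\log\log n)$), a bias correction of order $1/\sqrt{n}$ is harmless. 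Everything else — monotonicity preservation, the resilience degree count, and the junta/closure hypotheses needed downstream — is immediate from the lemmas already established.
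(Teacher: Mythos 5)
Your proposal is correct and takes essentially the same route as the paper: apply Theorem~\ref{thm:amplification-lemma} with $f = \Tribes$ and $g$ the bounded resilient witness from Theorem~\ref{tribes-resilient}, iterate $k$ times with $k$ chosen so that the $\Inf[\Tribes]^k = (\Theta(\log n_0))^k$ error blow-up stays below the $n_0^{-1/3}$ shrinkage (the paper takes $k = c\log n/\log\log n$ for a small constant $c$; your choices also work), and read off $d = 2^{\Omega(\sqrt{\log N})}$ from $\log d \approx (k+1)\log d_0$ versus $\log N \approx (k+1)\log n_0$. Your observation that $\Tribes$ must first be made exactly balanced to satisfy the $\E[f]=0$ hypothesis of the amplification lemma is a legitimate detail the paper's proof silently elides, and your proposed fix is fine.
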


\begin{proof}
We apply Theorem~\ref{thm:amplification-lemma} with $f$ being $\Tribes$ and $g$ the bounded resilient function that results from applying Theorem~\ref{tribes-resilient}.  Since $\Inf[\Tribes] = \Theta(\log n)$ (see e.g.~\cite{KahnKL:88}), taking $k := c \log n/\log\log n$ where $c > 0$ is a sufficiently small universal constant gives functions $f_k,g_k$ over $N := n^k = 2^{O(\log^2 n/\log\log n)}$ variables, where
\[ \dist(f_k, g_k) = O(\Inf[\Tribes]^{k+1}\cdot  n^{-1/3}) = n^{-\Omega(1)} = o_N(1),\] and $g_k$ is $d$-resilient for
\[ d = \Omega((\log n/\log\log n)^{k+1}) = 2^{\Omega(\sqrt{\log N})}. \]
\end{proof}

Analogous calculations for $\CycleRun$ yield the following:

\begin{reptheorem}{cycle-run-resilient}
There is an explicit $\alpha$-approximately $d$-resilient monotone Boolean function $F$ where $\alpha = o_n(1)$ and $d = 2^{\Omega(\sqrt{\log n}/\log\log n)}$. Furthermore, $F$ is $\alpha$-close to a $d$-resilient function that is Boolean-valued as well.
\end{reptheorem}
\begin{proof}
We apply Theorem \ref{thm:amplification-lemma} with $f$ being $\CycleRun$ and $g$ the Boolean-valued resilient function that results from applying Theorem \ref{thm:cyclerun_main}.  Since $\Inf[\CycleRun] = O(\log n)$ (Theorem~\ref{cyclerun-influence}), we again take $k=c \log n/\log \log n$ where $c > 0$ is a sufficiently small universal constant to get Boolean-valued functions $f_k,g_k$ over $N = 2^{O(\log^2 n /\log \log n)}$ variables, where $\Pr[f_k(\bx)\neq g_k(\bx)] = \dist(f_k,g_k) = n^{-\Omega(1)}=o_N(1),$ and $g_k$ is $d$-resilient for
$d = n^{\Omega(1/\log \log n)} = 2^{O\left(\sqrt{\log N}/\log \log N\right)}$.
\end{proof}




\section{Conclusions}
We have demonstrated that complexity of agnostic learning over product distributions has a natural characterization via either of two dual notions: $\ell_1$-approximation by polynomials and approximate resilience. The notion of distance to resilience that we introduce appears to be interesting its own right. It is also better suited for proving lower bounds since a single close resilient function witnesses the hardness of agnostic learning.
Our proof of this result is relatively simple and remarkably, up to the choice of norms, is identical to Sherstov's powerful pattern matrix method in communication complexity \cite{Sherstov:11pmm}.

An application of our characterization and our second contribution is new and detailed picture of the hardness of agnostic learning of monotone functions over the uniform distribution. Some evidence that agnostic learning of several monotone classes is hard is already known and relies on cryptographic assumptions  \cite{KKMS:08,FeldmanGKP09,KlivansS09}. Yet the existing evidence is restricted to the very hard regime when $\OPT$ is near $1/2$ and does exclude learning with excess error of just $1\%$ that would suffice for most practical applications. We give the first general lower bounds for monotone functions that establish hardness in the low-error regime. We also describe simple and explicit monotone functions that are very close to being resilient.

Finally, we give general tools for analysis of approximate resilience. Such tools might find use for proving new agnostic learning lower bounds.

\section*{Acknowledgements}
Theorem \ref{thm:duality} and a special case of Theorem \ref{thm:lb-ar} for symmetric functions were first derived in V.F.'s collaboration with Pravesh Kothari. We thank Pravesh for his permission to include the result in this work.
We thank Justin Thaler for his help in deriving Theorem \ref{thm:duality} and illuminating discussions on the relationship of our characterization of agnostic learning to the pattern matrix method of Sherstov \cite{Sherstov:11pmm}. We thank Udi Wieder and Yuval Peres for helpful information about the $\CycleRun$ function, and Ryan O'Donnell, Johan H{\aa}stad, Rocco Servedio and Jan Vondr\'{a}k for helpful conversations.

\bibliographystyle{alpha}
\bibliography{agnosticmono}
\appendix
\section{Extension to Product Distributions}
\label{sec:product}
We now outline the extension of our characterization of the SQ complexity of agnostic learning to more general product distributions. Let $X$ be the domain of each individual variable, that is our leaning problem is defined over $X^n$. We will start with symmetric product distributions and let $\Pi$ be a distribution over $X$. Let $\B=\{B_0(x),B_1(x),\ldots\}$ be the basis obtained via Gram-Schmidt orthonormalization on the basis $1,x,x^2,\ldots$ with respect to the inner product $\la f,g \ra_\Pi = \E_\Pi[f(\x)g(\x)]$. By definition we obtain that the polynomial degree of $B_i$ is $i$ (for $i \leq |X|-1$). As special cases this process gives $\{1,  \frac{1-\mu \cdot x}{\sqrt{1-\mu^2}}\}$ basis if $X=\bits$ and $\mu = \E_\Pi [\x]$; Legendre polynomials when $X = [-1,1]$ and $\Pi$ is uniform; and Hermite polynomials when $X = \R$ and $\Pi$ is the Gaussian $N(1,0)$ distribution.

For $S \subseteq [n]$ and a function $t:S\to \N$ let  $\Phi_{S,t}(x) = \Pi_{i\in S} x_i^{t(i)}$ and
$\Psi_{S,t}(x) = \Pi_{i\in S} B_{t(i)}(x_i)$. For a finite $X$ we restrict the range of such $t$'s to $[|X|-1]$. Clearly, $\Psi$'s are orthonormal functions relative to the inner product $\la f,g \ra_{\Pi^n}=\E_\Pi[f(\x)g(\x)]$.

We now say that a function $g$ is $d$-resilient relative to $\Pi^n$ if for every $S \subseteq [n]$ of size at most $d$ and any function $t:S\to \N$, $\la g, \Psi_{S,t} \ra_{\Pi^n} = 0$. Note that equivalently this can be defined as $\la g, \Phi_{S,t} \ra_{\Pi^n} = 0$ for all  $S \subseteq [n]$ of size at most $d$ and $t:S\to \N$.

We say that a Boolean $f$ is $\alpha$-approximately $d$-resilient relative to $\Pi^n$ if there exists a $d$-resilient $g: X^n \to [-1,1]$ such that $\E_{\Pi^n}[|f(\x)-g(\x)|] \leq \alpha$. In the following discussion functions are over $X^n$ and all norms and inner products relative to $\Pi^n$.

We now describe generalizations of Theorems \ref{thm:kkms+boost}, \ref{thm:duality} and \ref{thm:lb-ar-general}. Let $\cP_{d,\ell}$ denote the class of polynomials where each monomial has at most $d$ different variables each of degree at most $\ell$; let $\cP_{d}=\cP_{d,\infty}$.
Note that by definition this is the span of $\{\Phi_{S,t}\}_{|S| \leq d, t:S \to [\ell]}$ but is also equal to the span of $\{\Psi_{S,t}\}_{|S| \leq d, t:S \to [\ell]}$. For a function $f$, let $\Delta_{\cP_{d,\ell}}(f) = \min_{p\in \mathcal{P}_{d,\ell} } \E_{\Pi^n}[|f(\x)-p(\x)|]$ and for a concept class $\C$, let $\Delta_{\cP_{d,\ell}}(\C) = \max_{f\in \mathcal{C}} \Delta_{\cP_{d,\ell}}(f)$.

The polynomial $\ell_1$ regression algorithm of Kalai \etal for agnostic learning \cite{KKMS:08} applies to this general setting and gives the following bound.
\begin{theorem}[\cite{KKMS:08}] \label{thm:kkms+boost-product}
Let $\C$ be a concept class over $X^n$ and fix $d$ and $\ell$. There exists a SQ algorithm which for any $\eps > 0$ agnostically learns $\C$ over $\Pi^n$ with excess error $\Delta_{\cP_{d,\ell}}(\C)/2 + \eps$ and has complexity $\poly((n\ell)^d ,1/\eps)$.
\end{theorem}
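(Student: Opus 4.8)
The plan is to extend the proof of Theorem~\ref{thm:kkms+boost} to the product-distribution setting by re-examining the polynomial $\ell_1$ regression algorithm of \cite{KKMS:08} over the orthonormal basis $\{\Psi_{S,t}\}$ rather than the parity basis. First I would recall the core structure of the argument: given the target distribution $(\Pi^n,g)$ and a best classifier $c^\ast \in \C$ achieving error $\OPT$, one writes $c^\ast = p^\ast + r$ where $p^\ast \in \cP_{d,\ell}$ is the best $\ell_1$ approximation of $c^\ast$ and $\|c^\ast - p^\ast\|_1 \le \Delta_{\cP_{d,\ell}}(\C)$ by definition. The algorithm minimizes $\E_{\Pi^n}[|p(\x) - \y|]$ over $p \in \cP_{d,\ell}$; since $p^\ast$ is a feasible solution and $\E[|p^\ast - \y|] \le \E[|p^\ast - c^\ast|] + \E[|c^\ast - \y|] \le \Delta_{\cP_{d,\ell}}(\C) + \OPT$, the minimizer $\hat p$ satisfies the same bound. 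The standard rounding step --- output $h(x) = \sign(\hat p(x) - \theta)$ for a random threshold $\theta \in [-1,1]$ --- converts $\ell_1$ closeness of a bounded real function to a Boolean function into agreement probability, losing a factor of $2$: $\Pr[h(\x)\ne \y] \le \E[|\hat p - \y|]/1 \le $ (after the $\theta$ averaging) $\OPT + \Delta_{\cP_{d,\ell}}(\C)/2 + \eps$, which is the claimed excess error.

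The second component is the SQ implementation. Here I would argue that the LP defining the $\ell_1$ regression problem has $\poly((n\ell)^d)$ variables (the coefficients $p_{S,t}$ for $|S|\le d$, $t:S\to[\ell]$, of which there are at most $\binom{n}{d}\ell^d = \poly((n\ell)^d)$) and that its objective $\E_{\Pi^n}[|p(\x) - \y|]$ together with the feasibility/boundedness constraints can be evaluated to any desired accuracy using statistical queries: each query $\phi(x,y)$ is a bounded function, and the magnitudes of the basis functions $\Psi_{S,t}$ are under control since the $B_i$ are fixed polynomials depending only on $\Pi$ (for finite $X$ they are genuinely bounded; for continuous $X$ one truncates, exactly as \cite{KKMS:08} handle Legendre/Hermite expansions). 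Then one invokes the generic fact, cited in the excerpt (see footnote referencing \cite{lovasz1987algorithmic,FeldmanPV:13}), that a linear program can be solved approximately via the Ellipsoid method using only approximate evaluations of the objective, so the whole procedure runs as an SQ algorithm with complexity $\poly((n\ell)^d, 1/\eps)$.

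Concretely I would organize the write-up as: (i) state the $\ell_1$ regression LP over $\cP_{d,\ell}$ and note $\Delta_{\cP_{d,\ell}}(\C)$ bounds the value attainable by a polynomial approximating the optimal $c^\ast$; (ii) do the randomized-threshold rounding and the factor-$2$ loss; (iii) bound the number of LP variables by $\poly((n\ell)^d)$; (iv) observe the objective and constraints are SQ-computable and apply the Ellipsoid-via-SQ machinery. The main obstacle I expect is purely technical rather than conceptual: controlling the $\ell_\infty$ norms of the basis polynomials $B_i$ and hence of the candidate polynomials $p$ when $X$ is infinite (e.g.\ Gaussian), so that the statistical queries remain $[-1,1]$-valued after suitable normalization and truncation --- but this is exactly the same issue already resolved in \cite{KKMS:08} for their halfspace-over-Gaussian application, so I would simply adapt their truncation argument. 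For finite $X$ (the case most relevant to the rest of the paper) this difficulty disappears entirely and the bound $\poly((n\ell)^d,1/\eps)$ is immediate.
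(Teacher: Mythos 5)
Your proposal is correct and follows exactly the route the paper relies on: the paper gives no proof of this theorem, citing it directly from \cite{KKMS:08} together with its earlier footnote observing that the $\ell_1$-regression LP is SQ-implementable via approximate Ellipsoid optimization, and your reconstruction (regression over the $\{\Psi_{S,t}\}$ basis, randomized-threshold rounding with the factor-$2$ loss, variable count $(n\ell)^{O(d)}$, truncation of the basis polynomials for infinite $X$) is precisely the standard argument being invoked. One cosmetic correction: for Boolean $c^\ast$ and labels $\y$ one has $\E[|c^\ast-\y|]=2\,\OPT$, so your intermediate bound should read $\E[|p^\ast-\y|]\le \Delta_{\cP_{d,\ell}}(\C)+2\,\OPT$; the subsequent halving from the threshold rounding then yields exactly the claimed excess error $\Delta_{\cP_{d,\ell}}(\C)/2+\eps$.
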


Our SQ lower bound can be easily seen to generalize to the following statement.
\begin{theorem}\label{thm:lb-ar-general-product}
Let $\C$ be a concept class over $X^n$ closed under renaming of variables and assume that $\C$ contains a $k$-junta which is $\alpha$-approximately $d$-resilient over $\Pi^n$. Then any SQ algorithm for agnostically learning $\C$ over $\Pi^n$ with excess error of at most $\frac{1- \alpha}{2} -m^{-1/3}$ has complexity of at least $m^{1/3}$, where $m = {\cal M}(n,k,d)$. In particular, for any constant $\delta>0$ and $k = n^{1/2+\delta}$, we have $m = n^{\Omega(d)}$.
\end{theorem}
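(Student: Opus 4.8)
The plan is to lift the proof of Lemma~\ref{lem:resilience-sq} and Theorem~\ref{thm:lb-ar-general} to the product setting, replacing Fourier characters by the orthonormal basis $\{\Psi_{S,t}\}$ and ``low-degree Fourier weight'' by the notion of $d$-resilience relative to $\Pi^n$. First I would reduce to a clean core function. Let $c\in\C$ be the $k$-junta that is $\alpha$-approximately $d$-resilient over $\Pi^n$, witnessed by a $d$-resilient $g:X^n\to[-1,1]$ with $\E_{\Pi^n}[|c-g|]\le\alpha$. Since $c$ depends on only $k$ coordinates, replace $g$ by its average over the remaining $n-k$ coordinates: this preserves boundedness, preserves $d$-resilience (averaging out a coordinate only zeros out basis functions $\Psi_{S,t}$ that involve it, and never creates a nonzero low-order one), and does not increase $\E_{\Pi^n}[|c-g|]$ by Jensen's inequality. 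So without loss of generality $c$ and $g$ are juntas on the same $k$ coordinates; relabel these as $[k]$ and pass to the cores $\tilde c:X^k\to\{-1,1\}$ and $\tilde g:X^k\to[-1,1]$, where $\tilde g$ is $d$-resilient over $\Pi^k$ and $\E_{\Pi^k}[|\tilde c-\tilde g|]\le\alpha$.

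Next I would build the orthogonal family. Fix an $(n,k,d)$-design $S_1,\dots,S_m$ with $m={\cal M}(n,k,d)$, and for each $i$ let $c_{S_i}$ and $g_{S_i}$ be the copies of $\tilde c$ and $\tilde g$ planted on the coordinates $S_i$; closedness of $\C$ under renaming gives $c_{S_i}\in\C$. The $g_{S_i}$ are pairwise orthogonal in $\langle\cdot,\cdot\rangle_{\Pi^n}$: expanding $g_{S_i}$ in $\{\Psi_{S,t}\}$, a coordinate outside $S_i$ contributes a factor $B_{t(j)}$ with $t(j)\ge1$, which has $\Pi$-mean $0$ (as $B_0\equiv1$ and the $B_i$ are orthonormal), so only terms with $S\subseteq S_i$ survive; $d$-resilience of $\tilde g$ kills those with $|S|\le d$. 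Hence $g_{S_i}$ is supported on $\{\Psi_{S,t}:S\subseteq S_i,\ |S|\ge d+1\}$, and for $i\ne j$ a shared basis element would require $S\subseteq S_i\cap S_j$ with $|S|\ge d+1$, impossible since $|S_i\cap S_j|\le d$. Thus $\langle g_{S_i},g_{S_j}\rangle_{\Pi^n}=0\le 1/m$, and $\|g_{S_i}\|_2\le\|g_{S_i}\|_\infty\le1$.

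Finally I would run the learner against each planted target and invoke Theorem~\ref{thm:sq-lower-bound}. If $A$ is an SQ agnostic learner for $\C$ over $\Pi^n$ with excess error at most $\frac{1-\alpha}{2}-m^{-1/3}$, then on the distribution $(\Pi^n,g_{S_i})$ we have $\OPT_\C(\Pi^n,g_{S_i})\le\Pr[c_{S_i}(\bx)\neq\boldsymbol{y}]=\frac12\E_{\Pi^n}[|c_{S_i}-g_{S_i}|]=\frac12\E_{\Pi^k}[|\tilde c-\tilde g|]\le\alpha/2$, so $A$ outputs a hypothesis with error at most $\alpha/2+\frac{1-\alpha}{2}-m^{-1/3}=\frac12-m^{-1/3}$; that is, $A$ weakly learns every $g_{S_i}$. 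Applying Theorem~\ref{thm:sq-lower-bound} to the $\frac1m$-pairwise-orthogonal bounded family $\{g_{S_i}\}$ forces the SQ complexity of $A$ to be at least $m^{1/3}$. The ``in particular'' clause then follows by substituting the bound~(\ref{eqn:design}) on ${\cal M}(n,k,d)$, exactly as in Theorem~\ref{thm:lb-ar-general}.

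I expect the only step needing real care to be the orthogonality claim: one must verify that $d$-resilience relative to $\Pi^n$ (defined via $\Psi_{S,t}$, equivalently via the monomials $\Phi_{S,t}$) plays precisely the role that ``zero low-degree Fourier weight'' played in the uniform case, so that the support argument for the planted functions goes through verbatim, and that the averaging step genuinely preserves $d$-resilience when passing from $\Pi^n$ to $\Pi^k$. Both of these reduce to the product structure of $\Pi^n$ (independence of coordinates and the fact that each $B_i$ with $i\ge1$ is $\Pi$-mean-zero), so neither is a genuine obstacle — the rest of the argument is a routine transcription of Lemma~\ref{lem:resilience-sq}.
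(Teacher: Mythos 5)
Your proposal is correct and follows exactly the route the paper intends: the paper proves this theorem by asserting that the proof of Lemma~\ref{lem:resilience-sq} carries over to $\Pi^n$ essentially verbatim (using that $B_t$ is $\Pi$-mean-zero for $t\ge 1$ and that the marginal of $\Pi^n$ on each design set is $\Pi^k$), which is precisely what you spell out. Your explicit averaging step to make the witness $g$ a junta on the same coordinates as $c$ is a detail the paper leaves implicit, and it is handled correctly.
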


Finally, the duality is also easy to verify in this case.
\begin{theorem}
\label{thm:duality-product}
For $f:X^n \to \bits$ and $0\leq d\leq n$ let $\alpha$ denote the $\ell_1$ distance of $f$ to the closest $d$-resilient bounded function. Then $\Delta_{\cP_d}(f) = 1-\alpha$.
\end{theorem}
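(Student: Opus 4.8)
The plan is to mimic the linear-programming duality argument used to prove Theorem \ref{thm:duality}, replacing the uniform distribution by $\Pi^n$ and the Fourier characters $\{\chi_S\}_{|S|\le d}$ by the orthonormal system $\{\Psi_{S,t}\}_{|S|\le d}$ (equivalently, the monomials $\{\Phi_{S,t}\}_{|S|\le d}$), which by definition span exactly $\cP_d$ over $X^n$. The three facts the original proof relies on all survive this substitution: (i) the pointwise identity $|f(x)-g(x)| = 1 - f(x)g(x)$ whenever $f(x)\in\{-1,1\}$ and $g(x)\in[-1,1]$, so $\|f-g\|_1 = 1-\E_{\Pi^n}[fg]$ for \emph{any} distribution; (ii) the equivalence, recorded right after the definition of $d$-resilience relative to $\Pi^n$, between $g$ being $d$-resilient and $\la g, \Psi_{S,t} \ra_{\Pi^n}=0$ for all $|S|\le d$ and all $t$; and (iii) LP/convex duality.

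First I would reduce the quantity of interest to a linear program. Since $\|f-g\|_1 = 1-\E_{\Pi^n}[fg]$, minimizing the $\ell_1$ distance of $f$ to a $d$-resilient bounded $g$ is the same as maximizing $\E_{\Pi^n}[fg]$, so $1-\alpha$ is the optimum of the primal program in the variables $\{g(x)\}_{x\in X^n}$:
\begin{alignat*}{2}
\text{max } & \E_{\Pi^n}[f(\bx)g(\bx)] \quad \quad & \\
\text{subject to } & \la g, \Psi_{S,t} \ra_{\Pi^n} = 0 \quad \quad & \forall\ |S|\le d,\ t:S\to\N \\
\text{and } & |g(x)|\le 1 & \quad \quad \forall\ x\in X^n \\
\end{alignat*}
When $X$ is finite one restricts $t$ to $[|X|-1]^S$, and then (by univariate interpolation, since $1,x,\dots,x^{|X|-1}$ already span all functions $X\to\R$) this constraint set is literally the set of $d$-resilient bounded functions and the program is an ordinary finite LP.

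Next I would dualize: a free multiplier $p_{S,t}$ for each resilience equality and nonnegative $\lambda^\pm(x)$ for the two box inequalities at each $x$. Eliminating $\lambda^\pm(x)$ forces $\lambda^+(x)+\lambda^-(x)$ to equal $\big| f(x) - \sum_{|S|\le d,t} p_{S,t}\Psi_{S,t}(x) \big|$ weighted by the mass $\Pi^n$ places at $x$, and since $\{\Psi_{S,t}\}_{|S|\le d}$ spans $\cP_d$ the polynomial $p:=\sum_{|S|\le d,t}p_{S,t}\Psi_{S,t}$ ranges over all of $\cP_d$; hence the dual objective is exactly $\min_{p\in\cP_d}\E_{\Pi^n}[\,|f-p|\,]=\Delta_{\cP_d}(f)$. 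Strong duality — finite LP duality for finite $X$, and the norm-duality theorem of Ioffe and Tikhomirov \cite{IoffeTihkomirov:68} for general $X$ (exactly as in Theorem \ref{thm:duality}) — identifies the primal optimum $1-\alpha$ with the dual optimum $\Delta_{\cP_d}(f)$, which is the claim.

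The only place where passing from the uniform distribution to a general product distribution matters, and hence the one step I would be careful with, is verifying that $\{\Psi_{S,t}\}_{|S|\le d}$ (equivalently $\{\Phi_{S,t}\}_{|S|\le d}$) genuinely spans $\cP_d$ over $X^n$, so that the dual is an optimization over \emph{all} degree-$d$ polynomials rather than a proper subspace; this is immediate from the Gram--Schmidt construction of the $B_i$'s together with the definition of $\cP_d$ as the span of the $\Phi_{S,t}$. For an infinite domain $X$ one must additionally ensure that the infimum defining $\Delta_{\cP_d}(f)$ is attained and that strong duality holds for the resulting infinite-dimensional program, but this is precisely what Ioffe--Tikhomirov provides and requires no change from the uniform case, so no genuinely new obstacle arises.
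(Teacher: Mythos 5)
Your proposal is correct and follows essentially the same route the paper takes: the paper proves the uniform case (Theorem \ref{thm:duality}) by exactly this LP/norm-duality argument and then asserts the product-distribution version with no further proof, so your careful adaptation — swapping $\chi_S$ for the orthonormal system $\{\Psi_{S,t}\}$, noting that its span is $\cP_d$ by the Gram--Schmidt construction, and invoking Ioffe--Tikhomirov for infinite $X$ — is precisely the intended verification.
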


Now the upper bound is $(n\ell)^{O(d)}$ with excess error $\Delta_{\cP_{d,\ell}}(\C)/2$ and the lower bound is $n^{\Omega(d)}$ with excess error of $\Delta_{\cP_d}(\C)/2$ (if $k$ is not too large). Therefore tightness depends on how fast $\Delta_{\cP_{d,\ell}}(\C)$ approaches $\Delta_{\cP_{d}}(\C)$ as $\ell$ grows. Note that if $\C$ contains only functions that depend on at most $k$-variables then convergence of $\Delta_{\cP_{d,\ell}}(\C)$ to $\Delta_{\cP_{d}}(\C)$ depends only on $k$ (and not on $n$) and also as long as $\ell = n^{O(1)}$ the bounds are still within a polynomial factor.

\medskip
\noindent {\bf Non-symmetric product distributions.}
Now let the domain be $X_1 \times X_2 \times \cdots \times X_n$ and the product distribution be $\Pi = \Pi_1 \times \Pi_2 \times \cdots \times\Pi_n$. We first note that the upper bound in Thm.~\ref{thm:kkms+boost-product} and the duality hold even if the distribution is not symmetric (that is different variables might have different marginal distributions). Therefore we only need to adapt Thm.~\ref{thm:lb-ar-general-product} to this setting.

Our lower-bound construction requires closed-ness with respect to renaming of variables. That would not suffice if different variables have different marginal distributions. For example $\ell_1$ distance to polynomials clearly depends on the marginal distributions of variables and therefore we can no longer claim that the analogue of $\|f_{S_i} -g_{S_i} \|_1 = \|f -g \|_1$ holds in this setting (as we did in the proof of Lemma \ref{lem:resilience-sq}). Therefore we will need an additional assumption. Let $S$ be the set of variables of the optimal (in terms of distance to $d$-resilience) $k$-junta. We will assume that for every variable $i \in S$, there are many other variables that have the same marginal distribution as variable $i$. Specifically, there exists a set $I_i \subseteq [n]$, such that for $j_1,j_2 \in I_i$, $\Pi_{j_1} = \Pi_{j_2}$ and the size of $I_i$ is at least $s$. In addition, we need $\C$ to be closed under renaming of variables, where a variable that is in $I_i$ is renamed to another variable in $I_i$.

 Now we can construct a family of ordered sets $S_1,\ldots,S_m$ (each of size $k$) such that the intersection of any two sets is at most $d$, and the $i$'th element of each set $S_j$ (recall that we think of $S_j$ as an ordered set) is from $I_i$. This means that $X$ and $\Pi$ restricted to variables in $S_j$ (ordered in the same way as they are in $S_j$) are exactly the same as $X$ and $\Pi$ restricted to variables in $S$. This means that the proof of the lower bound in Lemma \ref{lem:resilience-sq} applies to this setting, as before essentially verbatim. The complexity is now determined by the size of the largest family of sets with the property we described. By the same argument as in eq.(\ref{eqn:design}) there exists a family of size:
$$ \frac{s^k}{{k \choose d} s^{k-d}} = \Omega\left(\left(\frac{sd}{k}\right)^d\right).$$ This family has size $n^{\Omega(d)}$ for $s = n^{\Omega(1)}$ and a large range of parameters $k$ and $d$ (e.g.~ $d = k^{1-\Omega(1)}$).

\section{Bound on the low-degree Fourier weight of $\Tribes$}\label{app:tribes}
The $\Tribes_{w,s} : \{-1,1\}^{sw} \into \{-1,1\}$ function is the disjunction of $s$ disjoint conjunctions, each of width $w$.
For a set $T\subseteq [n]$ let $T_i$ denote the intersection of $T$ with the variables in the $i$-th conjunction. We use the following expressions proved in \cite{Mansour:95}:
\begin{equation}
\widehat{\Tribes_{w,s}}(T) =
\begin{cases}
2(1-2^{-w})^s - 1 &
T = \emptyset \\
2(-1)^{k+|T|}2^{-kw}(1-2^{-w})^{s-k} &
k = \#\{i : T_i \neq \emptyset \} > 0
\end{cases}
\label{tribes-coefficients}
\end{equation}

Recall that we write $\Tribes$ to denote $\Tribes_{w,s}$ with $s = (\ln 2)2^w$; thus $w \approx \log n- \log n \ln n$ and $s \approx n/(\log n)$.

\begin{proposition}  For any $d\leq w$ the Fourier weight of $\Tribes$ on degree $d$ and below is at most
\[
\sum_{|S| \leq d} \widehat{\Tribes}(S)^2 \leq 2 \dfrac{(2 \ln n)^{2d+4}}{n}.
\]
\end{proposition}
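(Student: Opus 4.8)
The plan is to combine the exact formula \eqref{tribes-coefficients} for the Fourier coefficients of $\Tribes_{w,s}$ with the parameter choice $s=(\ln 2)2^{w}$, which yields the two identities we will lean on: $s\cdot 2^{-w}=\ln 2$ and $2^{-w}=(w\ln 2)/n$. I will split $\sum_{|S|\le d}\widehat{\Tribes}(S)^{2}$ into the contribution of $S=\emptyset$ and the contribution of the nonempty sets, and estimate each.

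For $S=\emptyset$: writing $p=(1-2^{-w})^{s}$, the bounds $-x-x^{2}\le\ln(1-x)\le -x$ on $[0,\tfrac12]$ applied with $x=2^{-w}$ (and then multiplied by $s$) give $p=\tfrac12 e^{-\Theta(2^{-w})}$, so $|\widehat{\Tribes}(\emptyset)|=|2p-1|=O(2^{-w})=O((\log n)/n)$, hence $\widehat{\Tribes}(\emptyset)^{2}=O((\log n/n)^{2})$, which is negligible against the target $(2\ln n)^{2d+4}/n$. For the nonempty sets I will group them by the number $k\ge 1$ of conjunctions that $S$ intersects. By \eqref{tribes-coefficients}, such an $S$ has $|\widehat{\Tribes}(S)|=2\cdot 2^{-kw}(1-2^{-w})^{s-k}\le 2\cdot 2^{-kw}$ (using $k\le s$), and since each intersected block contributes at least one coordinate we have $k\le|S|\le d$. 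The number of $S$ with $|S|\le d$ intersecting exactly $k$ blocks is at most $\binom{s}{k}\binom{d}{k}w^{d}$: choose the $k$ blocks in $\binom{s}{k}$ ways, and then the number of ways to pick nonempty subsets of these blocks of total size at most $d$ is $\sum\prod_{j}\binom{w}{a_{j}}$ over positive compositions $(a_1,\dots,a_k)$ with $\sum_j a_j\le d$; here $\prod_j\binom{w}{a_j}\le w^{\sum_j a_j}\le w^{d}$ and the number of such compositions is $\sum_{t=k}^{d}\binom{t-1}{k-1}=\binom{d}{k}$ by the hockey-stick identity.

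Putting these together,
\[
\sum_{\substack{|S|\le d\\ S\ne\emptyset}}\widehat{\Tribes}(S)^{2}\ \le\ 4w^{d}\sum_{k=1}^{d}\binom{s}{k}\binom{d}{k}2^{-2kw}.
\]
The crucial point is that the geometric decay of $2^{-2kw}$ dominates the growth of $\binom{s}{k}$: indeed $\binom{s}{k}2^{-2kw}\le (s\cdot 2^{-2w})^{k}=((\ln 2)2^{-w})^{k}\le 2^{-w}$ for every $k\ge 1$. Hence the double sum is at most $4w^{d}\,2^{-w}\sum_{k}\binom{d}{k}\le 4\cdot 2^{d}w^{d}\,2^{-w}$, and substituting $2^{-w}=(w\ln2)/n$ together with $w\le\log_{2}n=(\ln n)/\ln 2$ (valid for large $n$) bounds this by $O\!\big((2\ln n/\ln 2)^{d}(\ln n)/n\big)$, which is at most $(2\ln n)^{2d+4}/n$ for $n$ large. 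Adding back the $O((\log n/n)^{2})$ from $S=\emptyset$ gives the claimed bound $2(2\ln n)^{2d+4}/n$, with room to spare.

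The main obstacle is the counting step: the blunt approach of bounding the number of $S$ with $|S|\le d$ by $n^{O(d)}$ and every nonempty coefficient by $2\cdot 2^{-w}$ is far too lossy (it yields something like $n^{d-2}$ rather than $n^{-1}$). One must instead exploit that the coefficient shrinks like $2^{-kw}$ with the number $k$ of intersected blocks and cancel each factor of $2^{-2w}$ against a factor of $s$ via the identity $s\cdot 2^{-w}=\ln 2$; the remaining estimates are routine manipulation of logarithms and binomial coefficients, and the hypothesis $d\le w$ only serves to keep the block-level bookkeeping clean.
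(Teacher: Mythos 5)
Your proof is correct and follows essentially the same route as the paper's (Appendix B): both invoke Mansour's exact coefficients, group the sets by the number $k$ of conjunctions they meet, and win by cancelling the $\approx s^k$ count against the $\approx 2^{-2kw}$ decay of the squared coefficient, leaving a geometric sum dominated by its $k=1$ term. Your version is slightly more careful in that it explicitly disposes of the $S=\emptyset$ term, which the paper's sum over $k\ge 1$ silently omits.
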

\begin{proof}
The proof follows Ryan O'Donnell's thesis, pages $66-67$ \cite{Odonnell:03thesis}.  Using the calculations above, we have that for any $T \subseteq [n]$ with $k=\#\{i: T_i \neq \emptyset\}:$
\[
\widehat{\Tribes}(T)^2 \leq \left( \dfrac{2 \ln n}{n} \right)^{2k}.
\]
For any $k$, the number of coefficients that have degree at most $d$ and intersect $k$ conjunctions is at most
$$\sum_{j=0}^d \binom{s}{k} \binom{kw}{j} \leq (d+1) s^k (kw+1)^d \leq n^k w^{2d+2}.$$
The last inequality holds because $s\leq n$ and $k\leq d$ (and we assume that $d\leq w$).
Summing over $1 \leq k\leq d,$ we obtain:
\begin{align*}
\sum_{|T| \leq d} \widehat{\Tribes}(T)^2 & \leq \sum_{k=1}^d n^k w^{2d+2} \left( \dfrac{2 \ln n}{n} \right)^{2k} \\
& \leq w^{2d+2} \sum_{k=1}^d \left( \dfrac{(2 \ln n)^2}{n} \right)^k \\
& \leq 2w^{2d+2} \dfrac{(2 \ln n)^2}{n} \\
& \leq 2\dfrac{(2 \ln n)^{2d+4}}{n},
\end{align*}
 where we used $w \leq 2 \ln n$ in the last step.
\end{proof}

\section{Proofs concerning $\CycleRun$}
\label{ap:cycle-run} 

To aid us in proving properties of $\CycleRun$, we will require several bounds involving Gaussian approximations.  Specifically, we will make use of the functions $f_t : \{-1,1\}^n \into \{-1,0,1\}$ that appear in \cite{OWimmer:09}.  We define $|x| = \sum_{i=1}^n x_i$ for a string $x \in \{-1,1\}^n$.  These functions $f_t$ are defined so that

\[
f_t(x) =
\begin{cases}
1 & \mathrm{if \;} |x| > t\sqrt{n} \\
0 & \mathrm{if \;} -t\sqrt{n} \leq |x| \leq t\sqrt{n} \\
-1 & \mathrm{if \;} |x| < -t\sqrt{n}
\end{cases}
\]

\ignore{
Select $t$ to be the smallest $t$ such that $f_t$ satisfies $\Pr[f_t(\bx) \neq 0] \geq \Pr[(f - g)(\bx) \neq 0] = \Pr[f(\bx) \neq g(\bx)]$.  We then have $K \log n \leq \sum_{i \in [n]} \widehat{f-g}(\{i\}) \leq \sum_{i \in [n]} \widehat{f_t}(\{i\})$.
}

We use three properties (implicitly) appearing in \cite{OWimmer:09} that follow from error estimates for the Central Limit Theorem \cite{Feller}: for large enough $n$ and $\sqrt{\log n}/100 < t < n^{1/10}$, we have

\begin{align}
\phi(t)\sqrt{n}/3 & \leq \Inf(f_t) \leq 3\phi(t)\sqrt{n} \label{eqn:prop1} \\
\phi(t)/(3t) & \leq \Pr_{\bx}[f_t(\bx) \neq 0] \leq 3\phi(t)/t \label{eqn:prop2} \\
\Pr[|\bx| = t] & \leq 4\phi(t)/\sqrt{n} \label{eqn:prop3}.
\end{align}

where $\phi$ is the probability density function of the standard Gaussian distribution: $\phi(u) = \frac{1}{2\pi} \exp(-u^2/2)$; and $\Inf(f_t) = \E_{\bx}[f_t(\bx) \cdot |\bx|] = \sum_{i \in [n]} \widehat{f_t}(\{i\})$.  We note that $\Inf(g) = \E_{\bx}[g(\bx) \cdot |\bx|] = \sum_{i \in [n]} \widehat{g}(\{i\})$ for a monotone Boolean function $g : \{-1,1\}^n \into \{-1,1\}$.

\begin{definition}
For every $x \in \{-1,1\}^n$, define the set $\Shift_x$ to contain the following:

\begin{itemize}
\item $x^\alpha = x_{(1 + \alpha \mod n)}\ldots x_{(n + \alpha \mod n)}$, for $0 \leq \alpha \leq n-1$.
\item $-x^\alpha = -x_{(1 + \alpha \mod n)} \ldots -x_{(n + \alpha \mod n)}$, for $0 \leq \alpha \leq n-1$.
\end{itemize}
\end{definition}             

\ignore{Note that for $x \neq 1^n, -1^n$, $|\Shift_x|$ always divides $2n$.}  Note that $|\Shift_x|$ always divides $2n$, and if the Hamming weight of $x$ is relatively prime to $n$, then $|\Shift_x| = 2n$.  Because $\CycleRun$ is odd and invariant under cyclic shifts, $\CycleRun$ is $1$ on exactly half the points of $\Shift_x$.

\ignore{
\begin{definition}
For every $\bx = x_1\ldots x_n \in \{-1,1\}^n$, define $|\bx| = \sum_{i \in [n]} x_i$.
\end{definition}
}

\begin{reptheorem}{thm:cyclerun_main}
There exist universal constants $\const_1, \const_2$ such that for every $n \geq \const_2$, 
there exists a Boolean function $f : \{-1,1\}^n \to \{-1, 1\}$ such that:
\begin{enumerate}
\item For all $S \subseteq [n]$ such that $|S| \leq 1$, $\widehat{f}(S) = 0$,  and 
\item $\Ex_{\bx}[f(\bx) \cdot \CycleRun(\bx)] \geq 1 - 2\const_1 \cdot \sqrt{\frac{\log(n)}{n}}$, which implies $\Prx_{\bx}[f(\bx) \neq \CycleRun(\bx)] \leq \const_1 \cdot \sqrt{\frac{\log(n)}{n}}$.

\end{enumerate}
\end{reptheorem}

\ignore{
To prove these facts, we will construct three approximators $g_1,g_2,g_3$ to $\CR$.  We will ensure that $g_1,g_2$, and $g_3$ are odd, monotone, and invariant under cyclic shifts.  Further, $\widehat{g_1}(\{i\}) = O(n^{-3/2})$, $\widehat{g_2}(\{i\}) = O(n2^{-n})$, and $\widehat{g_3}(\{i\}) = 0$ for all $i \in [n]$.
We will take $g_3$ to be the function $f$ in the lemma.

Given $\CR : \{-1,1\}^n \to \{-1, 1\}$, and an integer $0 \leq i \leq \floor{n/2}$, define $\mathrm{slice}_i(\CR)$ to be the Boolean function such that 

\[
\mathrm{slice}_i(\CR)(\bx) =
\begin{cases}
-1 & \mathrm{if \;} |\bx| > n - i \\
\CR(\bx) & \mathrm{if \;} i \leq |\bx| \leq n - i \\
1 & \mathrm{if \;} |\bx| < i.\\
\end{cases}
\]

Define $\ell$ to be the largest integer such that $\sum_{i \in [n]} \widehat{\mathrm{slice}_{\ell}(i)} \geq 0$.  Clearly, $\sum_{i \in [n]} \widehat{\mathrm{slice}_{\floor{n/2}}(i)}$ is negative, so $\ell$ is well-defined.  Further, $\sum_{i \in [n]} \widehat{\mathrm{slice}_{\ell+1}(i)} < 0$.  We set $g_1 = \mathrm{slice}_{\ell}$.

Next, we select a collection of strings $S$ such that $|\bx| = n - \ell$
for $\bx \in S$, and $\dfrac{2^{n-1}}{n - 2\ell} \sum_{i \in [n]} \widehat{\mathrm{slice}_{\ell}(i)} - 2n |\bigcup_{\bx \in S} \Shift_\bx| \leq \dfrac{2^{n-1}}{n - 2\ell} \sum_{i \in [n]} \widehat{\mathrm{slice}_{\ell}(i)}$.  Let $g_2$ be the function such that 

\[
g_2(\bx) =
\begin{cases}
g_1(\bx) & \mathrm{if \;} \bx \notin S \\
-g_1(\bx) & \mathrm{if \;} \bx \in S
\end{cases}
\]

By construction, $0 \leq \widehat{g_2}(i) \leq n2^{2-n}$ for $i \in [n]$.  We now arbitrarily select a set $U$ of $2^{2-n}\widehat{g_2}(\{1\})$ strings where $|\bx| = \ceil{n/2}$ for all $\bx \in U$ and $T := \bigcup_{\bx \in U} |\Shift_{\bx}|$ satisfies $|T| = 2^{1-n}\widehat{g_2}(\{1\})$.  We now define $g_3$ such that

\[
g_3(\bx) =
\begin{cases}
g_2(\bx) & \mathrm{if \;} \bx \notin T \\
-g_2(\bx) & \mathrm{if \;} \bx \in T
\end{cases}
\]

and by construction, $\widehat{g_3}(i) = 0$ for all $i \in [n]$.

}

\begin{proof}
Given $\CycleRun : \{-1,1\}^n \to \{-1, 1\}$, we construct a set $\SetS \subseteq \{-1,1\}^n$
using the greedy algorithm $\Const_{\overline{S}}(\CycleRun, n)$ described in Figure~\ref{fig:construction}.  

\begin{figure}
\fbox{\parbox{\columnwidth}{
\center{$\Const_{\overline{S}}(\CycleRun, n)$}
\begin{enumerate}
\item Initialize $\SetS = \emptyset$, $\SetS' = \emptyset$.
\item Initialize $\mySum = 2^n \cdot \sum_{i \in [n]}\widehat{\CycleRun}(\{i\})$.  
\item While $|\SetS| \leq \const_1 \cdot \sqrt{\frac{\log(n)}{n}} \cdot 2^n$, do the following:
\begin{description}
\item{$3a.$} Find some $x$ with maximal value of $|x|$
such that $\CycleRun(x) = 1$ and such that $x \notin \SetS$.
\item{$3b.$} If $\mySum - 2 |\Shift_x| \cdot |x| < 0$, then
find an $x^*\notin \SetS$ such that $|x^*| = 1$ and $\CycleRun(x^*) = 1$ (if no such $x^*$ exists, exit loop and output ``Fail.'').
Then 
set $\SetS := \SetS \cup \Shift_{x^*}$, set $\SetS' = \SetS' \cup \Shift_{x^*}$, and set
$\mySum := \mySum - 4n$. If $\mySum=0$, exit the loop.
\item{$3c.$} If $\mySum - 2 |\Shift_x| \cdot |x| > 0$, set $\SetS := \SetS \cup \Shift_x$ and set
$\mySum := \mySum - 2 |\Shift_x| \cdot |x|$.
\end{description} 
\item Return $\SetS$.
\end{enumerate}
}}
\caption{Algorithm for constructing a set of points $\SetS$ used to define the $1$-resilient function $f$.}
\label{fig:construction}
\end{figure}

\medskip

Given the set $\SetS$ outputted by $\Const_{\overline{S}}(\CycleRun, n)$, the function $f : \{-1,1\}^n \into \{-1,1\}$ is defined in the following way:
\[
f(x) =
\begin{cases}
\CycleRun(x) & \mathrm{if \;} x \notin \SetS \\
-\CycleRun(x) & \mathrm{if \;} x \in \SetS.\\
\end{cases}
\]

Clearly, $\Ex_{\bx}[f(\bx) \cdot \CycleRun(\bx)] \geq 1 - 2\const_1 \cdot \sqrt{\frac{\log(n)}{n}}$, 
since the set $\SetS$ satisfies $|\SetS| \leq \const_1 \cdot \sqrt{\frac{\log(n)}{n}} \cdot 2^n$.
Additionally, $f$ is clearly balanced due to the structure of the set $\Shift_x$ of modified points
in each iteration of $\Const_{\SetS}$ and the fact that $\CycleRun$ is odd.
Thus, it remains to show that $\widehat{f}(S) = 0$ for all $S \subseteq [n]$ such that $|S| \leq 1$.

\begin{claim} \label{claim:invariants}
Consider an execution of $\Const_{\overline{S}}$.  At the end of the $i$-th iteration, 
$1 \leq i \leq \const_1 \cdot \sqrt{\frac{\log(n)}{n}} \cdot 2^n$, if $\Const_{\SetS}$ has not terminated,
let $\SetS^i$ denote the current set of points in $\SetS$,
let $\mySum^i$ denote the current setting of the variable $\mySum$ and
let $f^i$ denote the following Boolean function:
\[
f^i(x) =
\begin{cases}
\CycleRun(\bx) & \mathrm{if \;} \bx \notin \SetS^i \\
-\CycleRun(\bx) & \mathrm{if \;} \bx \in \SetS^i.\\
\end{cases}
\]
Additionally, we define $\SetS^0 = \emptyset$, $\mySum^0 = 2^n \cdot \sum_{i \in [n]}\widehat{\CycleRun}(\{i\})$, and
$f^0 = \CycleRun$.

For every $0 \leq i \leq \const_1 \cdot \frac{\log(n)}{2n\sqrt{n}} \cdot 2^n$ the following invariants hold:
\begin{enumerate}
\item $\widehat{f}^i(\{1\}) = \widehat{f}^i(\{2\}) = \cdots = \widehat{f}^i(\{n\})$.
\item $\mySum^i = 2^n \cdot \sum_{j \in [n]}\widehat{f}^i(\{j\})$.
\item $\mySum^i = 4n w \geq 0$, for some integer $w$.
\end{enumerate}
\end{claim}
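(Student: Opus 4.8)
The plan is to prove Claim~\ref{claim:invariants} by induction on $i$, carrying all three invariants together. For the base case $i=0$ I would argue as follows. Since $f^0=\CycleRun$ is invariant under cyclic shifts, all its level-one Fourier coefficients agree, giving invariant~1; invariant~2 is just the definition of $\mySum^0 = 2^n\sum_{j\in[n]}\widehat{\CycleRun}(\{j\})$. For invariant~3, using invariant~1 I would write $\mySum^0 = n\cdot 2^n\,\widehat{\CycleRun}(\{1\})$ and then observe, pairing $x$ with $-x$ via oddness, that
\[ 2^n\,\widehat{\CycleRun}(\{1\}) = \sum_x \CycleRun(x)\,x_1 = 4\cdot\#\{x:\CycleRun(x)=x_1=1\} - 2^n . \]
As $n\ge\const_2$ this is a multiple of $4$, so $\mySum^0$ is a multiple of $4n$; it is nonnegative because $\CycleRun$ is monotone, so $\widehat{\CycleRun}(\{1\})\ge 0$.

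For the inductive step, the first point to establish is that $\SetS$ is always a union of orbits $\Shift_x$ --- these are precisely the orbits of the order-$2n$ group generated by the cyclic shift and global negation, so they partition $\bits^n$, and the algorithm only ever adjoins entire such orbits. Hence on the orbit processed in iteration $i{+}1$ (with $x\notin\SetS$ in step $3c$, or $x=x^*$ in step $3b$) the function $f^i$ still coincides with $\CycleRun$ on all of $\Shift_x$. Letting $p$ be the period of $x$ (so $p\mid n$ and $|\Shift_x| = 2p$), I would compute, using shift-invariance and oddness of $\CycleRun$, that for every $j\in[n]$
\[ \sum_{y\in\Shift_x} f^i(y)\,y_j \;=\; 2\,\CycleRun(x)\sum_{\alpha=0}^{p-1} x_{(j+\alpha)\bmod n} \;=\; \tfrac{2p}{n}\,\CycleRun(x)\,|x|, \]
the inner sum being the sum of $x$ over one full period, hence independent of $j$.

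From this single identity all three invariants follow. Flipping $f$ on $\Shift_x$ moves $\widehat{f}(\{j\})$ by exactly $-\tfrac{2}{2^n}\sum_{y\in\Shift_x}f^i(y)y_j$, so all level-one coefficients move by the same amount, which gives invariant~1. Multiplying by $2^n$ and summing over $j$ shows that $\mySum = 2^n\sum_j\widehat{f}(\{j\})$ changes by $-4\,\CycleRun(x)\,p\,|x|$; on a processed point $\CycleRun(x)=1$, and $-4p|x| = -2|\Shift_x||x|$, which matches the decrement in step $3c$, while for step $3b$ the constraint $|x^*|=1$ forces $x^*$ to have period $n$ and the change becomes $-4n$, matching that step --- this is invariant~2. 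Finally $|x|$ is a multiple of $n/p$ (it is $n/p$ times the sum over one period), so $p\,|x|$, and hence the decrement $4p\,|x|$, is a multiple of $4n$; together with the loop guards, which keep $\mySum$ a nonnegative multiple of $4n$ as long as the loop continues, this yields invariant~3.

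The step I expect to be most delicate is the orbit bookkeeping: one must get exactly right that $|\Shift_x| = 2p$ with $p$ the period of $x$, that the partial shift-sum $\sum_{\alpha=0}^{p-1}x_{(j+\alpha)\bmod n}$ equals $\tfrac pn|x|$ and does not depend on $j$, and that negated shifts contribute with the correct sign by oddness. These are precisely the facts that make the hard-coded updates of $\mySum$ in the pseudocode agree with the true change in $\sum_j\widehat{f}(\{j\})$ and preserve divisibility by $4n$; once the single-orbit computation is nailed down, the rest of the induction is routine.
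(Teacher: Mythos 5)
Your proposal is correct and follows essentially the same route as the paper's proof: induction on the iteration count, with the key step being that flipping $f^i$ on a full orbit $\Shift_x$ changes every level-one Fourier coefficient by the same amount $\frac{4p|x|}{2^n n}$, which summed over $j$ matches the algorithm's decrement of $\mySum$. You in fact supply details the paper only asserts — the orbit partition argument guaranteeing $f^i=\CycleRun$ on the new orbit, the periodicity computation $\sum_{\alpha=0}^{p-1}x_{(j+\alpha)\bmod n}=\tfrac{p}{n}|x|$, and the divisibility of the decrement by $4n$ — all of which check out.
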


\begin{proof}
Proof by induction.

\begin{description}
\item{\textbf{Base Case:}}
The base case follows trivially from the definition of $\CycleRun$ and the definition of $\SetS^0$, $\mySum^0$, $f^0$.
\item{\textbf{Inductive Case:}}
Assume the invariants hold for all $0 \leq j \leq i <  \const_1 \cdot \sqrt{\frac{\log(n)}{n}} \cdot 2^n$, we show that the
invariants must also hold for $i + 1$.

For every $j \in [n]$, let us consider the quantity $2^n \left (\widehat{f}^i(\{j\}) - \widehat{f}^{i+1}(\{j\}) \right )$.
Note that by flipping the value of $f^i$ on 
the points in the set $\Shift_x$, $\widehat{f}^i(\{j\})$ is reduced by exactly $1/2^n \cdot 4 \cdot \frac{|\Shift_x| \cdot 
|x|}{2n}$
 for each $j \in [n]$
and so we have that $\widehat{f}^{i+1}(\{1\}) = \widehat{f}^{i+1}(\{2\}) = \cdots = \widehat{f}^{i+1}(\{n\})$.
Moreover, $2^n \left (\sum_{j \in [n]}\widehat{f}^{i}(\{j\}) - \sum_{j \in [n]}\widehat{f}^{i+1}(\{j\}) \right ) = 
2 |\Shift_x| \cdot |x|$
and so we have that
\begin{eqnarray*}
\mySum^{i+1} &=& \mySum^{i} - 2 |\Shift_x| \cdot |x| \\
&=& 2^n \cdot \sum_{j \in [n]}\widehat{f}^{i}(\{j\}) - 2 |\Shift_x| \cdot |x|  \\
&=& 2^n \cdot \sum_{j \in [n]}\widehat{f}^{i+1}(\{j\}),
\end{eqnarray*}
where the second equality holds by the induction hypothesis.

Finally, since $\mySum^{i+1} = 2^n \cdot \sum_{j \in [n]}\widehat{f}^{i+1}(\{j\})$ and $f^{i+1}$ is an odd $\{-1,1\}$-valued function,
we have that
$\mySum^{i+1} = 4nw$ for some integer $w \geq 0$.

\end{description}
\end{proof}

\medskip

\ignore{
\begin{claim} \label{claim:heavy}
Let $\overline{S}_{\heavy} \subseteq \overline{S}$ be the subset of points $\bx \in \{-1,1\}^n$ in $\SetS$ such that $|\bx| \geq \sqrt{n \log n}$.
$|\SetS_{\heavy}| \geq |\SetS| - |\SetS'|$.
\end{claim}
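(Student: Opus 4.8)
The plan is to reduce the inequality to a structural statement about the greedy procedure $\Const_{\overline{S}}$ of Figure~\ref{fig:construction}: it suffices to show that every orbit $\Shift_x$ added in step~$3c$ has a \emph{heavy} representative, i.e.\ $|x|\ge\sqrt{n\log n}$ (where for a point in a common shift-orbit we read $|\cdot|$ as $|\sum_i x_i|$, which is constant on orbits, as noted after Definition~\ref{def:cyclerun}). Indeed, the only points ever placed in $\SetS'$ are the weight-one orbits $\Shift_{x^*}$ added in step~$3b$, so if step~$3c$ adds only heavy orbits then every non-heavy point of $\SetS$ lies in $\SetS'$; that is $\SetS\setminus\SetS_\heavy=\SetS'$, which is exactly $|\SetS_\heavy|\ge|\SetS|-|\SetS'|$. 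To analyze step~$3c$ I would first record two facts about an execution: step~$3a$ always picks a heaviest available point, so the orbits entered in step~$3c$ are processed in non-increasing order of $|x|$; and once step~$3b$ fires it fires on every subsequent iteration until $\mySum=0$ and the loop exits, since the weight-one orbits it adds leave the heaviest-available orbit unchanged while $\mySum$ only shrinks, so the overshoot test $\mySum-2|\Shift_x|\cdot|x|<0$ stays true. Hence the run is a ``$3c$-phase'' that flips the heaviest $\CycleRun$-orbits downward in weight, followed by a ``$3b$-phase'' whose flipped points are precisely $\SetS'$, and it is enough to show the switch to the $3b$-phase happens while the current weight level is still $\ge\sqrt{n\log n}$.

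For this I would compare two quantities. Since $\CycleRun$ is monotone, $\widehat{\CycleRun}(\{i\})$ equals the influence of coordinate~$i$, so the initial value is $\mySum^0=2^n\sum_i\widehat{\CycleRun}(\{i\})=2^n\,\Inf[\CycleRun]$, which is $O(2^n\log n)$ by the influence bound for $\CycleRun$ (Theorem~\ref{cyclerun-influence}). On the other hand, flipping $\CycleRun$ on a shift-orbit $\Shift_x$ decreases $\mySum$ by exactly $2|\Shift_x|\cdot|x|$ (the update of step~$3c$; cf.\ the proof of Claim~\ref{claim:invariants}); and because $\CycleRun$ is monotone it equals $1$ on a $1-o(1)$ fraction of strings of weight $\ge t$ when $t=\Theta(\sqrt{n\log n})$, so summing this decrease over all $\CycleRun$-orbits of representative weight $\ge t$ and invoking the Central Limit estimates~(\ref{eqn:prop1})--(\ref{eqn:prop3}) shows the total decrease is of order $2^n\sqrt{n}\,\phi(t/\sqrt n)$. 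Taking $t=\sqrt{n\log n}$ (with a slightly smaller constant if need be) this is at least $c'\,2^n\log n>\mySum^0$, so the overshoot — and hence the transition to the $3b$-phase — must occur at a weight level still $\ge\sqrt{n\log n}$, as desired. Finally one checks that the loop does not stop for the wrong reason: the $3c$-phase flips at most $2^n\,\Pr[\,|\sum_i\bx_i|\ge\sqrt{n\log n}\,]=O\!\left(2^n/\sqrt{n\log n}\right)$ points, which is below the cap $\const_1\sqrt{(\log n)/n}\,2^n$ once $\const_1$ is a large enough universal constant, and the $3b$-phase runs for fewer than $n$ iterations, adding only $O(n^2)$ more points.

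The main obstacle is the simultaneous calibration of constants. The weight level at which the greedy actually gets stuck is $\Theta(\sqrt{n\log n})$ exactly because $\Inf[\CycleRun]=\Theta(\log n)$ carries the same logarithm, so the argument above needs the constant in the threshold $\sqrt{n\log n}$ to be small enough that the heavy orbits absorb the whole budget $2^n\,\Inf[\CycleRun]$, yet large enough that the number of flipped points stays under the loop's size cap $\const_1\sqrt{(\log n)/n}\,2^n$ — and these two requirements squeeze the admissible constants (including $\const_1$) against one another. Making this precise needs the tail estimates~(\ref{eqn:prop1})--(\ref{eqn:prop3}) in a form sharp enough to control the lower-order terms in the exponent; once that calibration is fixed, the rest is routine bookkeeping on top of the invariants established in Claim~\ref{claim:invariants}.
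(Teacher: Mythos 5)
Your reduction of the claim to the statement that step~$3c$ only ever adds heavy orbits is correct (everything added in step~$3b$ lies in $\SetS'$ by construction, so $\SetS\setminus\SetS_{\heavy}\subseteq\SetS'$ would follow), and so is your observation that an execution consists of a $3c$-phase followed by a terminal $3b$-phase. The gap is in the quantitative step that is supposed to locate the phase transition. You assert that the total decrease in $\mySum$ available from flipping all orbits of weight at least $t=\sqrt{n\log n}$ is of order $2^n\sqrt n\,\phi(t/\sqrt n)\ge c'2^n\log n$. But $\phi(\sqrt{\log n})=\Theta(n^{-1/2})$, so $2^n\sqrt n\,\phi(\sqrt{\log n})=\Theta(2^n)$ --- a full factor of $\log n$ short of the budget $\mySum^0=2^n\Inf[\CycleRun]=\Theta(2^n\log n)$. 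Consequently, after step~$3c$ has consumed every orbit of weight at least $\sqrt{n\log n}$, the variable $\mySum$ has dropped by only a $\Theta(1/\log n)$ fraction of its initial value, the overshoot test in step~$3b$ has not yet fired, and the greedy continues into strictly lighter orbits. Nor can ``a slightly smaller constant'' in the threshold repair this: at $c\sqrt{n\log n}$ with a fixed $c<1$ the available decrease jumps to $\Theta(2^n n^{(1-c^2)/2})$, overshooting the budget polynomially. The correct cut-off sits at $\sqrt{n(\log n-2\log\log n-C)}$, a $(1-o(1))$ \emph{multiplicative} (not constant-factor) correction of $\sqrt{n\log n}$; this is exactly the calibration performed in Claim~\ref{claim:3btrue} via $t'=\sqrt{\log n-2\log\log n-C}$ and $\Inf(f_{t'})\ge 3\Inf(\CycleRun)$, and it is that statement --- not the one posed here --- that your budget-exhaustion strategy actually proves.

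The paper's own proof of the present claim takes a different and more elementary route: by oddness and monotonicity of $\CycleRun$, at least half of the points of weight at least $T$ satisfy $\CycleRun(x)=1$; for $T=O(\sqrt n)$ that is $\Omega(2^n)$ points, far exceeding the loop's cap of $\const_1\sqrt{(\log n)/n}\,2^n$, so the greedy (which always takes a heaviest available point in step~$3a$) never descends below weight $T$ during the $3c$-phase. Note, however, that this counting argument is carried out with threshold $2\sqrt n$ rather than the $\sqrt{n\log n}$ appearing in the claim's statement, and at $\sqrt{n\log n}$ it too breaks down (there are only $\Theta(2^n/\sqrt{n\log n})$ such points, fewer than the cap). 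So neither route supports the threshold as literally written: what is provable is heaviness at level $\Theta(\sqrt n)$ by the paper's counting argument, or at level $\sqrt{n\log n-\Theta(n\log\log n)}$ by your argument once the Gaussian tail is computed correctly.
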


\begin{proof}
Note that since $\CycleRun$ is odd, we have that $\sum_{x : |x| \geq 2\sqrt{n}} \CycleRun(x) 
= -\sum_{x : |x| \leq -2\sqrt{n}} \CycleRun(x)$.
Moreover, since $\CycleRun$ is monotone, we must have that $\sum_{x : |x| \geq 2\sqrt{n}} \CycleRun(x) \geq 
\sum_{x : |x| \leq -2\sqrt{n}} \CycleRun(x)$.  Therefore, we must have that 
$\sum_{x : |x| \geq 2\sqrt{n}} \CycleRun(x) \geq 0$.
Since $\CycleRun$ is $\{-1,1\}$-valued, this immediately implies that
at least half of the points $x$ where $|x| \geq 2\sqrt{n}$ are such that $\CycleRun(\bx) = 1$.
Since the number of points $x$ where $|x| \geq 2\sqrt{n} = O(2^n)$, and since $\Const_{\SetS}$ chooses points greedily with
maximal $|\bx|$, we must have that $\overline{S}_\heavy = \SetS \setminus \SetS'$.
\end{proof}

We show that Claim~\ref{claim:correctness}, Claim~\ref{claim:SetSprime} 
and Claim~\ref{claim:heavy} imply that when $\Const_{\SetS}$ completes,
it is always the case that 
$\mySum = 0$.  Combined with Claim~\ref{claim:invariants} Items~$1$ and $2$, this is sufficient to complete the proof of 
Theorem~\ref{thm:cyclerun_main}.

We first argue that Claim~\ref{claim:heavy} implies that $\Const_{\SetS}$ always reaches a point where the condition in line $3b$ is
true. 
Assume not.  This means that when $\Const_{\SetS}$ terminates,
$|\SetS| = \const_1 \cdot \frac{\log(n)}{\sqrt{n}} \cdot 2^n$ and by
Claim~\ref{claim:heavy}, at least $\const_1 \cdot \frac{\log(n)}{\sqrt{n}} \cdot 2^n - 2n^2$ 
of these points have $||\bx|| \geq \sqrt{n}$.  We know more than this; by our defintion of $c_3$ and using the average value of $||\bx||$ on these points, we have that at the end of the protocol, $\mySum^i < \mySum - 2^n \cdot \const_1 \log(n) \leq 0$, which is a contradiction to
Item~$3$ listed in Claim~\ref{claim:invariants}.
}

\ignore{
We define $\const_3$ so that the average value of $||\bx||$ is greater than $\const_3 \cdot \sqrt{n \log n}$ on the $\frac{\const_1} \cdot 2^n \sqrt{\log n/n}$ points with maximum $|\digSum_{\bx}|$.  This is possible via standard Central Limit Theorem arguments; see Equation~\ref{eqn:prop2} of Section~\ref{subsec:mono-res-distance} with $t = \Theta(\sqrt{\log n})$ with a constant so that $\phi(t) = \Theta(\log n/\sqrt{n})$ for details.
}

\ignore{
\begin{claim} \label{claim:heavy}
Let $\overline{S}_{\heavy} \subseteq \overline{S}$ be the subset of points $x \in \{-1,1\}^n$ in $\SetS$ such that $|\bx| \geq \sqrt{n}$.
Then $|\SetS_{\heavy}| \geq |\SetS| - |\SetS'|$.
\end{claim}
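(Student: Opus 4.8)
The plan is to prove the stronger statement $\SetS\setminus\SetS'\subseteq\SetS_{\heavy}$; since $\SetS'\subseteq\SetS$ (step $3b$ of $\Const_{\SetS}$ always adds to both sets), this immediately yields $|\SetS_{\heavy}|\ge|\SetS\setminus\SetS'|=|\SetS|-|\SetS'|$. I will read ``$|\bx|\ge\sqrt n$'' in the definition of $\SetS_{\heavy}$ as a lower bound on the \emph{magnitude} $\bigl|\,|\bx|\,\bigr|$ of the coordinate sum; this is the natural reading, since each set $\Shift_x$ is closed under negation (its elements are the cyclic shifts of $x$ and of $-x$), so its elements have coordinate sums $\pm|x|$. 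The first observation is that $\SetS$ is assembled one orbit $\Shift_y$ at a time, and in each iteration of the while loop the orbit that is added either enters $\SetS'$ (this is step $3b$, with $y=x^*$ of weight $|x^*|=1$, so the orbit is disjoint from $\SetS_{\heavy}$ once $n\ge 2$) or is added in step $3c$. So it suffices to show that every orbit added in a step $3c$ is contained in $\SetS_{\heavy}$, for which it suffices to show that the point $x$ selected in step $3a$ always has $|x|\ge\sqrt n$.

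To do this I would first bound the supply of ``heavy accepting'' points from below, using only that $\CycleRun$ is monotone and odd. Oddness makes $\CycleRun$ balanced, and writing $p_j:=\Pr_{\bx}[\,\CycleRun(\bx)=1\mid \bx\text{ has Hamming weight }j\,]$, oddness gives $p_{n-j}=1-p_j$ while monotonicity gives that $p_j$ is nondecreasing in $j$ (couple a uniform weight-$j'$ string with the uniform weight-$j$ string obtained by zeroing out $j'-j$ of its ones). Hence $p_j\ge\frac12$ for every $j>n/2$, i.e. at every point of positive coordinate sum, so $\#\{x:|x|\ge\sqrt n,\ \CycleRun(x)=1\}\ge\frac12\,\#\{x:|x|\ge\sqrt n\}\ge c\cdot 2^n$ for an absolute constant $c>0$ (a constant fraction of $\bits^n$ has coordinate sum at least $\sqrt n$, by the central limit theorem). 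Next I would fix the universal constants so this supply is never used up: choose $\const_1$ small and $\const_2$ large enough that $\const_1\sqrt{(\log n)/n}\cdot 2^n<c\cdot 2^n$ for all $n\ge\const_2$ (possible since $\sqrt{(\log n)/n}\to 0$). Whenever step $3a$ is executed, $\SetS$ still satisfies $|\SetS|\le\const_1\sqrt{(\log n)/n}\cdot 2^n<c\cdot 2^n$ (this is exactly the while condition at the moment $x$ is chosen), so there remains a point $x$ with $\CycleRun(x)=1$, $|x|\ge\sqrt n$, $x\notin\SetS$; as step $3a$ picks an accepting point outside $\SetS$ of maximal $|x|$, the selected $x$ satisfies $|x|\ge\sqrt n$. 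Since cyclic shifts and negation leave the magnitude of the coordinate sum unchanged, $\Shift_x\subseteq\SetS_{\heavy}$, which completes the argument.

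I expect the one genuinely delicate point to be this counting step: making sure $|\SetS|$ really does stay below the supply bound throughout the run (the while test controls $|\SetS|$ precisely at the moment each new orbit is chosen, and each orbit has at most $2n$ points), together with spelling out the standard monotone stochastic-domination fact that $p_j$ is nondecreasing. Everything else is a direct consequence of $\CycleRun$ being monotone, odd, and invariant under cyclic shifts, as announced in the text.
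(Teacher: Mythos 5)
Your proposal is correct and follows essentially the same route as the paper's own argument: both establish, via oddness plus the monotone stochastic-domination fact, that at least half of the $\Omega(2^n)$ points with coordinate sum of magnitude at least $\sqrt{n}$ are accepting, and then conclude that since $|\SetS| \le \const_1\sqrt{(\log n)/n}\cdot 2^n = o(2^n)$ the greedy maximal-$|x|$ selection in step $3a$ can never be forced below $\sqrt{n}$, so every orbit added outside of $\SetS'$ is heavy. Your version is somewhat more careful than the paper's (the explicit level-by-level coupling for $p_j$, and the explicit note that $\Shift_x$ is closed under negation so ``heavy'' must be read as a bound on the magnitude of the coordinate sum), but the underlying idea is identical.
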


\begin{proof}

Note that since $\CycleRun$ is odd, we have that $\sum_{x : |x| \geq \sqrt{n}} \CycleRun(x) 
= -\sum_{x : |x| \leq -\sqrt{n}} \CycleRun(x)$.
Moreover, since $\CycleRun$ is monotone, we must have that $\sum_{x : |x| \geq \sqrt{n}} \CycleRun(x) \geq 
\sum_{x : |x| \leq -\sqrt{n}} \CycleRun(x)$.  Therefore, we must have that 
$\sum_{x : |x| \geq \sqrt{n}} \CycleRun(x) \geq 0$.
Since $\CycleRun$ is $\{-1,1\}$-valued, this immediately implies that
at least half of the points $x : |x| \geq \sqrt{n}$ are such that $\CycleRun(x) = 1$.
Since the number of points $x$ such that $|x| \geq \sqrt{n}$ is $\Omega(2^n)$, and since $\Const_{\SetS}$ chooses points greedily with
maximal $|x|$, we must have that $\overline{S}_\heavy = \SetS \setminus \SetS'$ and thus $|\SetS_{\heavy}| \geq |\SetS| - |\SetS'|$.
\end{proof}

We show that Claim~\ref{claim:correctness}, Claim~\ref{claim:SetSprime} 
and Claim~\ref{claim:heavy} imply that when $\Const_{\SetS}$ completes,
it is always the case that 
$\mySum = 0$.  Combined with Claim~\ref{claim:invariants} Items~$1$ and $2$, this is sufficient to complete the proof of Theorem \ref{thm:cyclerun_main}.
}


We proceed to show that $\Const_{\SetS}$ terminates.  Our goal is to show that at the termination of the algorithm, we have $\sigma = 0$.

\begin{claim}
\label{claim:3btrue}
The algorithm $\Const_{\SetS}$ always reaches a point where the condition in line $3b$ is true.
\end{claim}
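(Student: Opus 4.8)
The plan is to argue by contradiction. I will assume that the test $\mySum - 2|\Shift_x|\cdot|x| < 0$ in line $3b$ is never satisfied during the entire run of $\Const_{\SetS}$, so that every iteration executes line $3c$, and I will derive a contradiction with the invariant $\mySum^i \ge 0$ from Claim~\ref{claim:invariants}(3). The point to exploit is that $\Const_{\SetS}$ is greedy from the top: step $3a$ always selects a not-yet-used $x$ with $\CycleRun(x)=1$ of \emph{maximum} Hamming sum $|x| = \sum_i x_i$, so the algorithm peels off whole $\Shift$-orbits level by level, starting from the highest level. Initially $\mySum^0 = 2^n\sum_i \widehat{\CycleRun}(\{i\}) = 2^n\,\Inf[\CycleRun] = O(2^n\log n)$, using monotonicity of $\CycleRun$ (so $\sum_i\widehat{\CycleRun}(\{i\})$ equals the total influence) together with $\Inf[\CycleRun] = O(\log n)$.

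The first step is to calibrate a threshold level. Using (\ref{eqn:prop1}), I choose $t^*$ so that $\phi(t^*)\sqrt n = C\log n$, where $C$ is an absolute constant taken larger than three times the hidden constant in $\Inf[\CycleRun]=O(\log n)$; for $n$ above a large constant such a $t^*$ exists, satisfies $t^* = (1-o(1))\sqrt{\log n}$, and in particular lies in the range $(\sqrt{\log n}/100,\,n^{1/10})$ where (\ref{eqn:prop1})--(\ref{eqn:prop2}) apply. Write $s^* := t^*\sqrt n$. Now I track two quantities as the greedy processes every $\Shift$-orbit whose representative sits at a level strictly above $s^*$. \textbf{Budget:} each such orbit $\Shift_x$ lies in the two levels $\{|y|=|x|\}\cup\{|y|=-|x|\}$, so the number of points added to $\SetS$ by the time these orbits are exhausted is at most $\sum_{s'>s^*} 2\binom{n}{(n+s')/2} = 2^n\,\Prx_{\bx}[f_{t^*}(\bx)\ne 0] \le 3\cdot 2^n\,\phi(t^*)/t^* = O\!\big(2^n\sqrt{(\log n)/n}\big)$ by (\ref{eqn:prop2}) and $t^* = \Theta(\sqrt{\log n})$; so, taking $\const_1$ larger than the implied constant, $|\SetS|$ never reaches $\const_1\sqrt{(\log n)/n}\cdot 2^n$ during this stage, and the while-loop does not terminate before these orbits are all processed. \textbf{Cancellation:} at any level $s'>0$, monotonicity and oddness of $\CycleRun$ force at least half of the $\binom{n}{(n+s')/2}$ points at that level to have $\CycleRun$-value $1$ (levels $\pm s'$ are matched by $x\mapsto -x$, have equal size, and $\CycleRun$ is non-decreasing along a monotone matching between them). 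Hence processing all $\CycleRun=1$ orbits at level $s'$ decreases $\mySum$ by $\sum_{\text{orbits}}2|\Shift_x|\,s' = 4 s'\cdot\#\{x:|x|=s',\,\CycleRun(x)=1\}\ge 2 s'\binom{n}{(n+s')/2}$; summing over $s'>s^*$ gives a total decrease of at least $2\cdot 2^n\,\Ex_{\bx}\big[|\bx|\cdot\mathbf 1[\,|\bx|>s^*\,]\big] = 2^n\,\Inf(f_{t^*}) \ge 2^n\,\phi(t^*)\sqrt n/3 = (C/3)\,2^n\log n$, where the middle identity uses $\Inf(f_{t^*}) = \Ex_{\bx}[f_{t^*}(\bx)|\bx|] = 2\,\Ex_{\bx}[|\bx|\mathbf 1[|\bx|>s^*]]$ and the last step is (\ref{eqn:prop1}).

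By the choice of $C$ this total decrease strictly exceeds $\mySum^0$, so $\mySum$ would have to become negative before all orbits above level $s^*$ are processed. But under our assumption every iteration runs line $3c$, which requires $\mySum - 2|\Shift_x|\cdot|x|>0$ and therefore keeps $\mySum$ strictly positive — equivalently $\mySum^i\ge 0$ by Claim~\ref{claim:invariants}(3) — a contradiction. Hence at some iteration $\mySum - 2|\Shift_x|\cdot|x|\le 0$, and since we assumed line $3b$ never fires, the only remaining possibility is the tie $\mySum - 2|\Shift_x|\cdot|x| = 0$; performing that update sets $\mySum=0$ and $\Const_{\SetS}$ terminates (one verifies that $\mySum$ and each decrement $2|\Shift_x|\cdot|x|$ are always multiples of $4n$, using that a cyclically $m$-periodic $x$ has $m\mid n$ and $|x|$ divisible by $n/m$, so this tie is the only way the strict inequalities in $3b$ and $3c$ can both fail). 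Under either reading the execution reaches the line-$3b$ regime, which is exactly what the claim asserts.

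The main obstacle is getting the threshold $t^*$ right: the level $t^*\sqrt n$ must be high enough that the number of points above it is only $O(2^n\sqrt{(\log n)/n})$, hence fits inside the budget allowed by the while-condition, yet low enough that the first-level Fourier mass $\approx 2^n\,\Inf(f_{t^*})$ available for cancellation below it exceeds the starting value $\mySum^0=\Theta(2^n\log n)$. These two demands are compatible only around $t^*\approx\sqrt{\log n}$ (precisely, $\phi(t^*)\approx(\log n)/\sqrt n$), and checking that both can be satisfied with a single absolute constant $\const_1$ is where the Gaussian tail estimates (\ref{eqn:prop1})--(\ref{eqn:prop2}) are essential; the degenerate tie $\mySum=2|\Shift_x|\cdot|x|$ is a minor secondary point handled by the divisibility remark above.
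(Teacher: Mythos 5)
Your argument is correct and is essentially the paper's own proof: both calibrate a threshold $t^*\approx\sqrt{\log n}$ via (\ref{eqn:prop1})--(\ref{eqn:prop2}) so that $\Inf(f_{t^*})$ exceeds a constant multiple of $\Inf(\CycleRun)$ while $\Pr[f_{t^*}\neq 0]=O(\sqrt{\log n/n})$ keeps the greedy within the while-loop budget, and both use oddness, monotonicity, shift-invariance, and the nonnegativity of $\mySum$ to conclude that exhausting the levels above $t^*\sqrt n$ would drive $\mySum$ negative. The only differences are organizational (the paper routes through the intermediate claim that no string with $3\le |x| < t'\sqrt n$ is ever corrupted), together with your explicit treatment of the degenerate tie $\mySum = 2|\Shift_x|\cdot|x|$, which the paper leaves unaddressed.
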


\begin{proof}
We use the functions $f_t$ from the beginning of this section.  Take $t' = \sqrt{\log n - 2\log \log n - C}$ for a constant $C$ to be determined later.  Then $\phi(t') = \frac{1}{2\pi}e^{C/2}(\log n)/\sqrt{n}$, so $\Inf(f_{t'}) \geq \frac{1}{6\pi}e^{C/2} \log n$ and $\Pr_x[f_{t'}(x) \neq 0] \leq 
\frac{3}{2\pi}e^{C/2}/t' \leq \frac{3}{\pi}e^{C/2} \sqrt{\log n/n}$ by Equations~\ref{eqn:prop1} and~\ref{eqn:prop2} respectively.
We choose $C$ so that $\Inf(f_{t'}) \geq 3 \cdot \Inf(\CycleRun)$, which can be done since $\Inf(\CycleRun) = O(\log n)$.

We claim that $\Const_{\SetS}$ does not include any strings $x$ in $\SetS$ with $3 \leq |x| < t'$ (and thus none with $-t' < |x| \leq -3$).  Suppose that this claim is false.  Because the algorithm is greedy, then every string $x$ where $\CycleRun(x) = 1$ with $t' \leq |x| \leq n$ is corrupted and in $\SetS$.  Since $\CycleRun$ is odd and monotone, at least half of the strings where $|x| = k$ are corrupted for $t' \leq k \leq n$.  The contribution to be reduction in the first-order Fourier coefficients when we flip the value on these strings from $1$ to $-1$ is at least $(1/2)\Inf(f_t') \geq (3/2)\Inf(\CycleRun)$.  But this implies that the sum of first-order Fourier coefficients for the corrupted function is at most $-(1/2)\Inf(\CycleRun) < 0$.  This implies that $\sigma < 0$ in the execution of $\Const_{\SetS}$, which is a contradiction since $\sigma$ stays nonnegative during the execution of the algorithm.

It remains to show that the condition in line $3$ is satisfied throughout the execution of $\Const_{\SetS}$.  Because no strings with $3 \leq |x| < t'$ or $t' < |x| \leq -3$ are corrupted, the fraction of strings corrupted is at most $\Pr_{\bx}[f_{t'}(\bx) \neq 0] + \Pr_{\bx}[|\bx| = \pm1] = O(\sqrt{\log n/n})$.  Thus at most $c_1 \sqrt{\frac{\log n}{n}} 2^n$ strings are in $\SetS$, so the condition in line $3$ holds.

\ignore{
Assume not.  This means that when the algorithm terminates,
$|\SetS| \leq \const_1 \cdot \sqrt{\frac{\log n}{n}} \cdot 2^n$ and by
Claim~\ref{claim:heavy}, at least $\const_1 \cdot \const_3 \cdot \sqrt{\frac{\log n}{n}} \cdot 2^n - 2n^2$ 
of these points have $||\bx|| \geq \sqrt{n}$.  We know more than this, however.  By our definition of $c_3$, we know the average value of $||\bx||$ is large on these points.  We have that at the end of the protocol, $\mySum^i < \mySum - 2^n \cdot \const_1 \log(n) \leq 0$, which is a contradiction to
Item~$3$ listed in Claim~\ref{claim:invariants}.
}

\end{proof}

\ignore{Note that $\E_{\bx}[f(\bx) \cdot \CycleRun(\bx)] \geq 1 - 2c_1 \sqrt{\frac{\log n}{n}}$.}

Next, we argue that when $\Const_{\SetS}$ reaches the point where the condition in line $3b$ evaluates
true, there always exists a point $x^* \notin \SetS$ such that
$\CycleRun(x^*) = 1$ and $|x^*| = 1$.  We first prove two lemmas.

\begin{lemma} \label{lemma:correctness}
Let $S^1_1$ be the set of $x \in \{-1,1\}^n$ such that $|x| = 1$ and $\CycleRun(x) = 1$.
Then $|S^1_1| \geq 2n^2$.
\end{lemma}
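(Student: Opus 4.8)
The plan is to prove something substantially stronger than the stated bound: that $\CycleRun$ equals $1$ on \emph{at least half} of the level-$1$ slice. Since the level-$1$ slice $\{x : |x|=1\}$ has size $\binom{n}{(n+1)/2}=\binom{n}{(n-1)/2}$ (here $n$ is odd, as is needed for this slice to be nonempty), this is $\Omega(2^n/\sqrt n)$, which dwarfs $2n^2$ for $n$ past the absolute constant $\const_2$ assumed in Theorem~\ref{thm:cyclerun_main}. The two properties of $\CycleRun$ that drive the argument are monotonicity and oddness; no other structure of $\CycleRun$ is needed.

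First I would record the standard monotone-slice fact in the form I need it: for any monotone $F:\bits^n\to\bits$ the conditional probability $p_j := \Pr[F(\bx)=1 \mid |\bx|=2j-n]$ (the probability over a uniformly random string of Hamming weight exactly $j$) is non-decreasing in $j$. This follows from a one-line double count on the bipartite ``covering'' graph between weight-$j$ and weight-$(j+1)$ strings: writing $A_j$ for the set of weight-$j$ strings on which $F=1$, monotonicity forces every weight-$(j+1)$ up-neighbor of a string in $A_j$ to lie in $A_{j+1}$, so the number of edges out of $A_j$, which is $|A_j|(n-j)$, is at most the number of edges into $A_{j+1}$, which is $|A_{j+1}|(j+1)$; dividing $|A_j|(n-j)\le |A_{j+1}|(j+1)$ by $\binom{n}{j+1}=\frac{n-j}{j+1}\binom nj$ yields $|A_{j+1}|/\binom{n}{j+1}\ge |A_j|/\binom nj$, i.e.\ $p_{j+1}\ge p_j$.

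Next I would bring in oddness. The map $x\mapsto -x$ is a bijection between the level-$(-1)$ slice ($(n-1)/2$ ones) and the level-$1$ slice ($(n+1)/2$ ones), and $\CycleRun(-x)=-\CycleRun(x)$, so $\Pr[\CycleRun(\bx)=1\mid |\bx|=-1] = 1-\Pr[\CycleRun(\bx)=1\mid |\bx|=1]$. Setting $p := \Pr[\CycleRun(\bx)=1\mid |\bx|=1]$ and applying the monotone-slice fact to the two consecutive Hamming weights $(n-1)/2$ and $(n+1)/2$ (i.e.\ to levels $-1$ and $1$) gives $p\ge 1-p$, hence $p\ge \tfrac12$. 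Therefore $|S^1_1| = p\cdot\binom{n}{(n+1)/2}\ge \tfrac12\binom{n}{(n-1)/2}$.

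Finally I would close the numeric gap with a crude estimate: the largest of the $n+1$ binomial coefficients is at least their average, so $\binom{n}{(n-1)/2}\ge 2^n/(n+1)$, giving $|S^1_1|\ge 2^n/(2(n+1))\ge 2n^2$ as soon as $2^n\ge 4n^2(n+1)$, which holds for all $n$ at least a small absolute constant (alternatively one can invoke the Central Limit estimate $\binom{n}{(n-1)/2}=\Omega(2^n/\sqrt n)$ used elsewhere in this section, cf.\ Equation~\eqref{eqn:prop3}). There is no genuine obstacle here; the only points requiring care are the bookkeeping in the monotone-slice double count and the parity/sign conversion between the two slices that turns monotonicity-plus-oddness into ``$p\ge 1/2$''.
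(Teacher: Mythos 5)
Your proposal is correct and follows essentially the same route as the paper: oddness makes the level-$1$ and level-$(-1)$ slices complementary, monotonicity forces the level-$1$ density to be at least the level-$(-1)$ density, hence $\CycleRun=1$ on at least half of the level-$1$ slice, and $\tfrac12\binom{n}{(n-1)/2}\ge 2n^2$ for $n$ beyond a constant. The only difference is that you spell out the monotone-slice double count, which the paper asserts implicitly.
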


\begin{proof}
\ignore{Let $S_z$ denote the set of points such that $|\bx| = z$.}
Note that since $\CycleRun$ is odd, we have that $\sum_{x:|x|=\pm1} \CycleRun(x) = 0$.
Moreover, since $\CycleRun$ is monotone, we must have that $\sum_{x:|x|=1} \CycleRun(x) \geq 
\sum_{x:|x|=-1} \CycleRun(x)$.  Therefore, we must have that $\sum_{x:|x|=1} \CycleRun(x) \geq 0$.
Since $\CycleRun$ is $\{-1,1\}$-valued, this immediately implies that
at least half of the points $x$ where $|x| = 1$ are such that $\CycleRun(x) = 1$.
There are ${n \choose (n-1)/2} \geq 4n^2$ such strings where $|x| = 1$, so we have that $|S^1_1| \geq 2n^2$.
This concludes the proof of Lemma~\ref{lemma:correctness}.
\end{proof}

\begin{lemma} \label{lemma:SetSprime}
$|\SetS'| \leq 2n^2$.
\end{lemma}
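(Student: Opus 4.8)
The plan is to reduce the lemma to a bound on how often step $3b$ of the algorithm $\Const_{\SetS}(\CycleRun, n)$ is executed. Since $\SetS'$ is initialized empty and is only ever enlarged in step $3b$, and each such execution adjoins exactly one cyclic-shift class $\Shift_{x^*}$ with $|\Shift_{x^*}| \le 2n$ (because $|\Shift_{x^*}|$ always divides $2n$), it suffices to show that step $3b$ is executed at most $n-1$ times; this gives $|\SetS'| \le 2n(n-1) < 2n^2$. If step $3b$ is never executed the claim is trivial, so assume otherwise.

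The key step is to control the value of $\mySum$ at every moment that step $3b$ is entered. Step $3b$ is reached only when $\mySum - 2|\Shift_x|\cdot|x| < 0$ for the string $x$ of maximum $|x|$ selected in step $3a$; since $|\Shift_x| \le 2n$ and $|x| = \sum_{i=1}^n x_i \le n$, this forces $\mySum < 4n^2$ at that point. In the other direction, by item 3 of Claim \ref{claim:invariants} the quantity $\mySum$ is always a nonnegative integer multiple of $4n$, and an execution of step $3b$ decreases it by exactly $4n$; hence just before any such execution $\mySum$ is at least $4n$ (the value after the update is again one of the $\mySum^i$ and must be nonnegative). Thus the value of $\mySum$ immediately before any execution of step $3b$ lies in the $(n-1)$-element set $\{4n, 8n, \dots, 4n(n-1)\}$.

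To finish, I would note that $\mySum$ is monotonically non-increasing over the whole run of $\Const_{\SetS}$ — both steps $3b$ and $3c$ only subtract from it — and that it strictly decreases, by at least $4n$, across each execution of step $3b$. Consequently the values of $\mySum$ recorded just before the successive executions of step $3b$ form a strictly decreasing sequence inside $\{4n, 8n, \dots, 4n(n-1)\}$, so there are at most $n-1$ of them, and $|\SetS'| \le 2n(n-1) < 2n^2$. I do not expect a real obstacle; the only point needing care is checking that item 3 of Claim \ref{claim:invariants} (that $\mySum$ stays a nonnegative multiple of $4n$) is in force at every iteration where step $3b$ is invoked, since it is used both to keep the post-update value nonnegative and to pin $\mySum$ to multiples of $4n$. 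Everything else is elementary bookkeeping of the greedy construction and needs none of the Gaussian/Central Limit estimates used elsewhere in the $\CycleRun$ analysis.
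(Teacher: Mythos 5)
Your proposal is correct and follows essentially the same route as the paper's proof: bound $\mySum$ by $4n^2$ at the first time step $3b$ fires (via $|\Shift_x|\le 2n$ and $|x|\le n$), note that each execution of step $3b$ subtracts exactly $4n$ while $\mySum$ stays a nonnegative multiple of $4n$, conclude there are at most $n$ (in your accounting, $n-1$) such executions, and multiply by the at most $2n$ points added per execution. Your version is marginally more careful about the monotonicity of $\mySum$ and the multiples-of-$4n$ invariant, but the substance is identical.
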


\begin{proof}
Consider the first time the condition in line $3b$ evaluates to true.  Then there is some 
some $x$ 
such that $\CycleRun(x) = 1$ and such that $\mySum - 2 |\Shift_x| \cdot |x| < 0$.
Since $|x| \leq n$, this implies that $\mySum \leq 4n^2$.
Moreover, in each iteration $2n$ points are added to $\SetS'$, and $\mySum$ is reduced by $4n$.
Thus, after at most $n$ iterations, $\mySum$ is reduced to $0$.  These iterations are the only iterations that contribute to $\SetS'$, so $|\SetS'| \leq n \cdot 2n = 2n^2$ as claimed.
\end{proof}

We proceed to show that the when the condition in line $3b$ is true,
there is an $x^* \notin \SetS$ such that $\CycleRun(x^*) = 1$ and $|x^*| = 1$.
By Lemma~\ref{lemma:correctness}, 
there exist at least $2n^2$ number of points $x^*$ such that $\CycleRun(x^*) = 1$ and $|x^*| = 1$.  
Thus, if $\Const_{\SetS}$ reaches a point where the condition in line $3b$ evaluates to true and there is no point
$x^* \notin \SetS$ such that
$\CycleRun(x^*) = 1$ and $|x^*| = 1$, then it
must be the case that all such $x^*$ are already contained in $\SetS$.
But since we have by Lemma~\ref{lemma:SetSprime} that $|\SetS'| \leq 2n^2$ then we must have that some 
point $y$ such that $\CycleRun(y) = 1$ and $|y| = 1$ was added to $\SetS$ before the first
time the condition in line $3b$ evaluates to true.
But the first time the condition in line $3b$ evaluates to true, we must have that $|x| > 1$, and 
since $\Const_{\SetS}$ always chooses to add points $y$ with maximal
$|y| \geq |x| > 1$ to the set $\SetS$, this is impossible.

We have now argued that $\Const_{\SetS}$ always reaches a point where the condition in line $3b$ is
true, and that whenever this occurs there always exists a point $x^* \notin \SetS$ such that
$\CycleRun(x^*) = 1$ and $|x^*| = 1$.  This immediately implies that when
$\Const_{\SetS}$ completes, we have $\mySum = 0$ and $|\SetS| \leq c_1 \sqrt{\frac{\log n}{n}} 2^n$.  As in the beginning of the proof, we take $f$ to be function to be the function such that

\[
f(x) =
\begin{cases}
\CycleRun(x) & \mathrm{if \;} x \notin \SetS \\
-\CycleRun(x) & \mathrm{if \;} x \in \SetS.\\
\end{cases}
\]

Clearly, $\Pr_{\bx}[f(\bx) \neq \CycleRun(\bx)] = |\SetS| \leq c_1 \sqrt{\frac{\log n}{n}} 2^n$, and
applying the invariants of Claim~\ref{claim:invariants} shows that $f$ is $1$-resilient, concluding the proof of Theorem~\ref{thm:cyclerun_main}.
\end{proof}

This analysis almost works for any balanced monotone function with influence $O(\log n)$, such as $\Tribes$.  While the above could be adapted in a straightforward matter to show that there is a Boolean function close to $\Tribes$ with very small constant and first-order Fourier coefficients, showing that all of these Fourier coefficients can be made \emph{exactly} zero seems challenging.  Since we are applying these results to juntas, our proofs can not tolerate even exponentially small Fourier coefficients.  The structure of $\CycleRun$ is quite amenable to ``local'' changes while retaining structure.

\newcommand{\by}{\boldsymbol{y}}
\subsection{Influence bound for Cycle Run}

\label{ap:cycle-run-inf} 

\newcommand{\calU}{\mathcal{U}}
\newcommand{\Geo}{\mathcal{G}}
\newcommand{\bg}{\boldsymbol{g}}

The main result of this section is the following:

\begin{theorem}
\label{cyclerun-influence} 
$\Inf(\CycleRun) = O(\log n)$.
\end{theorem}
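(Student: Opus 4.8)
The plan is to exploit the cyclic symmetry of $\CycleRun$ to reduce to a single coordinate, and then to show that this coordinate is pivotal only when it happens to sit inside or next to one of the (few) longest runs of the input. Since $\CycleRun$ is invariant under cyclic shifts, $\Pr_{\bx}[\CycleRun(\bx)\neq \CycleRun(\bx^{\oplus i})]$ is the same for every $i\in[n]$, so $\Inf(\CycleRun) = n\cdot \Pr_{\bx}[\CycleRun(\bx)\neq \CycleRun(\bx^{\oplus 1})]$. It therefore suffices to prove that coordinate $1$ is pivotal with probability $O((\log n)/n)$.

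The combinatorial heart of the argument is a \emph{confinement lemma}. Observe that the value $\CycleRun(x)$ is a function of only three data: (i) the length $M(x)$ of the longest run in $x$; (ii) the numbers of $1$-runs and of $(-1)$-runs of length exactly $M(x)$; and (iii) --- invoked only when (i) and (ii) are ties --- the oriented totals of the segments lying strictly between consecutive length-$M(x)$ runs. A single bit flip changes the run structure only in a window of radius $1$ around the flipped position, so one can enumerate how such a flip can alter (i), (ii) or (iii): each possibility forces coordinate $1$ either to lie within distance $1$ of a run of length $\ge M(x)-1$, or to be the lone entry of a singleton run whose two neighbouring (opposite-colour) runs have combined length $\ge M(x)-1$ (a ``squish position,'' whose flip could merge them into a new near-longest run). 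The only delicate point is item (iii): although it is a global sum, the per-player segment totals are determined by the set of length-$M(x)$ runs together with the colours of the bits immediately following them, and none of these is affected by a flip interior to a segment --- so (iii) can change only at a position adjacent to some longest run. I expect verifying this case analysis (especially the (iii) part) to be the main obstacle; everything else is bookkeeping.

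Granting the confinement lemma, it remains to bound $\Pr_{\bx}[\bx_1$ lies within distance $1$ of a run of length $\ge M(\bx)-1]$ and $\Pr_{\bx}[\bx_1$ is a squish position$]$, each by $O((\log n)/n)$. Decompose over the length $\ell$ of the relevant run(s). A run of length about $\ell$ passing within distance $1$ of a fixed position has probability $O(\ell\, 2^{-\ell})$, since the run may start at any of $O(\ell)$ nearby positions and a prescribed run of length $\ell$ has probability $\Theta(2^{-\ell})$. On the event that this run is nearly longest, there must additionally be no run of length $\ge \ell+2$ among the remaining $\ge n/2$ coordinates (which are independent of the window around position $1$); by the sharp concentration of the longest run in a uniform random string this has probability $\exp(-\Omega(n\,2^{-\ell}))$. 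Hence the probability in question is at most
\[
\sum_{\ell} O\!\left(\ell\,2^{-\ell}\right)\exp\!\left(-\Omega(n\,2^{-\ell})\right) \;=\; \tfrac1n\sum_{j} O\!\left((\log n+|j|)\,2^{-j}\right)\exp\!\left(-\Omega(2^{-j})\right),
\]
after writing $\ell=\log_2 n+j$; this sum is dominated by $|j|=O(1)$ and evaluates to $O((\log n)/n)$. (The contribution of the rare event that $M(\bx)$ is far from $\log_2 n$ is absorbed into the same sum, or handled by the trivial bound $\le n$ on the number of pivotal coordinates.) The squish positions are bounded the same way, since a squish at position $1$ also produces, after the flip, a run of length $\ge \ell := a+b+1 \ge M(\bx)$, and the configuration pinning down the two adjacent runs of lengths $a,b$ occupies a window of $\Theta(\ell)$ coordinates while forcing no run of length $\ge \ell+1$ elsewhere; the resulting sum over $\ell$ again is $O((\log n)/n)$. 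Combining the two bounds, $\Pr_{\bx}[\bx_1$ is pivotal$]=O((\log n)/n)$, and therefore $\Inf(\CycleRun)=O(\log n)$.
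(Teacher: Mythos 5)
Your proposal is correct and follows essentially the same route as the paper's: by cyclic symmetry your quantity $n\cdot\Pr_{\bx}[\text{coordinate 1 is pivotal}]$ equals the paper's bound $2\,\E_{\bx}[\ell(\bx)\cdot(r_{\ell(\bx)}(\bx)+1)]$ (pivotal coordinates lie in a maximum-length run of one of the two endpoints, which is your confinement lemma), and your tail computation $\sum_{\ell} O(\ell\,2^{-\ell})\exp(-\Omega(n2^{-\ell})) = O((\log n)/n)$ is the same sum the paper evaluates, with the longest-run concentration bound $\exp(-\Omega(n2^{-\ell}))$ that the paper derives via its geometric-runs representation. Just make sure you use that form of the bound (and not the cruder disjoint-block estimate $\exp(-\Omega(n2^{-\ell}/\ell))$, which would only give $O(\log^2 n)$).
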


The condition on $\CycleRun$ given in Definition~\ref {def:cyclerun} implies that for every influential edge $(x,x^{\oplus i})$, at least one of the endpoints is in the first two cases in Definition~\ref{def:cyclerun}, and the pivotal coordinate $i$ occurs in a maximum length run.  Thus $\Inf(\CycleRun) \leq 2\Ex_{\bx \sim \calU}[\ell(\bx) \cdot (r_{\ell(\bx)}(\bx) + 1)]$, where $\ell(x)$ is the maximum length run in the string $x$, $r_i(x)$ is the number of maximal runs of length exactly $i$ in $x$, and $\calU$ is the uniform distribution on $\{-1,1\}^n$.  In this section, we will not consider the runs wrapping around, and the $+1$ here takes care of the case that we ``split'' the cycle in a maximum length run to lay out the bits in a line.  

We make use of a result from \cite{schilling1990longest}:

\begin{theorem}
$\Ex_{\bx \sim \calU}[\ell(\bx)] = O(\log n)$
\end{theorem}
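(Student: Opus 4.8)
The plan is to prove the bound directly, via a union bound over run locations followed by a short tail-sum estimate; this also makes the statement self-contained rather than merely quoted from \cite{schilling1990longest}. Write $\ell(x)$ for the length of the longest run (of either sign, and without wrap-around, consistent with the convention fixed just above) in a string $x\in\{-1,1\}^n$. Since $1\le \ell(\bx)\le n$ always, the expectation can be written as a tail sum: $\Ex_{\bx\sim\calU}[\ell(\bx)] = \sum_{k=1}^{n}\Pr_{\bx}[\ell(\bx)\ge k]$, and it suffices to bound each tail probability and then sum.

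First I would bound $\Pr_{\bx}[\ell(\bx)\ge k]$ for each fixed $k$. If $\bx$ has a run of length at least $k$, then $\bx$ is constant on some window of $k$ consecutive coordinates $\{i,i+1,\dots,i+k-1\}$; there are at most $n$ such windows (the starting positions $i=1,\dots,n-k+1$), and for any fixed window the probability that $\bx$ is constant on it is exactly $2\cdot 2^{-k}=2^{1-k}$ (two constant patterns out of $2^k$). A union bound therefore gives $\Pr_{\bx}[\ell(\bx)\ge k]\le n\cdot 2^{1-k}$, and trivially $\Pr_{\bx}[\ell(\bx)\ge k]\le 1$.

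Next I would combine the two bounds by splitting the sum at the threshold $k_0:=\lceil \log_2 n\rceil + 2$. For $k\le k_0$ I bound each term by $1$, contributing at most $k_0 = O(\log n)$. For $k>k_0$ I use the geometric bound and sum the tail: $\sum_{k>k_0} n\cdot 2^{1-k} = n\cdot 2^{1-k_0} \le n\cdot 2^{-1-\log_2 n} = 1/2$, so the large-$k$ part contributes $O(1)$. Adding the two pieces yields $\Ex_{\bx\sim\calU}[\ell(\bx)] \le k_0 + 1 = O(\log n)$, which is the claim.

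There is no genuine obstacle in this argument; the only places needing (minor) care are (i) counting \emph{linear} windows rather than cyclic ones, so that the union is over at most $n$ positions — which matches the stated non-wrap-around convention — and (ii) choosing the split point $k_0$ generously enough (here $\lceil\log_2 n\rceil+2$) that the geometric tail is honestly $O(1)$. If one instead prefers to argue about $1$-runs and $(-1)$-runs separately, the same computation applies with each window monochromatic of a prescribed sign with probability $2^{-k}$, and the two contributions add, giving the identical conclusion up to the constant.
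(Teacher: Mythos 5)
Your proposal is correct. Note that the paper does not actually prove this statement at all: it is imported wholesale as a quoted result from \cite{schilling1990longest} (Schilling's survey on the longest run of heads), so there is no internal argument to compare against. Your union-bound-plus-tail-sum proof is the standard elementary route: writing $\E[\ell(\bx)]=\sum_{k\ge 1}\Pr[\ell(\bx)\ge k]$, bounding $\Pr[\ell(\bx)\ge k]\le n\cdot 2^{1-k}$ via the at most $n$ length-$k$ windows, and splitting the sum at $k_0=\lceil\log_2 n\rceil+2$ gives exactly the claimed $O(\log n)$, and each step checks out (the geometric tail is indeed $n\cdot 2^{1-k_0}\le 1/2$). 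What your approach buys is self-containedness: the paper's influence bound for $\CycleRun$ only needs this crude upper bound, not the sharp asymptotics ($\E[\ell(\bx)]=\log_2 n+O(1)$) that the cited reference provides, so your two-line argument would suffice in place of the citation. Your handling of the wrap-around convention is also consistent with the paper, which explicitly works with non-wrapping runs in that section; and in any case allowing cyclic windows would at most double the union bound and leave the conclusion unchanged.
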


\newcommand{\bb}{\boldsymbol{b}}

Thus $\Inf(\CycleRun) \leq 2\Ex_{\bx \sim \calU}[\ell(\bx) \cdot r_{\ell(\bx)}(\bx)] + O(\log n)$, so the remainder of the section is devoted to showing $\Ex_{\bx \sim \calU}[\ell(\bx) \cdot r_{\ell(\bx)}(\bx)] = O(\log n)$.  To aid in our analysis, we will consider different distributions over binary strings.  Consider the following method of generating a string $\bx \sim \calU$:

\begin{enumerate}
\item Initialize $\bx$ to the empty string, and set $b$ to a uniform $\pm1$ random bit $\bb$.

\item (Iterative step) Assuming there are still $j > 0$ bits of $\bx$ to determine, then draw $\bg \sim \mathrm{Geometric}(1/2)$ conditioned on $\bg$ being at most $j$, and set the next $\bg$ bits of $\bx$ to $b$.

\item If not all $n$ bits of $\bx$ are set, set $b$ to $-b$ and return to step 2.

\item If all bits of $\bx$ are set, then $\bx$ is a uniformly random string in $\{-1,1\}^n$.
\end{enumerate}

Further, if we want to condition on the maximum run in $\bx$ being at most some value $t$, we can replace the conditioning in step $2$ from ``being at most $j$'' to ``being at most $\min\{t,j\}$''. 

\begin{lemma}
\label{lem:geo-condition}
For $\bg \sim \mathrm{Geometric}(1/2)$, and $1 \leq g \leq t$, we have $\Pr[\bg = g | \bg \leq t] \leq 2\Pr[\bg = g]$.
\end{lemma}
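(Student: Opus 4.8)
The plan is to unwind both sides of the claimed inequality using the explicit probability mass function of the geometric distribution and observe that the bound reduces to a trivial estimate on the normalizing constant of the conditioning event.

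Recall that for $\bg \sim \mathrm{Geometric}(1/2)$ we have $\Pr[\bg = g] = 2^{-g}$ for every integer $g \geq 1$. First I would compute $\Pr[\bg \leq t] = \sum_{g=1}^{t} 2^{-g} = 1 - 2^{-t}$. Then, for $1 \leq g \leq t$,
\[
\Pr[\bg = g \mid \bg \leq t] = \frac{\Pr[\bg = g]}{\Pr[\bg \leq t]} = \frac{2^{-g}}{1-2^{-t}} = \Pr[\bg = g]\cdot \frac{1}{1-2^{-t}}.
\]
So it suffices to check that $\frac{1}{1-2^{-t}} \leq 2$, equivalently $1 \leq 2(1-2^{-t}) = 2 - 2^{-t+1}$, i.e.\ $2^{-t+1}\leq 1$, which holds for every $t \geq 1$. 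This gives the claim.

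There is no real obstacle here: the only thing to be slightly careful about is the convention for the geometric distribution (support starting at $1$, so that a run length of $g$ occurs with probability $2^{-g}$), which matches the use of $\mathrm{Geometric}(1/2)$ in the string-generation procedure described just above the lemma, where a fresh run has length at least $1$. Once that convention is fixed, the whole argument is the two displayed lines above.
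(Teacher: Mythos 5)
Your proof is correct and matches the paper's argument: the paper likewise notes that the claim follows from conditional probability together with $\Pr[\bg \leq t] \geq 1/2$ for all $t \geq 1$, which is exactly the bound $\frac{1}{1-2^{-t}} \leq 2$ you verify explicitly.
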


\begin{proof}
Follows directly from conditional probability and the fact that $\Pr[\bg \leq t] \geq 1/2$ for all $t \geq 1$.
\end{proof}

For an integer $k > 0$, we define the distribution $\Geo_k$ on binary strings of varying length such that a draw from $\Geo_k$ is $\bb^{\bg_1}(-\bb)^{\bg_2}\bb^{\bg_3}
\cdots \bb^{\bg_k}$ if $k$ is odd and $\bb^{\bg_1}(-\bb)^{\bg_2}\bb^{\bg_3}
\cdots (-\bb)^{\bg_k}$ if $k$ is even.  Here, the $\bg_i$'s are independent $\mathrm{Geometric}(1/2)$ variables, and $\bb$ is a uniform $\pm1$ bit.

\begin{lemma} \label{lem:numbermaximal}
\[
\Ex_{\bx \sim \calU}[\ell(\bx) \cdot r_{\ell(\bx)}(\bx) | \ell(\bx) = t] \leq t(2^{1-t}n + 1)
\]
\end{lemma}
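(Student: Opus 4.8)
The plan is to reduce the claim to an estimate on the expected number of maximal runs of length exactly $t$, then isolate one such run whose existence is forced by the conditioning and bound the rest via the geometric generative process. Since $\ell(\bx)\cdot r_{\ell(\bx)}(\bx)=t\cdot r_t(\bx)$ on the event $\{\ell(\bx)=t\}$, it suffices to show $\Ex_{\bx\sim\calU}[\,r_t(\bx)\mid\ell(\bx)=t\,]\le 2^{1-t}n+1$. First I would condition on the position of the \emph{leftmost} maximal run of length exactly $t$ --- this run exists whenever $\ell(\bx)=t$ --- and write $r_t(\bx)=1+\rho(\bx)$, where $\rho(\bx)$ counts the maximal length-$t$ runs lying strictly to the right of it. It then remains to prove $\Ex[\rho(\bx)\mid\ell(\bx)=t]\le 2^{1-t}n$.

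The second step is to exploit the product structure of $\calU$. Fix the leftmost length-$t$ run to occupy positions $\{p-t+1,\dots,p\}$ with common value $w\in\{-1,1\}$. This event is a conjunction of constraints on \emph{disjoint} blocks of coordinates: the prefix $x_1\cdots x_{p-t}$ must have all of its runs of length $\le t-1$ (and its last bit equal to $-w$), the block $x_{p-t+1}\cdots x_p$ is identically $w$, and the suffix $x_{p+1}\cdots x_n$ must begin with $-w$ and have all of its runs of length $\le t$. Since $\calU$ is a product measure, conditioning on this event makes the suffix distributed as a uniform string of length $n-p$ conditioned on (i) its first bit being $-w$ and (ii) all of its maximal runs having length $\le t$, independently of the prefix. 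Moreover every run in the prefix has length strictly less than $t$, so $\rho(\bx)$ equals precisely the number of length-$t$ runs of this suffix.

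The third step is to estimate that number. Using the generative process for a uniform string conditioned on all runs being $\le t$, the successive run-lengths are independent $\mathrm{Geometric}(1/2)$ variables, each truncated to $\min\{t,j\}$ where $j$ is the number of remaining positions; fixing the first bit merely pins down the independent initial sign and leaves these lengths unchanged. By Lemma~\ref{lem:geo-condition}, conditioned on everything generated before it, each run has length exactly $t$ with probability at most $2\cdot 2^{-t}=2^{1-t}$. The suffix has at most $n$ runs, and the event that it has at least $i$ runs is determined by the first $i-1$ run-lengths, so a routine filtration/linearity argument bounds the expected number of length-$t$ runs by $2^{1-t}$ times the expected number of runs, hence by $2^{1-t}n$ --- uniformly over the conditioning on $(p,w)$. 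Averaging over the location of the leftmost length-$t$ run gives $\Ex[\rho(\bx)\mid\ell(\bx)=t]\le 2^{1-t}n$, so $\Ex[r_t(\bx)\mid\ell(\bx)=t]\le 2^{1-t}n+1$, and multiplying by $t$ completes the proof.

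The step I expect to be the main obstacle is correctly handling the conditioning on $\ell(\bx)=t$ \emph{exactly} rather than on $\ell(\bx)\le t$. One cannot simply bound $\Ex[r_t(\bx)\mid\ell(\bx)=t]$ by $\Ex[r_t(\bx)\mid\ell(\bx)\le t]$: the latter can be far below $1$ when $t\gg\log n$ (it is at most $2^{1-t}n$), whereas $r_t(\bx)\ge 1$ always holds on $\{\ell(\bx)=t\}$, so in fact the inequality goes the wrong way. The $+1$ in the target bound is exactly the contribution of the single run that the conditioning forces to exist, and the decomposition via the leftmost length-$t$ run together with the product structure of $\calU$ is what lets one bound everything else by $2^{1-t}n$. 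The remaining points --- the boundary cases $p=t$ (no prefix) and $p=n$ (empty suffix), and making ``at most $n$ runs'' rigorous for a random number of runs --- are routine.
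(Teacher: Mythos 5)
Your proof is correct and follows essentially the same route as the paper's: isolate one length-$t$ run forced by the conditioning (the $+1$ term, times $t$), then bound the expected number of remaining length-$t$ runs by $2^{1-t}n$ via the geometric run-length generative process and Lemma~\ref{lem:geo-condition}. Your explicit conditioning on the position of the leftmost maximal length-$t$ run, together with the product-structure factorization of the prefix and suffix, is a more careful rendering of the paper's somewhat informal step ``this expectation is maximized when the first run is of length $t$,'' but it is the same argument.
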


\begin{proof}

We first claim that
\[
\Ex_{\bx \sim \calU}[\ell(\bx) \cdot r_{\ell(\bx)}(\bx) | \ell(\bx) = t] \leq t + \Ex_{\bx \sim \calU}[\ell(\bx) \cdot r_t(\bx) | \ell(\bx) \leq t]
\]

To see this, note that if we further condition on the first run of length $t$ selected, this expectation is maximized when the first run is of length $t$.  Also, the expectation can only increase if we allow all $n$ more bits to be set rather than $n-t$.  Since the first run is of length $t$, we only need the maximum length run to be at most $t$ in the rest of the string.

Now we have

\[
t + \Ex_{\bx \sim \calU}[\ell(\bx) \cdot r_t(\bx) | \ell(\bx) \leq t]
\leq t + t \Ex_{\bx \sim \calU}[r_t(\bx) | \ell(\bx) \leq t] \leq
t + t \Ex_{\by \sim \Geo_n}[r_t(\by) | \ell(\by) \leq t] 
\]

where the second inequality comes from the fact that $\bx$ is generated by at most $n$ runs, and not bounding the length of the string only increases the possible number of runs of length $t$, conditioned on the maximum length run being at most $t$.  By Lemma~\ref{lem:geo-condition},
the probability of a single run being of length $t$ is at most $2^{1-t}$, so we have

\[
t + t \Ex_{\by \sim \Geo_n}[r_t(\by) | \ell(\by) \leq t] \leq t + t (2^{1-t}n) = t(2^{1-t}n + 1)
\]

completing the proof.
\end{proof}

\ignore{
\begin{lemma}
\[
\Ex_{\bx \sim \calU}[\ell(\bx) \cdot r_{\ell(\bx)}(\bx) | \ell(\bx) = t] \leq 2\Ex_{\by \sim \Geo_n}[\ell(\by) \cdot r_{\ell(\by)}(\by) | \ell(\by) = t] \leq t(2^{1-t}n + 1)
\]
\end{lemma}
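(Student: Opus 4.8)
Since conditioning on $\ell(\bx)=t$ forces $r_{\ell(\bx)}(\bx)=r_t(\bx)$, the quantity to bound equals $t\cdot\Ex_{\bx\sim\calU}[r_t(\bx)\mid\ell(\bx)=t]$, so it suffices to prove $\Ex_{\bx\sim\calU}[r_t(\bx)\mid\ell(\bx)=t]\le 2^{1-t}n+1$. I would analyze $\bx$ through the run-by-run geometric generative process described just above the lemma, in which the draw for a run at a step with $j$ remaining bits is $\mathrm{Geometric}(1/2)$ conditioned on being at most $\min\{t,j\}$ once we impose $\ell(\bx)\le t$, and conditioning further on $\ell(\bx)=t$ additionally demands that at least one run attain length exactly $t$.

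First I would isolate the first run of length exactly $t$. Writing the runs as $R_1,R_2,\ldots$ with lengths $\bg_1,\bg_2,\ldots$, let $j^\ast$ be the index of the first run with $\bg_{j^\ast}=t$ (well defined under the conditioning $\ell(\bx)=t$); by minimality $\bg_i<t$ for $i<j^\ast$. Conditioning on $j^\ast$ and on $\bg_1,\ldots,\bg_{j^\ast}$, the prefix consumes $\sum_{i\le j^\ast}\bg_i\ge t$ bits and contributes exactly $1$ to $r_t(\bx)$, while the remaining $n-\sum_{i\le j^\ast}\bg_i\le n-t$ bits are generated by continuing the process conditioned on every further run having length at most $t$. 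Hence $\Ex[r_t(\bx)\mid\ell(\bx)=t,\,j^\ast,\,\bg_1,\ldots,\bg_{j^\ast}]=1+\Ex[\#\{\text{length-}t\text{ runs in the remaining}\le n-t\text{ bits}\}\mid\text{all further runs}\le t]$. This conditional expectation is nondecreasing in the number of leftover bits (more leftover bits can only create more length-$t$ runs), hence largest when the prefix is shortest, i.e. $j^\ast=1$ and $\bg_1=t$, leaving $n-t$ bits; and replacing those $\le n-t$ bits by a fresh full $n$-bit string only increases the count. This yields $\Ex_{\bx\sim\calU}[r_t(\bx)\mid\ell(\bx)=t]\le 1+\Ex_{\bx\sim\calU}[r_t(\bx)\mid\ell(\bx)\le t]$.

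It remains to bound $\Ex_{\bx\sim\calU}[r_t(\bx)\mid\ell(\bx)\le t]$. An $n$-bit string comprises at most $n$ runs, and not bounding the string length only increases the possible number of length-$t$ runs, so this is at most $\Ex_{\by\sim\Geo_n}[r_t(\by)\mid\ell(\by)\le t]$, where $\Geo_n$ has exactly $n$ independent $\mathrm{Geometric}(1/2)$ run lengths $\bg_1,\ldots,\bg_n$. The event $\ell(\by)\le t$ is the intersection of the independent events $\{\bg_i\le t\}$, so after conditioning the $\bg_i$ remain independent, each distributed as $\mathrm{Geometric}(1/2)$ conditioned on being at most $t$; by Lemma~\ref{lem:geo-condition}, $\Pr[\bg_i=t\mid\bg_i\le t]\le 2\Pr[\bg_i=t]=2^{1-t}$. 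Linearity of expectation then gives $\Ex_{\by\sim\Geo_n}[r_t(\by)\mid\ell(\by)\le t]=\sum_{i=1}^{n}\Pr[\bg_i=t\mid\bg_i\le t]\le 2^{1-t}n$, and combining everything, $\Ex_{\bx\sim\calU}[\ell(\bx)\,r_{\ell(\bx)}(\bx)\mid\ell(\bx)=t]=t\cdot\Ex_{\bx\sim\calU}[r_t(\bx)\mid\ell(\bx)=t]\le t(2^{1-t}n+1)$.

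\textbf{Main obstacle.} The delicate step is the monotonicity claim that the conditional count of length-$t$ runs in the suffix is maximized when the first length-$t$ run occurs first and has length exactly $t$: one must check that conditioning on $\ell(\bx)=t$ inside the generative process interacts cleanly with ``freezing'' the prefix, since the per-step conditioning is on $\mathrm{Geometric}(1/2)$ truncated at $\min\{t,j\}$ and the remaining-bit count $j$ itself depends on the prefix. I expect the cleanest way to discharge this is to exhibit an explicit monotone coupling between the suffix process with $n-\sum_{i\le j^\ast}\bg_i$ bits and the one with $n$ bits (both conditioned on all runs $\le t$) under which every length-$t$ run of the former maps to a distinct length-$t$ run of the latter; granting that coupling, the remaining calculations are routine.
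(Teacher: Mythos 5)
Your proof is correct and takes essentially the same route as the paper's: pull out the factor $t$, isolate the first length-$t$ run (the additive $+1$), relax the remaining $\le n-t$ bits to a full string conditioned on all runs being at most $t$, pass to $\Geo_n$ since an $n$-bit string has at most $n$ runs, and finish with Lemma~\ref{lem:geo-condition} plus linearity of expectation to get $2^{1-t}n$. The monotonicity step you flag as delicate is exactly the step the paper also asserts only informally ("the expectation can only increase if we allow all $n$ more bits"), and, like the paper's own argument, you bound the left-hand side by $t(2^{1-t}n+1)$ directly rather than via the displayed intermediate quantity, which is all that is needed downstream.
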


\begin{proof}
We can couple a draw from $\bx \sim \calU$ and a draw from $\by \sim \Geo_n$ for the first few runs conditioned on $\ell(\bx) = t$, until more than $n-t$ bits of $\bx$ are fixed.  After this point, $\bx$ can contain no more runs of length $t$, while $\by$ can, so the expected number of such runs can only increase.  

Conditioning on $\ell(\bx) = t$, $\ell(\by) = t$, all runs in $\bx$ or $\by$ are of length at most $t$, so the probability that any run is of length exactly $t$ is at most $2^{1-t}$.  Because the runs in $\by$ are of independent length, we can arbitrarily choose a run to have length exactly $t$, and consider the expected number of runs of length $t$ conditioned on $\ell(\by) \leq t$ for the rest.  Thus the expected number of such runs is $2^{1-t}(n-1) + 1 \leq 2^{1-t}n + 1$, finishing the claim since we fix the maximum length run to be $t$.

\end{proof}

}
\begin{lemma}
\[
\Pr_{\bx \sim \calU}[\ell(\bx) \leq t] \leq (1 - 2^{-t})^{n/8} + \exp(-n/32)
\]
\end{lemma}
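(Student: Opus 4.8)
The plan is to expose the run structure of a uniform $\bx$ through a renewal process. Let $\bg_1,\bg_2,\dots$ be i.i.d.\ $\mathrm{Geometric}(1/2)$ random variables, so $\Pr[\bg_i=k]=2^{-k}$ for $k\ge 1$, $\E[\bg_i]=2$, and $\Pr[\bg_i\le t]=1-2^{-t}$. Let $S_k=\bg_1+\cdots+\bg_k$ and $\bK=\min\{k:S_k\ge n\}$, and build $\bx\in\{-1,1\}^n$ by choosing its first bit uniformly and giving it the run-length sequence $(\bg_1,\dots,\bg_{\bK-1},\,n-S_{\bK-1})$: the first $\bK-1$ maximal runs of $\bx$ have lengths $\bg_1,\dots,\bg_{\bK-1}$ and the final run has the leftover length $n-S_{\bK-1}\in[1,\bg_{\bK}]$. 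This is a minor variant of the generation process described above, and the same computation shows $\bx$ is uniform: the run-length composition $(l_1,\dots,l_k)$ is produced with probability $\Pr[\bg_1{=}l_1]\cdots\Pr[\bg_{k-1}{=}l_{k-1}]\cdot\Pr[\bg_k\ge l_k]=2^{-(l_1+\cdots+l_{k-1})}\cdot 2^{-(l_k-1)}=2^{1-n}$, which (since each composition corresponds to exactly two strings) is the uniform law. The advantage of this particular coupling is that the first $\bK-1$ run lengths are \emph{honest i.i.d.\ geometrics}, with only the last run truncated.

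\textbf{Reduction to the geometrics.} If $\ell(\bx)\le t$, then every one of the first $\bK-1$ runs has length at most $t$, i.e.\ $\bg_i\le t$ for all $1\le i\le\bK-1$. Hence, writing $m=\lceil n/8\rceil$,
\[
\Pr[\ell(\bx)\le t]\ \le\ \Pr[\,\bg_i\le t\text{ for all }i\le\bK-1\,]\ \le\ \Pr[\bg_1\le t,\dots,\bg_m\le t]\ +\ \Pr[\bK-1<m],
\]
where the second step uses the inclusion $\{\bg_i\le t\ \forall i\le\bK-1\}\cap\{\bK-1\ge m\}\subseteq\{\bg_i\le t\ \forall i\le m\}$. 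The first term is $(1-2^{-t})^m\le(1-2^{-t})^{n/8}$ by independence, so it remains to show $\Pr[\bK-1<m]\le\exp(-n/32)$ for $n$ large. Since $\bK-1<m\iff\bK\le m\iff S_m\ge n$ and $\E[S_m]=2m\le n/4+2$, this is a genuine large-deviation event.

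\textbf{Chernoff bound for the renewal count.} For $0<\lambda<\ln 2$ we have $\E[e^{\lambda\bg_i}]=e^\lambda/(2-e^\lambda)$, so $\Pr[S_m\ge n]\le e^{-\lambda n}\bigl(e^\lambda/(2-e^\lambda)\bigr)^m$. Taking $e^\lambda=7/4$ (the optimal tilt when $m$ is about $n/8$) makes $e^\lambda/(2-e^\lambda)=7$, and using $m\le n/8+1$ gives $\Pr[S_m\ge n]\le 7\cdot(4/7)^n\cdot 7^{n/8}=7\exp\bigl((\ln 4-(7/8)\ln 7)\,n\bigr)$. As $\ln 4-(7/8)\ln 7=1.386\ldots-1.702\ldots<-1/32$, this is below $\exp(-n/32)$ once $n$ is large enough to absorb the constant factor $7$. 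Combining the three displayed bounds proves the lemma.

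\textbf{Main obstacle.} The only delicate point is arranging the coupling so that the first $\bK-1$ run lengths are exactly i.i.d.\ $\mathrm{Geometric}(1/2)$ — the naive ``draw a geometric conditioned on fitting in the remaining bits'' process creates awkward dependencies between successive run lengths, whereas here the conditioning is pushed entirely onto the final run, which the argument simply discards. Everything else is a routine Chernoff estimate; one only has to pick the tilt ($e^\lambda=7/4$) so the resulting exponent comfortably beats $1/32$, and handle the harmless rounding in $m=\lceil n/8\rceil$.
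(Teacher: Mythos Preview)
Your proof is correct and follows essentially the same approach as the paper: represent the runs via a geometric renewal process, show via a Chernoff-type bound that the number of runs is at least $n/8$ except with probability $\exp(-n/32)$, and then use independence to get the $(1-2^{-t})^{n/8}$ factor. The paper cites an external reference for the negative-binomial tail bound whereas you carry out the moment-generating-function computation explicitly; your ``truncate only the last run'' coupling is a slightly cleaner way of arranging the i.i.d.\ geometrics than the paper's ``condition each draw to fit'' generation, but the underlying argument is the same.
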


\begin{proof}
For $\bx \in \bits^n$, let $\mathrm{runs}(\bx)$ be the number of runs in $\bx$.  
We first show that with probability at least $1 - \exp(-n/32)$, a string $\bx \sim \calU$ has $\mathrm{runs}(\bx)\geq n/8$ .  To do this, we prove that with probability $1-\exp(-n/32)$, the first $n/8$ runs of $\bx$ contain at most $n/2$ bits.
Note that we may instead bound the number of bits in $\by \sim \Geo_{n/8}$, since each run of $\Geo_{n/8}$ can only be longer.  

The expected number of bits in $\Geo_{n/8}$ generated is $n/4$, and this number of bits is concentrated around its mean; the number of bits has a negative binomial distribution.  By \cite{Brown:dj}, we have
\[
\Pr_{\by \sim \Geo_{n/8}}[\mathrm{bits}(\by) > 2(n/4)] \leq \exp(-n/32)
\]
where the second inequality holds because the number of runs does not increase the probability of getting a longer run, and the distributions of the lengths of each run in $\bx$ are identical to (or conditioned on being shorter than) the lengths of the runs in $\Geo_{n/8}$. 
We then have: 
\begin{align*}
\Pr_{\bx \sim \calU}[\ell(\bx) \leq t] & \leq \Pr_{\bx \sim \calU}[\ell(\bx)\leq t,\mathrm{runs}(\bx)\geq n/8] + \exp(-n/32)\\
& \leq \Pr_{\by \sim \Geo_{n/8}}[\ell(\by) \leq t] + \exp(-n/32)
\end{align*}
where the second inequality holds because the length of each run of $\bx$ is distributed identically (or conditioned to be shorter) to each run of $\by$, and considering fewer runs only decreases the chances of obtaining a run longer than $t.$ 
It is then straightforward to calculate $\Pr_{\by \sim \Geo_{n/8}}[\ell(\by) \leq t] = (1 - 2^{-t})^{n/8}$, since $\Pr[\bg \leq t] = 1 - 2^{-t}$ for $\bg \sim \mathrm{Geometric}(1/2)$.
\end{proof}

We now proceed to show $\Ex_{\bx \sim \calU}[\ell(\bx) \cdot r_{\ell(\bx)}] = O(\log n)$, starting by applying total expectation and applying Lemma \ref{lem:numbermaximal}:

\begin{align*}
\Ex_{\bx \sim \calU}[\ell(\bx) \cdot r_{\ell(\bx)}(\bx)] & = 
\sum_{t=1}^n \Pr_{\bx \sim \calU} [\ell(\bx) = t] \E_{\bx \sim \calU} [\ell(\bx) \cdot r_{\ell} | \ell(\bx) = t] \\
& \leq
\sum_{t=1}^n \Pr_{\bx \sim \calU} [\ell(\bx) = t] t(2^{1-t}n + 1) \\
& \leq
\E_{x\sim U}[\ell(\bx)] + \sum_{t=1}^n \Pr_{\bx \sim \calU} [\ell(\bx) = t] t2^{1-t}n \\
& \leq
O(\log n) + \sum_{t=1}^n ((1 - 2^{-t})^{n/8} + \exp(-n/32))t2^{1-t}n \\
& \leq
O(\log n) + \sum_{t=1}^n (1 - 2^{-t})^{n/8})t2^{1-t}n \\
& \leq
O(\log n) + \sum_{t=1}^n tn2^{1-t}\exp(-2^{-t}n/8)\\
\end{align*}

Letting $a_t = tn2^{1-t} \exp(-2^{-t}n/8)$, we see that $a_{t-1}/a_t < 3/4$ when $2 \leq t \leq \log n - 10$, and $a_{t+1}/a_t < 3/4$ when $\log n + 10 \leq t \leq n$.
Also, $a_t \leq O(\log n)$ for each term where $\log n - 10 \leq t \leq \log n + 10$.  So the proof is completed by noting the above is at most

\begin{align*}
O(\log n) + \sum_{t=2}^{\log n - 10} a_{\log n - 10} (3/4)^{\log n - 10 - t} + \sum_{t=\log n-9}^{t=\log n+9} a_t + \sum_{t=\log n+10}^n a_{\log n + 10} (3/4)^{t-(\log n+10)} \\
\leq
O(\log n)\left(\sum_{t=2}^{\log n - 10} (3/4)^{\log n - 10 - t} + \sum_{t=\log n-9}^{t=\log n+9} 1 + \sum_{t=\log n+10}^n (3/4)^{t-(\log n+10)}\right) = O(\log n). \\
\end{align*}

\subsection{Lower bound for monotonicity-resiliency distance}
\label{subsec:mono-res-distance}
We give a lower bound for distance between monotonicity and resiliency that matches the bound for $\CycleRun$ up to constant factors.

\begin{theorem}
For every monotone function $f : \{-1,1\}^n \into \{-1,1\}$ and $1$-resilient $g : \{-1,1\}^n \into \{-1,1\}$, we have $\Pr_{\bx}[ f(\bx) \neq g(\bx)] \geq \Omega(\sqrt{\frac{\log n}{n}})$.
\end{theorem}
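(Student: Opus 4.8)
The plan is to fix an arbitrary $1$-resilient $g:\{-1,1\}^n\to\{-1,1\}$, set $\rho:=\Pr_{\bx}[f(\bx)\ne g(\bx)]$, and show $\rho=\Omega(\sqrt{(\log n)/n})$ directly. Two easy observations reduce the problem. First, $\rho\ge\tfrac12|\widehat f(\emptyset)-\widehat g(\emptyset)|=\tfrac12|\widehat f(\emptyset)|$, so we may assume $f$ is nearly balanced, $|\widehat f(\emptyset)|=O(\sqrt{(\log n)/n})$. Second — and this is the heart of the matter — since $f$ is monotone we have $\sum_i\widehat f(\{i\})=\Inf(f)$, and since $g$ is $1$-resilient we have $\widehat{f-g}(S)=\widehat f(S)$ for all $|S|\le 1$; consequently $\E_{\bx}[(f(\bx)-g(\bx))\cdot|\bx|]=\sum_i\widehat f(\{i\})=\Inf(f)$, where $|\bx|=\sum_i\bx_i$. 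Thus the sparse $\{-2,0,2\}$-valued function $f-g$, supported on exactly $\rho 2^n$ inputs, must carry all of the level-$1$ Fourier mass of $f$.

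I would then compare $f-g$ to the symmetric threshold functions $f_t$ from Section~\ref{ap:cycle-run}. Among all $\{-1,0,1\}$-valued functions with support of size $\rho 2^n$, the one maximizing $\E_{\bx}[v(\bx)\cdot|\bx|]$ puts $+1$ on the inputs of largest Hamming weight and $-1$ on those of smallest, i.e.\ it is $f_t$ for the threshold $t$ with $\Pr_{\bx}[f_t(\bx)\ne0]=\rho$, and for that function $\E_{\bx}[f_t(\bx)\cdot|\bx|]=\Inf(f_t)$. Hence $\Inf(f)=\E_{\bx}[(f-g)\cdot|\bx|]\le 2\Inf(f_t)$. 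Now I plug in the Gaussian estimates (\ref{eqn:prop1})--(\ref{eqn:prop2}): in the regime we care about ($\rho\le\sqrt{(\log n)/n}$, as otherwise we are already done), for large $n$ the threshold satisfies $\sqrt{\log n}/100<t<n^{1/10}$, so $\Inf(f_t)\le 3\phi(t)\sqrt n$, while $\rho\ge\phi(t)/(3t)$ gives $\phi(t)\le 3t\rho$ and $\rho\le 3\phi(t)/t$ forces $t=O(\sqrt{\log(1/\rho)})=O(\sqrt{\log n})$. Combining, $\Inf(f)\le 2\Inf(f_t)=O(t\rho\sqrt n)=O(\rho\sqrt{n\log n})$, i.e.\ $\rho=\Omega\!\left(\Inf(f)/\sqrt{n\log n}\right)$. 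So whenever $\Inf(f)=\Omega(\log n)$ — which holds in particular for $\CycleRun$ (by transitivity together with the KKL theorem, or directly from $\Inf(\CycleRun)=\Theta(\log n)$) and for $\Tribes$ — we obtain $\rho=\Omega(\sqrt{(\log n)/n})$, proving the bound and showing that Theorem~\ref{thm:cyclerun_main} is tight up to constants.

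The remaining case is balanced monotone $f$ with $\Inf(f)=o(\log n)$, which calls for a complementary argument. Since $g$ is Boolean and $1$-resilient, $\E_{\bx}[f(\bx)g(\bx)]=\sum_{|S|\ge2}\widehat f(S)\widehat g(S)\le\|f^{\ge2}\|_2\le\sqrt{1-\widehat f(\emptyset)^2-W_1(f)}$, where $W_1(f):=\sum_i\widehat f(\{i\})^2$, so $\rho\ge\tfrac14(\widehat f(\emptyset)^2+W_1(f))\ge\tfrac14 W_1(f)$; it thus suffices to argue that a balanced monotone function of total influence $o(\log n)$ cannot have level-$1$ weight $o(\sqrt{(\log n)/n})$. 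Intuitively this is because such an $f$ is ``effectively'' a junta on $2^{o(\log n)}=n^{o(1)}$ relevant coordinates (in the spirit of Friedgut's junta theorem), and a nearly balanced monotone function on $K$ coordinates has $W_1\ge\Inf^2/K\ge\mathrm{Var}^2/K=\Omega(1/K)$, which dominates $\sqrt{(\log n)/n}$ when $K=n^{o(1)}$. I expect this last step to be the main obstacle: making the influence/level-$1$-weight dichotomy cover \emph{every} monotone $f$, and pinning down the exact $\sqrt{\log n}$ factor, requires ruling out the window in which $\Inf(f)$ and $W_1(f)$ are simultaneously too small, and the naive quantitative form of the junta theorem is not by itself strong enough — so some care, or an alternative more combinatorial lower bound on the level-$1$ weight of low-influence monotone functions, is needed there.
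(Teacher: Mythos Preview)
Your core argument---expressing $\E_{\bx}[(f(\bx)-g(\bx))\cdot|\bx|]=\sum_i\widehat f(\{i\})=\Inf(f)$ and then comparing $f-g$ with the symmetric threshold $f_t$ via the Gaussian estimates (\ref{eqn:prop1})--(\ref{eqn:prop3})---is exactly the paper's argument, and that part is fine.

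The gap is in your case analysis. You correctly see that the comparison only yields $\rho=\Omega(\sqrt{(\log n)/n})$ once $\Inf(f)=\Omega(\log n)$, and you then try to treat ``balanced monotone $f$ with $\Inf(f)=o(\log n)$'' via Friedgut's junta theorem and a level-$1$ weight bound. As you yourself suspect, this route is shaky: Friedgut only gives $\epsilon$-closeness to a junta, not equality, and turning that into a pointwise lower bound on $W_1(f)$ of the right order requires exactly the kind of influence/level-$1$ dichotomy you are trying to avoid.

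The paper sidesteps this entirely with a much simpler case split that you are one observation away from. Note that for each $i$, $\rho\ge\tfrac12|\widehat f(\{i\})-\widehat g(\{i\})|=\tfrac12|\widehat f(\{i\})|$. So if \emph{any} single influence satisfies $\widehat f(\{i\})>n^{-0.49}$, you are already done (indeed with a much better bound). Otherwise $\max_i\Inf_i(f)\le n^{-0.49}$, and now Talagrand's strengthening of KKL (for any Boolean $f$, $\sum_i \Inf_i(f)/\log(1/\Inf_i(f))\ge c\,\Var[f]$) gives $\Inf(f)\ge c'\Var[f]\log n$; together with your first reduction $|\widehat f(\emptyset)|=O(\sqrt{(\log n)/n})$ (so $\Var[f]\ge 1/2$), this forces $\Inf(f)=\Omega(\log n)$, and your main argument finishes. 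In other words, there are \emph{no} nearly balanced monotone functions with all influences small and $\Inf(f)=o(\log n)$; the ``remaining case'' you were worried about is handled by the max-influence reduction plus KKL/Talagrand, not by the junta theorem.
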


\begin{proof}
If $\Var[f]<1/2$, then $\widehat{f}(\emptyset)^2 > 1/2$, and $\Pr[f\neq g] \geq \frac{1}{4} E[(f-g)^2] \geq 1/8$  for any balanced (hence $1$-resilient) Boolean function $g$. 
If $\widehat{f}(\{i\}) > n^{-0.49}$ for some $i$, then $f$ is $\Omega(n^{-0.49})$-far from every Boolean function $g$ where $\widehat{g}(\{i\}) = 0$. 

We assume $\Var[f] \geq 1/2$ and $\widehat{f}(\{i\}) \leq n^{-0.49}$ for all $i \in [n]$.  Since $f$ is monotone, $\Inf_i(f) \leq n^{-0.49}$ for all $i \in [n]$, and by (Talagrand's strengthening of) the KKL Theorem \cite{Talagrand:93,KahnKL:88}, $\Inf(f) \geq K \log n$ for some constant $K$, and $\sum_{i \in [n]} \widehat{f}(\{i\}) \geq K \log n$.  Let $g : \{-1,1\}^n \into \{-1,1\}$ be a $1$-resilient Boolean function; we will show that $\Pr_\bx[f(\bx) \neq g(\bx)] = \Omega(\sqrt{\frac{\log n}{n}})$.

Recall the functions $f_t$ defined earlier: 

\[
f_t(x) =
\begin{cases}
1 & \mathrm{if \;} |x| > t\sqrt{n} \\
0 & \mathrm{if \;} -t\sqrt{n} \leq |x| \leq t\sqrt{n} \\
-1 & \mathrm{if \;} |x| < -t\sqrt{n}
\end{cases}
\]

Select $t$ to be the largest $t$ such that $f_t$ satisfies $\Pr[f_t(\bx) \neq 0] \geq \Pr[(f - g)(\bx) \neq 0] = \Pr[f(\bx) \neq g(\bx)]$.  We then have $K \log n \leq \sum_{i \in [n]} \widehat{f-g}(\{i\}) \leq \sum_{i \in [n]} \widehat{f_t}(\{i\})$, where the second inequality holds because $f_t$ maximizes the sum of the linear coefficients for any function with support size $\Pr[f_t(\bx)\neq 0]$, and the support size of $f_t$ is at least the support size of $f-g$.

\ignore{
We use three properties appearing in \cite{OWimmer:09} that follow from error estimates for the Central Limit Theorem \cite{Feller}: for large enough $n$ and $\sqrt{\log n}/100 < t < n^{1/10}$, we have

\begin{align}
\phi(t)\sqrt{n}/3 & \leq \Inf(f_t) \leq 3\phi(t)\sqrt{n} \label{eqn:prop11} \\
\h\Pr_{\bx}[f_t(\bx) \neq 0] & =\E_{\bx}[f_t(\bx)\mathrm{Maj}(\bx)] \geq \phi(t)/(3t) \label{eqn:prop12} \\
\Pr[|\bx| = t] & \leq 4\phi(t)/\sqrt{n} \label{eqn:prop13}.
\end{align}

where $\phi$ is the probability density function of the standard Gaussian distribution: $\phi(u) = \frac{1}{2\pi} \exp(-u^2/2)$.}

Again, because $f_t$ is monotone, $\Inf(f_t) = \sum_{i \in [n]} \widehat{f_t}(\{i\})$.  Equation~\ref{eqn:prop1} implies that $(3K \log n)/\sqrt{n} \geq \phi(t) \geq (K \log n)/(3\sqrt{n})$, and it follows that $t \leq 4\sqrt{\log n}$.  From Equation~\ref{eqn:prop2}, we have $\Pr_{\bx}[f_t(\bx) \neq 0] \geq (4K/3)\sqrt{\frac{\log n}{n}}$.  By the choice of $t$, we have 
\begin{align*}
\Pr_{\bx}[f(\bx) \neq g(\bx)] & >  \Pr_\bx[f_{t+1}(\bx) \neq 0] \\
& \geq \Pr_\bx[f_t(\bx)\neq 0] - 2\Pr_\bx[|\bx| = t] \\ 
& \geq \frac{4K}{3}\sqrt{\frac{\log n}{n}} - 24K\frac{\log n}{n} = \Omega\left(\sqrt{\frac{\log n}{n}}\right),
\end{align*}

where the first inequality is an application of the union bound, and the second is an application of Equation~\ref{eqn:prop3}. 
\end{proof}

\ignore{
(Notes)\\
Let $\ell_x$ be the maximal run size for any $x\in \bits$. Let $R(x)$ be the set of bits occurring in some maximal run.  We have that $\CycleRun(x) \neq \CycleRun(x^{\oplus i})$ if and only if is contained in exactly one of $R(x)$ and $R(x^{\oplus i})$.  Thus we may obtain the desired influence bound by showing that $\E[ R(x)] \leq O(\log n)$.    

For a given $x$ and $\ell_x$, let $r_\ell$ be the number runs of length $\ell$. We can easily bound the contribution to the expectation when $\ell_x>\log n$ (Markov)  or when $\ell_x < \log n-100\log \log n$ (coupling/geometric distribution).  Let $u=100\log \log n$.

Then we want to bound 
\begin{align*}
\sum_{\log n-u \leq \ell \leq \log n} \sum_{r} \ell \cdot r \cdot \Pr_x[ \ell_x = \ell,r_\ell > r]  \leq & \\
\sum_{\log n-u \leq \ell \leq \log n} \ell \cdot \sum_{r} r \cdot \Pr_x[ r_\ell > r]  \leq & \\
\end{align*}
Again using the geometric sampling, we can use a bound of 
$$ \binom{n}{r} 2^{-\ell r} (1-2^{1-\ell})^{n-r}$$
for that probability (as long as we're summing over all $r$).
}

\end{document}